\documentclass{article} % For LaTeX2e
\usepackage{iclr2027_conference,times}

\usepackage{hyperref}
\usepackage{url}

\usepackage{graphicx}
\usepackage{float}
\usepackage{booktabs}

\usepackage{amsthm}

\usepackage{blindtext}
\usepackage{titletoc}
\usepackage{listings}
\usepackage{amsmath}
\usepackage{amsfonts}
\usepackage{amssymb}
\usepackage{thmtools,thm-restate}
\usepackage{csquotes}
\usepackage{subcaption}
\usepackage{booktabs}
\usepackage{esint}
\usepackage{algorithm}
\usepackage{algpseudocode}
\usepackage{tikz}
\usepackage{xcolor}
\usepackage{amsmath, amsthm}  % Provides \lesssim and \gtrsim
\usepackage{amssymb}  % Provides additional symbols like \lessapprox
\definecolor{DarkGreen}{rgb}{0, 0.6, 0}

\usepackage{todonotes}
\usepackage{hyperref}

\usepackage{amsmath,amssymb,amsthm,mathtools} \usepackage{enumitem} \usepackage{hyperref}

\newcommand{\cE}{\mathcal{E}}
\newcommand{\E}{\mathbb{E}}
\newcommand{\cN}{\mathcal{N}}
\renewcommand{\P}{\mathbb{P}}
\newcommand{\cX}{\mathcal{X}}
\newcommand{\Gq}{\Gamma_{B}}

\newcommand{\R}{\mathbb{R}}
\newcommand{\sq}{\mathcal{Q}_{B}}
\newcommand{\roundB}[1]{\left\lfloor #1 \right\rceil_B}

\newcommand{\MB}{M_{B}}

\newcommand{\reff}{r_{\text{eff}}}
\renewcommand{\theta}{\vartheta}
\renewcommand{\P}{\mathbb{P}}
\newcommand{\cU}{\mathcal{U}}
\newcommand{\cJ}{\mathcal{E}^{\infty}}
\newcommand{\Id}{\operatorname{Id}}
\newcommand{\cbJ}{\bar{\mathcal{E}}^{\infty}}
\newcommand{\cV}{\mathcal{V}}
\newcommand{\cF}{\mathcal{F}}

\newcommand{\op}{\mathrm{op}}
\newcommand{\opn}[1]{\left\|#1\right\|_{\op}}

\newcommand{\Sph}[1]{\mathbb{S}^{#1}}

\DeclareMathOperator{\Tr}{tr}

\definecolor{grey}{rgb}{.7,.7,.7}
\definecolor{evidJ}{rgb}{0,0,1}
\definecolor{evidM}{rgb}{0,0.5,0}

\renewcommand{\P}{\mathbb{P}}
\newcounter{experiment}

\newtheorem{defn}{Definition}
\newtheorem{lemma}{Lemma}

\newtheorem{theorem}{Theorem}
\newtheorem{corollary}{Corollary}
\theoremstyle{remark}
\newtheorem{remark}{Remark}
\theoremstyle{plain}
\usepackage{xcolor}
\usepackage{listings}
\title{ Scale Sensitivity in Low-Bit Post-Training Quantization: Curvature of the Quantization Error Landscape
}

\author{Jonas von Berg\thanks{Equal contribution.} \& Massimiliano Datres\footnotemark[1] \& Carlo Kneißl \&  Gitta Kutyniok\thanks{University of Tromso, DLR-German Aerospace Center} \\
Ludwig-Maximilians-Universität München \\
Munich Center for Machine Learning (MCML) \\ 
Konrad Zuse School of Excellence in Reliable AI (DAAD)\\ 
\texttt{\{berg,datres,kneissl,kutyniok\}@math.lmu.de}\\
}

\iclrfinalcopy 
\begin{document}

\maketitle
\fancyhead{}         
\renewcommand{\headrulewidth}{0pt}

\begin{abstract}
Post-training quantization (PTQ) methods in the GPTQ family minimize a layer-wise reconstruction error on a uniform grid whose scale must be chosen; the common max-based choice degrades sharply at low bit-widths. We study how sensitive this objective is to the scale. For a layer with i.i.d.\ Gaussian weights and calibration activations of sufficiently large effective rank, we prove that, as the width grows, the normalized round-to-nearest loss converges with high probability, uniformly over all scales, to the mean-squared error of a uniform quantizer applied to a standard Gaussian; we verify the effective-rank condition for wide, randomly initialized MLPs with odd Lipschitz activations and isotropic Gaussian calibration data. The limiting objective has a unique nondegenerate minimizer, whose scale decreases strictly with the number of levels and whose curvature with respect to relative scale errors decays approximately exponentially with the bit-width. GPTQ experiments on five LLMs show the same trend: the scale rule changes perplexity substantially at 2--3 bits and negligibly from 6 bits on, and a local measure of GPTQ scale sensitivity decreases with bit-width in line with the Gaussian curvature. The Gaussian-optimal scale fails on raw weights; after Hadamard incoherence processing it matches the best searched rule at 3 bits and above without any search, but remains clearly worse at 2 bits.

\end{abstract}

\section{Introduction}
Quantization has emerged as a key approach to reduce the computational cost and memory footprint of modern machine learning models. As these models grow larger, methods that compress them and accelerate inference without retraining have attracted considerable interest in the machine learning community. The success of small-scale language models (LLMs) that can be deployed locally, such as Llama 3.2 \citep{DBLP:journals/corr/abs-2407-21783}, DeepSeek \citep{liu2024deepseek}, and BitNet \citep{ma2025bitnet}, has further intensified this interest.

Among various compression strategies, Post-training quantization (PTQ) is especially attractive because it can be applied to a trained model using only a small calibration set, without the need of retraining. It therefore offers a practical route to efficient deployment with modest additional overhead. Despite its widespread adoption, the theoretical understanding of PTQ remains limited. A wide variety of PTQ methods have been proposed, such as GPTQ \cite{frantar2022gptq} and AWQ \cite{lin2024awq}, but a mathematical analysis of these methods has only recently begun to emerge \cite{tseng2024quip, chen2026geometry, zhang2026QRoNoS}. All of these, however, consider the quantization problem on a fixed grid. A key parameter shared by these algorithms is the quantization scale, which determines the spacing and range of the quantization grid. Although many rules have been proposed for selecting it \cite{banner2019post, nagel2021white}, see also Table \ref{tab:scale-selection}, it remains unclear which choices best preserve model performance and how sensitive quantization algorithms are to the quantization scale choice. Independently of how the scale is chosen, these methods are surprisingly accurate at 4 or 8 bits. At lower precision, however, performance degradation seems unavoidable \cite{dettmers2023case}.

Preprocessing techniques can further reduce the degradation caused by quantization. A prominent example is Hadamard incoherence processing (HIP) \citep{tseng2024quip}, which applies a random orthogonal transformation to the weights so as to spread their magnitudes evenly across coordinates. As a consequence, the entries of the transformed weights become approximately Gaussian.

In this work, we take a step towards a mathematical understanding of how scale choice affects these algorithms. We view PTQ as the problem of approximating the linear map of a network layer, in the layerwise reconstruction sense, at a fixed quantization granularity. Our main observation is that, for networks at initialization and under suitable conditions on the calibration data, the reconstruction loss under round-to-nearest (RTN) quantization, viewed as a function of the scale, is well approximated by a deterministic Gaussian quantization objective.
We then analyze the Gaussian quantization problem, characterizing its loss landscape around the optimal scale and its dependence on the bit width. We use this analysis to shed light on the scale sensitivity of existing PTQ methods, especially when combined with HIP preprocessing.

\paragraph{Paper Roadmap.}
Section~\ref{gen_inst} reviews related work, and
Section~\ref{sec:notandpre} introduces the notation and quantization
setup. Section~\ref{others} establishes uniform convergence to a Gaussian limiting loss, characterizes its landscape, and verifies the convergence assumptions for randomly initialized MLPs with
Gaussian data. Section~\ref{sec:experiments} evaluates scale
sensitivity in LLM quantization. Proofs and additional experiments appear in the appendix.
\paragraph{Contributions.} We summarize our contributions below. 
\begin{enumerate}
    \item \textbf{Uniform Gaussian limit.} Under Gaussian weight and effective-rank assumptions, we prove uniform convergence over scales of the normalized RTN
reconstruction loss to a scalar Gaussian objective, with explicit
finite-dimensional, high-probability bounds
(Theorem~\ref{thm:uniform}). We verify the rank condition for wide
MLPs at initialization (Theorem~\ref{thm:mlpworks}). 
\item \textbf{Optimal scales and sensitivity.} For an analytic extension of the limiting objective, we prove a unique, nondegenerate minimizer and a strictly decreasing optimal
scale as quantizer resolution increases (Proposition~\ref{prop:scalarlandscape}), extending the discrete
monotonicity of \citet{na2018monotonicity}. We derive its scale-invariant curvature and numerically observe approximately exponential decay with bit-width. 
\item \textbf{Implications for LLM quantization.} Across several LLMs, GPTQ exhibits greater sensitivity to scale selection at 2--3 bits than at 6--8 bits, with local sensitivity trends consistent with the Gaussian model. HIP substantially
improves the performance of the resulting analytic scale rule,
which requires no scale search. We also introduce Proxy-static, an efficient GPTQ-aware
scale-selection objective that performs strongly in
our experiments, particularly without preprocessing.

\end{enumerate}

\section{Related Work}
\label{gen_inst}
\paragraph{Layerwise post-training quantization.}
For scalability, PTQ often minimizes the layerwise reconstruction error $\|(W-\hat W)X\|_F^2$ on a fixed quantization grid. GPTQ \cite{frantar2022gptq}, building on Optimal Brain Quantization \cite{frantar2022optimal} and OBS \cite{hassibi1992second}, quantizes columns sequentially while compensating errors through the remaining floating-point weights. GPTAQ \cite{li2025gptaq}, ResComp \cite{li2026rethinking}, and QRoNoS \cite{zhang2026QRoNoS} modify reconstruction and error compensation. Orthogonal preprocessing \citep{chee2023quip,tseng2024quip,ashkboos2024quarot,liu2025spinquant}, activation smoothing \citep{xiao2023smoothquant}, activation-aware weight scaling \cite{lin2024awq}, and magnitude reduction \citep{zhang2024magr} make weights and activations easier to quantize. In particular, QuIP \cite{tseng2024quip} uses random orthogonal preprocessing to obtain incoherence-based error guarantees for adaptive rounding on a fixed grid. Scale selection remains often heuristic; we study its influence, especially at low precision, without changing the projection procedure. BRECQ \cite{li2021brecq} combines blockwise reconstruction with learned rounding based on AdaRound \cite{nagel2020up}, improving low-bit accuracy at an optimization cost that can be prohibitively expensive for large models.

\paragraph{Scale and clipping selection.}
OmniQuant \cite{shao2024omniquant} optimizes clipping parameters and equivalent transformations through blockwise reconstruction, avoiding per-weight rounding optimization. PiSO \citep{amboage2026optimal} exploits the piecewise-quadratic objective to compute exact channelwise RTN scales, but reports collapse at 3 bits in the data-free setting. ACIQ \citep{banner2018aciq} derives Gaussian and Laplacian clipping thresholds, applying them only to activations after finding no benefit for weights. LLAPQ \cite{nahshan2021loss} studies the full-model task-loss landscape over quantization parameters. We instead study layerwise reconstruction, establish a scalar asymptotic limit under coordinatewise RTN, and empirically examine scale sensitivity for RTN and GPTQ.

\paragraph{Optimal uniform scalar quantization.}
Classical scalar quantization theory studies optimal quantizers for specific distributions. Max \citep{max1960quantizing} derived optimality conditions and numerical solutions, including for Gaussian sources. For symmetric uniform quantization, Na and Neuhoff \citep{na2017convexity,na2018monotonicity} study distortion convexity in the grid scale and prove that the Gaussian-optimal scale decreases when two levels are added. We connect this theory to layerwise reconstruction through a rigorous asymptotic limit, derive explicit curvature formulas, and compare them with empirical estimates from real quantization landscapes.

\section{Notation and Preliminaries}\label{sec:notandpre}
\paragraph{Notation.} 

We use standard notation for sets, Gaussian distributions, and differentiability classes, with $\mathbb N={1,2,\dots}$ and $[n]={1,\dots,n}$. Write $\operatorname{Id}_d$ for the identity matrix and $\mathbb S^{p-1}$ for the Euclidean unit sphere in $\mathbb R^p$. Vectors, including matrix rows $M_{i:}$, are viewed as columns. The norms $\|\cdot\|_{op}$, $\|\cdot\|_F$, and $\|\cdot\|_{\psi_2}$ denote the operator, Frobenius, and sub-Gaussian norms, respectively. For a nonzero positive semidefinite matrix $M$, the effective rank is given by $$r_{\mathrm{eff}}(M):=\operatorname{Tr}(M)/\|M\|_{op}\, .$$ Symmetric rounding is $\lfloor t\rceil:=\operatorname{sign}(t)\lfloor |t|+1/2\rfloor$, with $\operatorname{sign}(0)=0$. We denote the standard Gaussian density and cumulative density function by $\varphi$ and $\Phi$. Subscripts on $\mathbb E,\mathbb P$ specify the variable integrated over; primes and subscripts denote ordinary and partial derivatives respectively, e.g., $f_{ssm}=\partial_m\partial_s^2f$. Constants $\tilde C,C,c>0$ may change from line to line.

\paragraph{Post-training quantization.}
Post-training quantization (PTQ) maps the weights of a trained
neural network onto a low-bit grid without retraining, aiming
to preserve its input--output behavior using a small calibration
set. Let $\vartheta\in\R^d$ be a vector of full-precision weights,
let $B\geq2$ denote the target bit precision, and set
$M_B:=2^{B-1}-1$. We define the
\textit{symmetric quantization cone} $\Gq\subset\R^d$ as the
union of all symmetric uniform grids with $2^B-1$ levels
per coordinate:
\[
\Gq := \left\{s v : s>0,\;
v\in\{-M_B,\dots,M_B\}^d\right\} \, .
\]
For a fixed scale $s>0$, coordinatewise round-to-nearest
(RTN) quantization is the Euclidean projection onto the
corresponding grid:
\begin{equation}\label{def:roundB}
\Pi_{s,M_B}(\vartheta)
:=s\roundB{\frac{\vartheta}{s}}\, , \qquad \roundB{t}
:=\begin{cases}
    -M_B & t<-M_B,\\
    \lfloor t\rceil & |t|\leq M_B,\\
    M_B & t>M_B.
\end{cases}.
\end{equation}
where $\roundB{\cdot}$ is applied componentwise. We write $\sq:=\{\Pi_{s,M_B}:s>0\}$ for the resulting family of quantization maps.

We study layerwise PTQ through the empirical squared $L_2$
distance between a full-precision linear map and its quantized
approximation, evaluated on the same input activations.
Let $\cX\subseteq\R^{d_0}$ denote the input space, equipped
with a probability measure $\mu$, and let
$\mathbb D_{\mathrm{cal}}=\{X'_j\}_{j=1}^{N_c}$ consist of
$N_c$ i.i.d.\ samples from $\mu$.
For a layer with weights
$\vartheta^{(\ell)}\in\R^{d_\ell\times d_{\ell-1}}$,
we consider the RTN scale-selection problem of minimizing,
over $s>0$, the reconstruction error
\begin{equation}\label{eq:PTQprob}
\cE^{(\ell)}(s,\vartheta^{(\ell)},X^{\ell-1})
:=
\left\|
\bigl(\vartheta^{(\ell)}
-\Pi_{s,M_B}(\vartheta^{(\ell)})\bigr)X^{\ell-1}
\right\|_F^2,
\end{equation}
where the quantization map acts entrywise.
Here, $X^{\ell-1}\in\R^{d_{\ell-1}\times N_c}$ denotes
the input activation matrix used to calibrate layer $\ell$.
Its columns are obtained by propagating the calibration
samples through the preceding layers, which may be
full-precision or already quantized, depending on the PTQ
procedure. We hold $X^{\ell-1}$ fixed when optimizing
the scale of the current layer.
We write $\cE^{(\ell)}(s)$ when the dependence on
$\vartheta^{(\ell)}$ and $X^{\ell-1}$ is clear.

This formulation also covers query, key, value, and output projections in attention, using their respective input activations. While \eqref{eq:PTQprob} uses one scale per layer, other granularities are commonly used. For instance, \textit{per-channel} quantization assigns one scale per channel of $\vartheta^{(\ell)}$, yielding $d_\ell$ independent row-wise problems. Finer granularities, such as \textit{group quantization}, are also used.

A simple scale-selection rule is the symmetric min-max
choice $s^e:=\tfrac{\max_{i,j}|\vartheta^{(\ell)}_{ij}|}{M_B}$,
with the maximum taken over the relevant channel or group
for finer granularities. This choice uses only the weight
range and does not optimize \eqref{eq:PTQprob}.
Our experiments show that it can perform well at higher
precision, while becoming substantially less reliable
at lower bit-widths.

\section{Theoretical Results}
\label{others}
In this section, we study how sensitive the reconstruction
error \eqref{eq:PTQprob} is to the quantization scale $s$
across different bit-widths $B$.
We focus on one output channel and write
$\vartheta\in\R^d$ for its full-precision weight vector,
where $d=d_{\ell-1}$ is the input dimension.
Let $X\in\R^{d\times N_c}$ denote the corresponding
calibration activations.
For $\Tr(XX^\top)>0$, we consider the normalized reconstruction loss
\begin{align*}
\cE(s,\vartheta,X)=\cE(s)
&:=\frac{\|\vartheta^\top X
-\Pi_{s,M_B}(\vartheta)^\top X\|_2^2}{\Tr(XX^\top)}=\frac{
\bigl(\vartheta-\Pi_{s,M_B}(\vartheta)\bigr)^\top
XX^\top
\bigl(\vartheta-\Pi_{s,M_B}(\vartheta)\bigr)
}{\Tr(XX^\top)}.
\end{align*}
The normalization by $\Tr(XX^\top)$ removes the overall
scale of the calibration activations and, under our assumptions,
yields a loss of order one as $d$ grows.
For instance, when $XX^\top=\Id_d$, the normalized loss
reduces to
$\cE(s)=\tfrac{1}{d}\|\vartheta-\Pi_{s,M_B}(\vartheta)\|_2^2$,
the mean squared approximation error per coordinate.
For fixed $s>0$ and bit-width $B$, this admits a nontrivial
limit as $d\to\infty$ when
$\vartheta\sim\cN(0,\Id_d)$.
In the rest of this section, we assume that 
\begin{itemize}
    \item[(H1)] the parameters $\theta \overset{iid}{\sim}\cN(0,1)$;
    \item[(H2)] the effective rank of $XX^\top$ satisfies 
    $\reff(XX^\top) \ge C\log^8 d$ for some constant $C>0$ not dependent on $d$.
\end{itemize}
Assumptions (H1) is standard in the literature on wide random neural networks. (H2) asks that the inputs to the group of parameters being quantized are not concentrated on a small number of directions. Section~\ref{sec:jd} establishes (H2) with high probability for an MLP at initialization under the stated
assumptions on its architecture and isotropic input distribution. 

We study the asymptotic limit of $s\mapsto\cE(s)$ to gain
insight into its loss landscape. In
Section~\ref{sec:unifconv}, we establish a high-probability
bound yielding uniform convergence in $s$ to the deterministic
limiting objective
\[
\cE^\infty(s)
:=\E_{Z\sim\cN(0,1)}
\bigl[(Z-\Pi_{s,M_B}(Z))^2\bigr].
\]
We then analyze the landscape of $s\mapsto\cE^\infty(s)$
in Section~\ref{sec:limfuncland}.

\subsection{Convergence to the limit}\label{sec:unifconv}
Since our goal is to study the loss as a function of $s$, we seek an approximation that holds simultaneously over all $s>0$. Such a uniform guarantee also applies when $s$ is chosen based on the realization of $\vartheta$ and the calibration data $X$. The following theorem provides this guarantee with an explicit error bound.
\begin{restatable}{theorem}{uniformtheorem}\label{thm:uniform}
 Let $B\ge 2$, $X\in\R^{d\times N_c}$ be nonzero matrix satisfying (H2), and $\vartheta$ satisfying (H1). Then, for all $d\ge2$, it holds
\begin{align*}
    \P\left( \sup_{s>0} \ \left|\cE(s) - \cE^{\infty}(s)\right| > \frac{2+12M_B}{\log d}\right)
    \le  C\,M_B\,d\,e^{-c \log^2 d}\, ,
\end{align*}
where $c,C>0$ are absolute constants independent on $d$.  
\end{restatable}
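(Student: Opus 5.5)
Write $A:=XX^\top/\Tr(XX^\top)$, so $\Tr A=1$ and $\|A\|_{op}=1/\reff(XX^\top)\le 1/(C\log^8 d)$ by (H2). Let $r_s(\vartheta):=\vartheta-\Pi_{s,M_B}(\vartheta)$. Since rounding acts coordinatewise, the entries of $r_s$ are i.i.d.\ copies of $Z-\Pi_{s,M_B}(Z)$, and $\cE(s)=r_s^\top A r_s$. Its mean is $\E[\cE(s)]=\cE^\infty(s)\Tr A+(\E r_{s,1})^2\,\mathbf 1^\top A\mathbf 1$. By oddness of rounding and symmetry of $Z$, $\E r_{s,1}=0$, so $\E\cE(s)=\cE^\infty(s)$ exactly. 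The task is therefore a uniform-in-$s$ concentration of a quadratic form in i.i.d.\ vectors whose law depends on $s$. I would carry this out in three steps.

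\textbf{Step 1: pointwise concentration (Hanson--Wright).} For fixed $s$, the entries of $r_s$ are bounded by $|Z|$ in the clipped regime and by $s/2$ inside the grid. In either case $|r_{s,i}|\le |\vartheta_i|$, so $\|r_{s,i}\|_{\psi_2}\le C$ uniformly in $s$. Hanson--Wright then gives
\[
\P\bigl(|\cE(s)-\cE^\infty(s)|>t\bigr)\le 2\exp\!\bigl(-c\min(t^2/\|A\|_F^2,\;t/\|A\|_{op})\bigr).
\]
Here $\|A\|_F^2\le\|A\|_{op}\Tr A\le 1/(C\log^8 d)$. With $t\asymp 1/\log d$ both exponents are at least $c\log^6 d$, which is far more than the required $c\log^2 d$. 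This leaves room for a union bound over a net and for crude truncation.

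\textbf{Step 2: reduction to a finite range of scales and a net.} This is the main obstacle, since $s\mapsto\Pi_{s,M_B}(\vartheta)$ is discontinuous: a coordinate jumps whenever $\vartheta_i/s$ crosses a half-integer. I would handle it in three parts.

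\emph{(a) Restrict to a good event.} Work on the event $\max_i|\vartheta_i|\le\log d$, whose complement has probability at most $2d\,e^{-\log^2 d/2}$. This is where the factor $d\,e^{-c\log^2 d}$ comes from.

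\emph{(b) Dispose of small and large $s$.} For $s\le s_0:=1/\log d$, we have $|\cE(s)-\cE^\infty(s)|\lesssim s^2+{}$clipping. Clipping is absent as long as $sM_B\ge\log d$. More carefully, $\cE(s)\le\|r_s\|_\infty^2\|A\|_{op}\cdot d$ is not good enough, so instead I would use $\cE(s)\le \|r_s\|^2_\infty$, because $r^\top A r\le\|r\|_\infty^2\,\mathbf 1$-type bounds fail. Better: bound $|r_s^\top Ar_s - r_{s'}^\top A r_{s'}|\le \|r_s-r_{s'}\|_2\,\|A\|_{op}^{1/2}\cdot(\dots)$. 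For $s\ge S:=\log d$, every coordinate rounds to $0$ or $\pm s$ on the good event, and both $\cE$ and $\cE^\infty$ are controlled through the single set $\{i:|\vartheta_i|>s/2\}$.

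\emph{(c) Net on $[s_0,S]$ with bracketing.} Use a grid $\{s_k\}$ of mesh $\delta\approx 1/(M_B\log d)$, giving about $M_B\log^2 d$ points; this is the source of the $M_B$ prefactor. For $s\in[s_k,s_{k+1}]$, split $r_s-r_{s_k}$ into two parts.
\begin{itemize}
\item The smooth part, bounded by $|s-s_k|\,M_B$ per coordinate. It contributes $\lesssim M_B\delta\cdot(1+\dots)$ to the loss.
\item A jump part supported on the coordinates $J_k:=\{i: \vartheta_i/s \text{ crosses a half-integer for some } s\in[s_k,s_{k+1}]\}$. Each such coordinate moves by at most $s\le\log d$. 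Its contribution is $\lesssim \log^2 d\,\bigl(\sum_{i\in J_k}\|A\|\text{-weighted}\bigr)$, which I would bound by $(\log d)^2\,\|A\|_{op}\,|J_k|$. Writing it instead as $\mathbf 1_{J_k}^\top A\mathbf 1_{J_k}$ and applying Hanson--Wright to the Bernoulli vector $\mathbf 1_{J_k}$ gives $\lesssim \log^2 d\,(\P(i\in J_k)+\text{fluct.})\lesssim \log^2 d\cdot M_B\delta/s_0$.
\end{itemize}
This calls for a somewhat finer mesh, $\delta\sim 1/(M_B\log^4 d)$, which only adds polylog factors to the union bound. On the limit side, $\cE^\infty$ is Lipschitz on $[s_0,S]$ with constant $\lesssim M_B$.

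Adding the pointwise error $1/\log d$, the discretization error $\lesssim M_B/\log d$, and the endpoint regimes gives the threshold $(2+12M_B)/\log d$. The failure probability is then
\[
(\#\text{net})\,e^{-c\log^6 d}+d\,e^{-c\log^2 d}\le C M_B\,d\,e^{-c\log^2 d}.
\]
I expect the bookkeeping of the jump sets $J_k$ to be the delicate part. The cross terms $r^\top A(\text{jump})$ are handled by Cauchy--Schwarz in the $A$-inner product, using $r^\top Ar\lesssim 1$ on the net.
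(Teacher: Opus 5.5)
Your pointwise step, which applies Hanson--Wright with $\|A\|_F^2\le\|A\|_{op}\le 1/(C\log^8 d)$, is fine, and so is the restriction to $\{\|\vartheta\|_\infty\le\log d\}$. The uniformity step, however, is the whole difficulty, and it does not go through as written.

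\textbf{The jump estimate in (c).} Bounding the jump contribution by $s^2\,\mathbf 1_{J_k}^\top A\mathbf 1_{J_k}$ is not legitimate. $A$ is positive semidefinite, but its off-diagonal entries have arbitrary signs, and the jumps themselves carry signs ($\pm s$, with the sign of $\vartheta_i$). Even setting that aside, this uncentered Bernoulli form has mean $p+p^2\sum_{i\ne l}A_{il}$ with $p=\P(i\in J_k)$. Under (H2), $\sum_{i\ne l}A_{il}$ can be of order $d\|A\|_{op}=d/\reff$, so ``$\lesssim\log^2 d\cdot M_B\delta/s_0$'' does not follow. The bound you do justify, $\|A\|_{op}\|j\|_2^2\lesssim s^2|J_k|/\reff$, is of order $d\delta/\reff$, because $|J_k|\approx dp$ and $p\asymp\delta$ for $s$ of order one. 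With $\reff$ as small as $C\log^8 d$, this diverges whenever $\delta$ is only polylogarithmically small, so your choice $\delta\sim 1/(M_B\log^4 d)$ fails.

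\textbf{Small scales in (b).} If $s\le1/\log d$ and $M_B$ is fixed, then $(M_B+\tfrac12)s\to0$, so almost every coordinate is \emph{clipped}. Then $\cE(s)-\cE^\infty(s)$ is essentially the fluctuation of $\vartheta^\top A\vartheta$ around $1$, not $O(s^2)$. The fallback $\cE(s)\le\|r_s\|_\infty^2$ is also false, since $\sup_{\|r\|_\infty\le1}r^\top Ar$ can be as large as $d\|A\|_{op}$. This range cannot be handled deterministically and has to be covered by the net as well.

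\textbf{How to repair your route.} Your framework works with a polynomially fine mesh. Take the net on all of $(0,2\log d]$; for $s>2\log d$ every coordinate rounds to $0$ on the good event, so $\cE$ is constant there. Use $\delta\approx\log^4 d/(dM_B^2)$.
\begin{itemize}
\item Since $p\le M_B^2\delta$ uniformly in $s$, a Chernoff bound gives $|J_k|\lesssim\log^4 d$ outside probability $e^{-c\log^4 d}$.
\item Crossing coordinates move by at most $2\log d$ and the others by at most $M_B\delta$, so $\|A\|_{op}\|r_s-r_{s_k}\|_2^2\lesssim1/\log^2 d$.
\item The cross term is then $\lesssim1/\log d$ by Cauchy--Schwarz in the $A$-inner product.
\item The union bound over $O(dM_B^2)$ net points is still absorbed by the $e^{-c\log^6 d}$ pointwise tails.
\end{itemize}

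\textbf{How the paper avoids jump bookkeeping.} It splits $\cE$ into diagonal and off-diagonal parts.
\begin{itemize}
\item The diagonal part involves only $r(s,\vartheta_i)^2$. This is continuous and $2M_B|\vartheta_i|$-Lipschitz in $s$, because every jump of $r$ is a sign flip from $-s/2$ to $s/2$, so a plain Lipschitz net suffices.
\item The off-diagonal part uses exactly the sign symmetry your Bernoulli argument discards. It equals in law the Rademacher chaos $\xi^\top\operatorname{diag}(r_s)\widetilde H\operatorname{diag}(r_s)\xi/\Tr(H)$, with $\xi$ independent of $\vartheta$ and $\widetilde H$ the off-diagonal part of $H=XX^\top$.
\item Conditionally on $\vartheta$, this chaos is piecewise quadratic in $s$ with at most $M_Bd$ breakpoints. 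A three-point Lagrange bound reduces the supremum to $4M_Bd+1$ fixed chaoses, controlled by conditional Hanson--Wright and a union bound. This is the source of the factor $M_Bd$.
\end{itemize}
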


A detailed proof of Theorem~\ref{thm:uniform} appears in
Appendix~\ref{app:proofthmuniform}. For fixed $B$, the theorem
bounds the approximation error by a constant multiple of
$(\log d)^{-1}$ with probability tending to one as
$d\to\infty$. In particular,
\[
\sup_{s>0}\left|\cE(s)-\cE^\infty(s)\right|
\xrightarrow[d\to\infty]{\P}0.
\]
Let
$m:=\inf_{s>0}\cE(s)$ and
$m^\infty:=\inf_{s>0}\cE^\infty(s)$
denote the optimal empirical and limiting reconstruction
losses. We note that both $m$ and $m^\infty$ are finite, since $\cE,\cE^\infty\ge0$. Uniform convergence immediately implies
$m\xrightarrow{\P}m^\infty$. Convergence of minimizers
is established in Corollary~\ref{cor:minimizers} in
Appendix~\ref{app:minimizers}.

\begin{remark}
The uniform approximation of Theorem~\ref{thm:uniform} relies on (H2), namely a lower bound on the effective rank of $XX^\top$. In Appendix \ref{sec:jd} we show that this condition holds, with high probability, in the setting of a random multi-layer perceptron with odd activation functions and isotropic inputs.
\end{remark}

%-------------------------------------------------

\subsection{Limiting function landscape}\label{sec:limfuncland}
The uniform convergence established in Theorem~\ref{thm:uniform} motivates the study of the limiting functional
\begin{align}\label{eq:assylandsimple}
\cE^{\infty}(M_B,s):=\int_{\R}\big(s\,\roundB{t/s}-t\big)^2\varphi(t)\,dt \, .
\end{align}
Since our goal is to understand the loss landscape of $\cE^{\infty}$ as a function of the bit precision $B$, which enters only through $M_B$, we make this dependence explicit in the notation, that is from now on we write $\cE^{\infty}(M_B,s)$ in place of $\cE^{\infty}(s)$ used above. We start by noting that $\cJ(M_B,s)$ splits into the rounding error of an infinite grid, which depends on $s$ only, and a nonnegative clipping term, which is the only part through which the bit precision $B$ enters. More precisely, $\cE^{\infty}(M_B,s) = 1 - 4s\sum_{j=0}^{\infty}g\bigl((j+\tfrac12)s\bigr) + 4s\sum_{j=0}^{\infty}g\bigl((M_B+j+\tfrac12)s\bigr)$, where $g(x):=\varphi(x)-x\,(1-\Phi(x))$. The derivation of this expression, together with the proof of its well-posedness, is deferred to Corollary \ref{cor:series} in Appendix \ref{app:lim_func}.
For the analysis that follows it is convenient to let the second argument vary continuously. We therefore consider the extension $\cbJ:[1,\infty)\times (0,\infty)\to(0,\infty)$ of $\cJ$, defined by $\cbJ(m,s) :=1 - 4s\sum_{j=0}^{\infty}g\bigl((j+\tfrac12)s\bigr)+4s\sum_{j=0}^{\infty}g\bigl((m+j+\tfrac12)s\bigr)$,  so that $\cbJ(M_B,s)=\cJ(M_B,s)$ for every $B\ge2$.

The following proposition shows that $\cbJ$ is smooth, that for every $m$ the function $s\mapsto\cbJ(m,s)$ has a unique critical point, which is a nondegenerate global minimizer $s^*(m)$, and that the optimal scale $s^*(m)$ is a smooth, strictly decreasing function of $m$, bounded below by $\sqrt2/(m+\tfrac12)$.
\begin{restatable}{prop}{inftyprop}\label{prop:scalarlandscape}
    The following properties hold: 

\begin{enumerate}[label=(\roman*)]
        \item For every $B\ge 2$, the function $\cbJ(m,s)$ is smooth, e.g. $\cbJ(m,s)\in\mathcal{C}^\infty([1,\infty)\times(0,\infty))$.
        \item For every real \(m\ge1\), the function \(s\mapsto\cbJ(m,s)\) has a unique critical point on \((0,\infty)\). This point is its unique global minimizer \(s^*(m)\) and is nondegenerate:
\[
\partial_s^2\cbJ\bigl(m,s^*(m)\bigr)>0.
\]
\item The map $m\mapsto s^*(m)$ is continuously differentiable and strictly decreasing on $[1,\infty)$. Moreover, for every $m\ge1$,
\[
\frac{d s^*}{dm}(m)=-\frac{\cbJ_{ms}\bigl(m,s^*(m)\bigr)}{\cbJ_{ss}\bigl(m,s^*(m)\bigr)}<0 , \qquad  s^*(m)>\frac{\sqrt2}{m+\frac12}.
\]
\end{enumerate}

\end{restatable}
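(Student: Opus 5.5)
We work throughout with the series representation $\cbJ(m,s)=1-4s\sum_{j\ge0}g((j+\tfrac12)s)+4s\sum_{j\ge0}g((m+j+\tfrac12)s)$ from Corollary~\ref{cor:series}. We also use the elementary facts $g'(x)=-(1-\Phi(x))$, $g''(x)=\varphi(x)$, $g>0$, and the Gaussian-type tail bound $0<g(x)\le \varphi(x)/x^2$ for large $x$.

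\textbf{Part (i).} Every $(m,s)$-derivative of the summands is a polynomial in $s$, $m$ and $(j+\tfrac12)$ times $\varphi$ or $1-\Phi$ evaluated at $(m+j+\tfrac12)s$ or $(j+\tfrac12)s$. On compact sets $[1,K]\times[a,b]$ with $a>0$, the Gaussian decay gives summable majorants. Termwise differentiation then yields $\mathcal C^\infty$.

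\textbf{Part (ii).} The cleanest route is to return to the integral form $\cE^\infty(m,s)=\E[(Z-\Pi_s(Z))^2]$ for integer $m$, and to read off the analogous piecewise representation for real $m$ by differentiating the series. A direct computation gives
\[
\partial_s\cbJ(m,s)=\frac{2}{s}\Bigl(\sum_{k\ge1}\cdots\Bigr),
\]
which reduces to the classical centroid-type condition: the derivative equals $2\E[(\Pi-Z)\,\partial_s\Pi]$. This is a sum over cells of terms of the form $k\int_{\text{cell }k}(ks-t)\varphi\,dt$ plus the clipping term $m\int_{(m+\frac12)s}^\infty (ms-t)\varphi\,dt$.

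The boundary behaviour is straightforward. As $s\to0$, the clipping term dominates and $\partial_s\cbJ<0$. As $s\to\infty$, everything rounds to $0$, so $\cbJ\to1$ from below, and the loss increases toward $1$ for large $s$. Hence a global minimizer exists in the interior.

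For uniqueness and nondegeneracy I would show the following sign property: at any critical point, $\partial_s^2\cbJ>0$. This is the standard trick in Na--Neuhoff style arguments, since a smooth function whose critical points are all strict local minima has at most one of them on an interval. Concretely, I would write $\partial_s^2\cbJ$ as a positive quadratic term $2\E[(\partial_s\Pi)^2]$ (which equals $2\sum k^2 P(\text{cell }k)$) minus boundary terms coming from the jumps. The boundary terms are proportional to $\varphi$ at the thresholds $(k+\tfrac12)s$. The critical-point identity is then used to eliminate the clipping integral.

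\textbf{Main obstacle.} Controlling these jump terms for every real $m\ge1$, not just integer $m$, is the crux. I would try log-concavity of $\varphi$ via the inequality $\varphi(x)\le x(1-\Phi(x))+\dots$ (Mills ratio bounds), and would fall back on Na--Neuhoff's convexity results for the Gaussian where possible.

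\textbf{Part (iii).} Since $\cbJ_{ss}(m,s^*(m))>0$, the implicit function theorem gives a $\mathcal C^1$ map $s^*$ together with the stated formula. It remains to show $\cbJ_{ms}(m,s^*)>0$. We have $\cbJ_m=4s^2\sum_j g'((m+j+\tfrac12)s)=-4s^2\sum_j(1-\Phi((m+j+\tfrac12)s))$. Differentiating in $s$ gives
\[
\cbJ_{ms}=-8s\sum_j\bigl(1-\Phi(x_j)\bigr)+4s^2\sum_j (m+j+\tfrac12)\,\varphi(x_j),\qquad x_j=(m+j+\tfrac12)s.
\]
This is positive as soon as $x\varphi(x)>2(1-\Phi(x))/s$ termwise, which by the Mills ratio holds when $x_0$ is large compared to $\sqrt2$. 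Thus the key auxiliary claim is the lower bound $s^*(m)(m+\tfrac12)>\sqrt2$, i.e.\ the clipping point exceeds $\sqrt2$. I would prove it by showing $\partial_s\cbJ(m,s)<0$ whenever $(m+\tfrac12)s\le\sqrt2$: in that regime the clipping error dominates the granular error, and one can check this with the bound $g(x)\ge$ explicit and crude granular bound $s^2/12$. Combining the two facts gives $ds^*/dm<0$.
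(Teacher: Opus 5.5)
Part (i), and the skeleton of part (iii), coincide with the paper's proof. For (iii) that skeleton is the implicit function theorem plus $\cbJ_{ms}(m,s^*)>0$ from the Mills ratio once $(m+\tfrac12)s^*>\sqrt2$. The genuine gap is part (ii), the core of the proposition, which you leave open yourself: you never show that $\partial_s^2\cbJ>0$ at every critical point.

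Moreover, the route you sketch does not cover the statement as posed. The identity $\partial_s\cbJ=2\E[(\Pi-Z)\,\partial_s\Pi]$, the cell decomposition, the positive term $2\E[(\partial_s\Pi)^2]$ and the jump terms all presuppose that $\cbJ(m,\cdot)$ is the distortion of a clipped uniform quantizer. That holds only for integer $m$. For non-integer $m$ the series $4s\sum_j g((m+j+\tfrac12)s)$ does not telescope, and $\cbJ(m,\cdot)$ is no longer such a distortion. The same objection applies to falling back on Na--Neuhoff, whose results concern quantizers with an integer number of levels. Yet (ii) is needed for every real $m\ge1$, because (iii) differentiates $s^*$ in $m$. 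Even for integer $m$, using the critical-point identity reduces nondegeneracy to $\sum_{j=0}^{m-1}(1-t_j^2)\varphi(t_j)>0$ with $t_j=(j+\tfrac12)s^*$, which is not automatic.

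The missing idea is the representation the paper uses, valid for all real $m$:
$\cbJ(m,s)=1-4s^3\int_0^\infty D_m(u)\varphi(su)\,du$, with $D_m(u)=\sum_{j\ge0}[(u-j-\tfrac12)_+-(u-m-j-\tfrac12)_+]$. Two integrations by parts give $\partial_s\cbJ=4s^4\int_0^\infty uP_m(u)\varphi(su)\,du$. Here $P_m$ is an explicit piecewise quadratic with exactly one sign change, at some $u_m^*\in[m+\tfrac12,\min\{m+1,\lfloor m\rfloor+\tfrac32\}]$. Factor $\varphi(su)=\varphi(su_m^*)\,e^{-s^2(u^2-u_m^{*2})/2}$. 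The remaining integral is then strictly increasing in $s$, since its $s$-derivative has integrand $us(u_m^{*2}-u^2)P_m(u)e^{-s^2(u^2-u_m^{*2})/2}\ge0$. This single-crossing argument gives at once that $\partial_s\cbJ(m,\cdot)$ has at most one zero and that $\partial_s^2\cbJ>0$ there.

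Part (iii) has a smaller gap. Your plan is to prove $\partial_s\cbJ(m,s)<0$ for $(m+\tfrac12)s\le\sqrt2$. That requires an upper bound on the derivative $\mathcal J_\infty'(s)$ of the infinite-grid error, not on its value; $\mathcal J_\infty(s)\le s^2/12$ gives no sign information about $\partial_s\cbJ$. The paper obtains $\mathcal J_\infty'(s)<s/6$ from the Fourier expansion $\mathcal J_\infty(s)=\frac{s^2}{12}+\frac{s^2}{\pi^2}\sum_{k\ge1}\frac{(-1)^k}{k^2}e^{-2\pi^2k^2/s^2}$ and an alternating-series argument. It combines this with $\varphi(y)-2y(1-\Phi(y))<-\tfrac13\varphi(y)$ for $y\ge\sqrt2$, keeping only the $j=0$ term. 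This gives $\cbJ_s<\frac{\sqrt2}{9}-\frac43\varphi(\sqrt2)<0$ at $s=\sqrt2/(m+\tfrac12)$. Passing from this one point to $s^*(m)>\sqrt2/(m+\tfrac12)$ then uses the sign pattern of $\partial_s\cbJ$ from (ii).

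Finally, your termwise condition has a stray factor. With $x_j=(m+j+\tfrac12)s$, you have $\cbJ_{ms}=4s\sum_j[x_j\varphi(x_j)-2(1-\Phi(x_j))]$. So what is needed is $x\varphi(x)>2(1-\Phi(x))$, which the bound $1-\Phi(x)<\varphi(x)/x$ gives precisely for $x\ge\sqrt2$.
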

Proposition~\ref{prop:scalarlandscape} describes the landscape
of the limiting energy $\cbJ(m,\cdot)$ as a function of the
scale $s$ for a fixed level parameter $m$.
Item~(ii) shows that it has a unique critical point, which
is a nondegenerate global minimum.
Item~(iii) shows that increasing the number of levels
strictly decreases the optimal scale $s^*(m)$.
However, at fixed $m$, decreasing the scale trades finer
resolution for a narrower representable range and hence
greater clipping error. The optimal scale balances these
effects and satisfies
$s^*(m)>\sqrt{2}/(m+\tfrac12)$.

Beyond existence and uniqueness of the optimal scale
$s^*(m)$, we are interested in how sensitive the energy
is to scale choice. Since the optimal scale changes with
the number of levels, we consider relative rather than
absolute scale perturbations. The ordinary second derivative
measures sensitivity to additive changes in $s$; multiplying
it by $s^2$ instead measures sensitivity to proportional
changes.

\begin{defn}
Let $m\in[1,\infty)$ be such that
$\bar{\cE}^{\infty}(m,\cdot)$ is twice differentiable at $s^*(m)$.
We define its \emph{scale-invariant curvature} by $\kappa(m,s^*(m))
:=s^2\bar{\cE}^{\infty}_{ss}(m,s^*(m))$.
\end{defn}

At the minimizer $s^*=s^*(m)$, a relative perturbation
$s=s^*(1+\varepsilon)$ gives $\bar{\cE}^{\infty}(m,s^*(1+\varepsilon))
-\bar{\cE}^{\infty}(m,s^*)
=
\frac12\kappa(m,s^*)\varepsilon^2
+o(\varepsilon^2)$.
We invite the reader to Remark \ref{rem:scale_sense} for a more detailed discussion on the role of $s^(*)(m)$ in the definition of $\kappa$. Thus, $\kappa(m,s^*)$ quantifies the local increase in reconstruction loss caused by a relative error in the scale. An explicit computation of $\kappa(m, s^*(m))$, for an integer $m>0$, leads to
\begin{align*}
    \kappa(m,s)=4s\sum_{j=0}^{m-1}\Big[2\big(j+\tfrac12\big)s\Big(1-\Phi\big((j+\tfrac12)s\big)\Big)-\big(j+\tfrac12\big)^2s^2\,\varphi\big((j+\tfrac12)s\big)\Big]\, .
\end{align*}
The derivation of this formula can be found in Appendix \ref{sec:compofkappa}. A numerical evaluation for $B=2,\dots,8$ shows that $\kappa(M_B,s^*(M_B))$ is exponentially decreasing with $B$ as shown in Figure \ref{fig:kappaisdecr}. We refer to Algorithm \ref{alg:kappa} in Appendix \ref{sec:compofkappa} for the algorithm we have used for the numerical evaluation.

This shows that the loss landscape is sharper near its minimum at low bit-widths and flatter at high bit-widths, when measured with respect to relative changes in the quantization scale.
%----------------------------------------------------------

\section{Experiments}\label{sec:experiments}
The previous section showed that, under suitable assumptions,
the layerwise PTQ reconstruction loss converges to the simpler
asymptotic loss \eqref{eq:assylandsimple}. We now test whether
predictions from this limiting landscape carry over to trained
models. In particular, we want to answer the following questions: \textbf{1)} Does the greater local scale sensitivity of the
limiting objective at lower precision translate into greater sensitivity to scale choice under GPTQ? \textbf{2)}  Does the analytic scale $s=s^*(M_B)\hat{\sigma}$ remain near-optimal for the reconstruction loss of trained, finite-width layers? Here, $\hat{\sigma}$ is the RMS of the weights in the
channel being quantized, and $s^*(M_B)$ is computed
by solving $\cE_s^\infty(M_B,s^*(M_B))=0$,
using the explicit expression in \eqref{eq:determiningsm}.

\subsection{Scale sensitivity of LLM post-training quantization}
We quantize several LLMs using GPTQ, with a calibration set
of 128 randomly sampled windows of 2,048 tokens from the C4
training split. We use per-channel quantization and vary
only the scale-selection rule. Our theory concerns coordinatewise RTN, whereas GPTQ updates
the remaining weights to compensate for quantization errors.
The two procedures coincide on the same fixed grid when
$XX^\top$ is diagonal, but GPTQ is not generally covered
by our theory. These experiments therefore test whether
the predicted scale sensitivity extends to GPTQ. We consider five different scale-selection rules, ranging from no search (Min-max) through search without activation data (WMSE, Shrink-2.4) to Hessian-weighted search (Proxy-static, RTNH), as specified in
Table~\ref{tab:scale-selection}.
\begin{remark}
Existing scale rules target the RTN loss, which need not align with GPTQ's compensated loss without preprocessing. Our \emph{Proxy-static} rule, see Appendix~\ref{app:proxy-static}, reuses GPTQ's inverse-Hessian factorization to account for the compensation: it is the best rule without HIP (Table~\ref{tab:scale-choice-performance-raw}) but gives no clear gain with HIP (Table~\ref{tab:scale-choice-performance_HIP}).
\end{remark}

For each model and precision,
Figure~\ref{fig:spread} reports the performance
range: the difference between the maximum and minimum NLL
across the five scale-selection rules.
This measures how much the quantized model performance depends on the choice
of scale-selection rule. 
\begin{figure}[h!]
    \centering
    \includegraphics[width=0.7\textwidth]{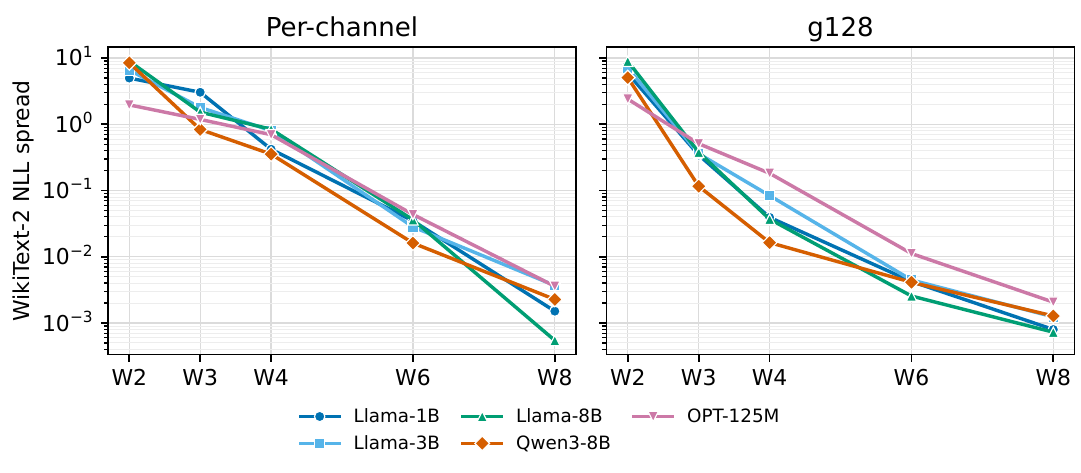}
   \caption{Log-scale GPTQ sensitivity to the scale rule across bit per channel (left) and group size 128 (right) for different models. NLL spread is the max minus min WikiText-2 NLL across Min-max, Shrink-2.4, WMSE, Proxy-static, and RTNH. }
      \label{fig:spread}
\end{figure}
We observe, that from W6 onward, all scale rules give essentially the same NLL. 
The scale matters most at W2 and W3, consistent with $\kappa$ decreasing exponentially in the number of bits, i.e., the theoretical loss landscape being more sensitive to relative scale perturbations at low precision.  \cite{amboage2026optimal} reports a related trend, i.e. data-aware scale optimization pays off increasingly as the bit-width decreases. We note that, for group size 128, the performance differences between scale-selection rules diminish even more rapidly with bit-width, consistent with the greater flexibility of fitting scales to smaller groups of weights.

\begin{remark}
\label{rem:ruleisnotcurvature}
The NLL spread across scale-selection rules reflects both the model's sensitivity to scale changes and the differences between the scales selected by the rules. It shows the practical importance of scale selection in low-precision settings, but does not establish higher curvature of the local GPTQ reconstruction loss. Appendix~\ref{app:controlledsens} complements this comparison by measuring sensitivity under fixed relative scale perturbations.
\end{remark}

\begin{table}[h!]
      \centering
      \caption{Llama-3.1-8B, W3 per-channel GPTQ without HIP. Lower perplexity and higher accuracy indicate better performance. Proxy-static, the GPTQ-aware scale-selection rule introduced in Appendix \ref{app:scale-rules}, consistently outperforms the other tested rules in this setting. }
      \label{tab:scale-choice-performance-raw}
      \setlength{\tabcolsep}{3pt}
      \resizebox{\linewidth}{!}{%
      \begin{tabular}{@{}lrrccccccc@{}}
          \toprule
          Rule
          & \shortstack{WikiText-2\\PPL}
          & \shortstack{C4\\PPL}
          & PIQA & ARC-E & ARC-C & HellaSwag
          & WinoGrande & BoolQ & Mean \\
          \midrule
          Shrink-2.4
          & 14.050 & 17.177
          & 0.705 & 0.553 & 0.369 & 0.708
          & 0.666 & 0.795 & 0.632 \\

          Proxy-static
          & \textbf{11.756} & \textbf{16.328}
          & \textbf{0.729} & \textbf{0.635} & \textbf{0.412}
          & \textbf{0.721} & \textbf{0.685} & \textbf{0.798}
          & \textbf{0.663} \\

          Min-max
          & 41.473 & 45.506
          & 0.624 & 0.408 & 0.282 & 0.472
          & 0.515 & 0.524 & 0.471 \\

          Analytic
          & 930{,}372.777 & 643{,}344.609
          & 0.514 & 0.254 & 0.268 & 0.267
          & 0.507 & 0.622 & 0.405 \\
          \bottomrule
      \end{tabular}%
      }
  \end{table}
Table~\ref{tab:scale-choice-performance-raw} contrasts the best and
worst of the five heuristic scale-selection rules for Llama-3.1-8B
at W3 without HIP, together with zero-shot accuracy on six
benchmarks to assess whether perplexity differences carry over
to downstream tasks. To address the second question, we also
include the analytic scale $s^*(M_B)\hat\sigma$.  
\subsection{Preprocessing and scale sensitivity}
Scale selection strongly affects performance, and the asymptotically optimal scale fails on the unprocessed model. Since trained weights need not be iid Gaussian, we ask whether preprocessing can preserve the model's function while bringing its weights closer to the assumptions of Theorem \ref{thm:uniform} and making $s^*(M_B)\hat\sigma$ near-optimal for the finite-width reconstruction objective.

Figure \ref{fig:analyticregret_HIP} shows that, after HIP  (see Appendix \ref{app:HIP} for a short description), the analytic rule
$s^*(M_B)\hat{\sigma}$ consistently matches the best tested heuristic in NLL across the tested models
at W3 and above, without activation-based scoring or scale search.

\begin{figure}[h!]
    \centering
    \includegraphics[width=0.7\linewidth]
        {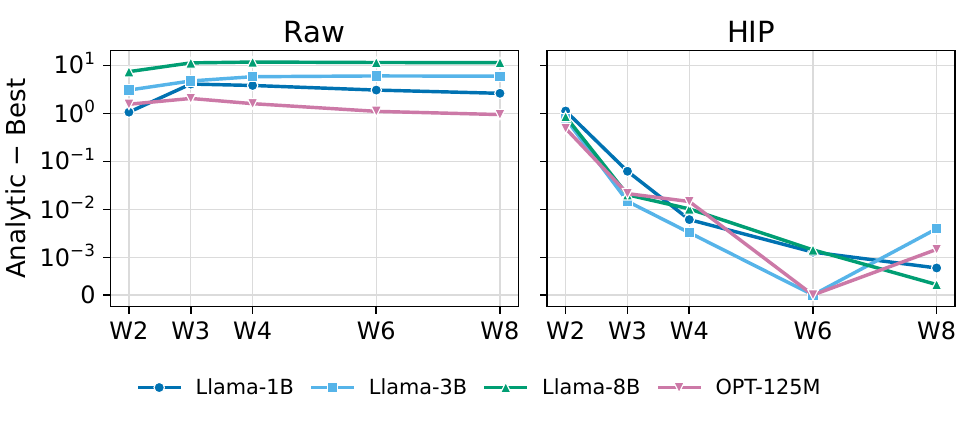}
    \caption{
    HIP substantially reduces the NLL gap between the analytic rule $s^*(M_B)\hat\sigma$ and the best of five heuristics under per-channel GPTQ across tested models and precisions. We report the positive part of this gap, setting it to zero whenever the analytic rule matches or outperforms all five
heuristics. }
    \label{fig:analyticregret_HIP}
\end{figure}

\begin{table}[H]
      \centering
      \caption{Llama-3.1-8B, W3 per-channel GPTQ with HIP.
      Lower is better for perplexity; higher is better for accuracy.
      Best results are shown in bold.}
      \label{tab:scale-choice-performance_HIP}
      \setlength{\tabcolsep}{3pt}
      \resizebox{\linewidth}{!}{%
      \begin{tabular}{@{}lrrccccccc@{}}
          \toprule
          Rule
          & \shortstack{WikiText-2\\PPL}
          & \shortstack{C4\\PPL}
          & PIQA & ARC-E & ARC-C & HellaSwag
          & WinoGrande & BoolQ & Mean \\
          \midrule
          Shrink-2.4
          & \textbf{9.095} & \textbf{14.452}
          & 0.785 & 0.744 & \textbf{0.508} & \textbf{0.740}
          & 0.717 & 0.817 & 0.718 \\

          Proxy-static
          & 9.229 & 14.618
          & 0.781 & 0.757 & 0.488 & 0.739
          & \textbf{0.733} & 0.817 & 0.719 \\

          Min-max
          & 15.396 & 22.775
          & 0.705 & 0.544 & 0.332 & 0.620
          & 0.595 & 0.621 & 0.569 \\

          Analytic
          & 9.214 & 14.602
          & \textbf{0.786} & \textbf{0.761} & 0.506 & 0.737
          & 0.711 & \textbf{0.824} & \textbf{0.721} \\
          \bottomrule
      \end{tabular}%
      }
  \end{table}
  QuIP\#~\cite{tseng2024quip} likewise selects the scales of its vector codebooks from the Gaussian quantization error. Our analysis makes this connection rigorous for uniform scalar quantization of the layerwise reconstruction objective, and our experiments show that the resulting analytic scale transfers, after HIP, to GPTQ (and to QRoNoS and ResComp, see the appendix) with competitive performance and no scale search.

\subsection{Local reconstruction sensitivity}
\label{sec:localreconstr}
We now examine scale sensitivity directly at the level of
the reconstruction objective. For each weight matrix, we
evaluate 32 predetermined rows using the calibration
activations from a sequential GPTQ run. We repeat the
experiment across five models, five precisions, and three
calibration draws.

For each row, let $\hat{s}$ denote the best searched GPTQ
scale and $L(s)$ its RMS-normalized, Hessian-weighted
reconstruction loss. We measure sensitivity to relative
scale perturbations through
\begin{equation}\label{eq:emph_count_of_kappa}
    C_{\mathrm{sym}}(\delta)
:=
\frac{
L(\hat{s}(1+\delta))
+L(\hat{s}(1-\delta))
-2L(\hat{s})
}{\delta^2}.
\end{equation}
For a twice-differentiable loss, this converges to
$\hat{s}^2L''(\hat{s})$ as $\delta\to0$.
For the empirical GPTQ landscapes, we interpret it as a
finite-window sensitivity measure.

Figure~\ref{fig:local-gptq-sensitivity} shows that the median empirical GPTQ sensitivity\footnote{For each model and bit-width, we pool observations from 32 predetermined rows per matrix across all studied matrices and three calibration seeds.} decreases approximately exponentially with bit-width over the tested range, consistently across all three perturbation windows. The theoretical Gaussian curvature exhibits the same trend, and at $\delta=0.05$, the empirical and theoretical quantities also agree closely in magnitude across models. This behavior mirrors the global sensitivity trend in Figure~\ref{fig:spread}. Together, these observations suggest that the Gaussian landscape provides a useful model for the bit-width dependence of GPTQ sensitivity even without HIP preprocessing, despite substantial differences between the empirically optimal scales and their Gaussian predictions.

\begin{figure}[t]
    \centering \includegraphics[width=0.9\linewidth]{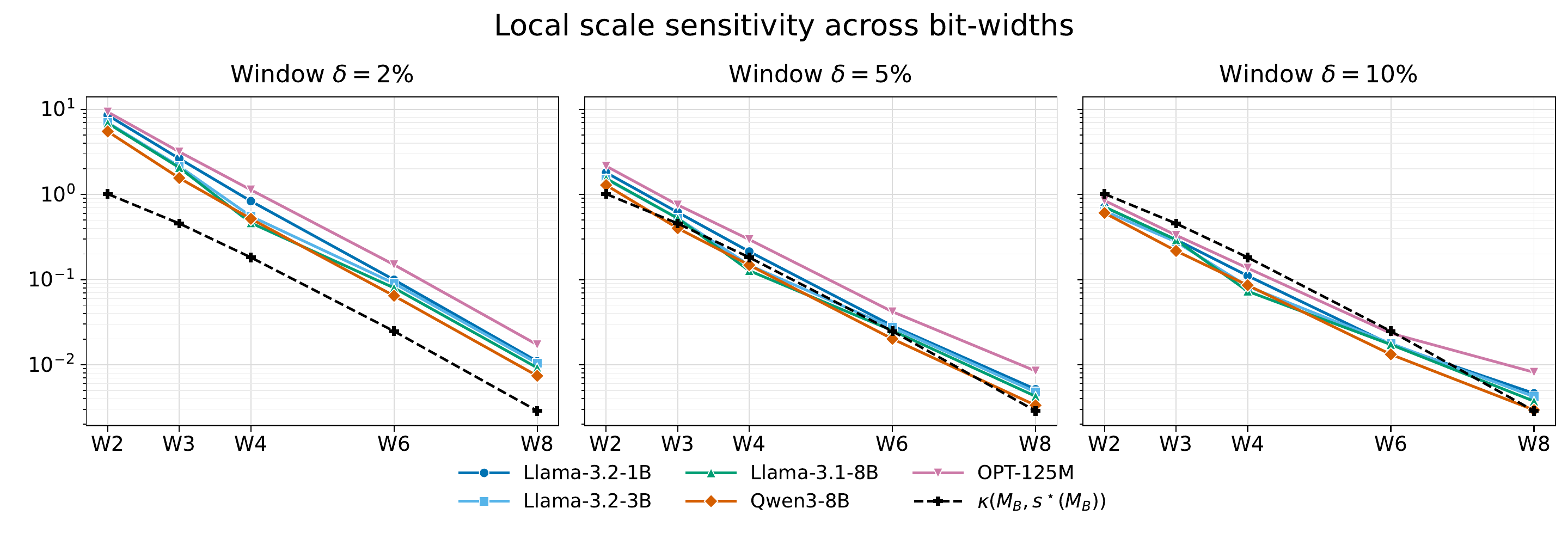}
    \caption{Colored curves: median finite-window GPTQ sensitivity $C_{\mathrm{sym}}(\delta)$ in logscale at rowwise scales optimized by grid search for different $\delta$. Dashed black: Gaussian curvature $\kappa(M_B,s^\star(M_B))$ computed using Algorithm~\ref{alg:kappa}. Same exponential decay with number of bits as the empirical GPTQ sensitivity. The dependence of $C_{\mathrm{sym}}(\delta)$ on the perturbation window indicates that these finite-window measurements cannot be interpreted as a single local quadratic curvature.
    }
    \label{fig:local-gptq-sensitivity}
\end{figure}

Figure~\ref{fig:landscape} illustrates the reconstruction
landscape for one weight row across precisions, with and
without HIP preprocessing. Figure~\ref{fig:localreconstrHIP} further shows that median empirical GPTQ sensitivity after HIP follows a similar bit-width dependence to that observed without preprocessing. This suggests that the reduced performance spread across scale-selection rules after HIP could reflect better agreement among the scales selected by these rules, and not a flatter local landscape. 

Note that, HIP's rotation alone preserves the effective rank of the input Gram matrix $H=XX^\top$. Motivated by the tighter convergence bound at higher effective rank (Theorem~\ref{thm:uniform}), we test whether transformations that change the spectrum improve analytic scale prediction.

We study W2 per-channel GPTQ on Llama-3.1-8B-Instruct,
using a single unrotated sequential quantization stream.
We partition the model's 32 blocks into ten fixed depth
bins. Within each bin, we select the matrix with the largest
normalized reconstruction error after per-row scale search,
then its highest-error row.
We compare four settings: no preprocessing, HIP, diagonal
scaling followed by HIP, and whitening followed by HIP (~\ref{app:HIP}).
All settings use the same pre-target weights and incoming
activations; preprocessing is applied only to the target.

Relative to HIP alone, diagonal scaling and whitening
reduce the median scale gap $\left|\log\left(\frac{s_{\mathrm{search}}}{s_{\mathrm{an}}}\right)\right|$, where $s_{\mathrm{an}}=s^*(M_2)\hat{\sigma}$, from $0.187$ to $0.0298$ and $0.00406$, improving agreement in $8/10$ and $10/10$ selected rows, respectively
(Figure~\ref{fig:w2-preprocessing-depth}). Here, $s_{\mathrm{search}}$ is the best
searched GPTQ scale, not a certified global optimum.
However, using a common normalization across settings,
the best searched reconstruction error increases in every
row, by median factors of $3.63$ and $20.8$ relative to
HIP alone (Table \ref{tab:w2-preprocessing-depth}).

These results reveal a trade-off in the selected cases:
the transformations bring the best searched scale closer
to the analytic prediction, but worsen the reconstruction
error attained by GPTQ even after scale optimization.

\section{Limitation and Future Work}
We establish the first rigorous connection between Gaussian scalar quantization and layerwise post-training quantization at random initialization. Under Gaussian weights and an effective-rank condition on calibration activations, the normalized round-to-nearest reconstruction loss converges uniformly in scale to the Gaussian objective (Theorem \ref{thm:uniform}); this condition holds with high probability for wide MLPs (Theorem \ref{thm:mlpworks}). The objective has a unique, nondegenerate minimizer whose scale-invariant curvature decreases with bit-width (Proposition \ref{prop:scalarlandscape}; Section \ref{sec:limfuncland}), explaining strong scale sensitivity at 2--3 bits and weaker sensitivity from 6 bits onward, consistent with LLM experiments. The analysis also shed light on why HIP improves quantization: most scale-selection rules target near-optimal scales for Gaussian weights, and HIP brings weights closer to this regime. We stress the fact that convergence is not guarantee but Gaussianity of the weights alone for iterative PTQ. Finally, our experiments reveal that naively increasing the effective rank improves scale predictability but removes low-eigenvalue directions that GPTQ exploits for error compensation.

\paragraph{Limitations \& Future works.} Our theory concerns randomly initialized networks and does not model training or its effects on weight distributions and activation geometry. We therefore do not establish whether the Gaussian and effective-rank assumptions hold after training; the experiments on pretrained LLMs provide empirical evidence beyond our theoretical guarantees. Moreover, the analysis is specific to round-to-nearest (RTN) quantization and does not cover other strategies, including error-compensation methods such as GPTQ. Extending the theory to trained networks and more general quantization methods remains an open direction.
In our experiments HIP makes weight distributions more Gaussian, improving alignment with (H1) and GPTQ performance, whereas the transformations we tested to increase the effective rank, motivated by (H2), degrade performance when combined with HIP. This calls for preprocessing that improves Gaussianity and effective rank jointly without sacrificing GPTQ's error compensation. On the theoretical side, extending our analysis to trained weights and to compensated methods such as GPTQ would clarify how scale selection, spectral structure and error compensation interact. Finally, the approximately exponential decay of relative-scale sensitivity in the Gaussian model provides a quantitative reference for future PTQ methods: a natural goal is lower sensitivity at low precision, under matched bit-widths and loss normalization, without loss of reconstruction quality or end-to-end performance.
\newpage
\subsection*{AI use statement}

In this work, we used generative AI tools for polishing code, improving sentence clarity, and refining grammar. Moreover, discussion with AI has been used to provide insight on how to prove positiveness and decreasing behaviour of (ii) and (iii) in Proposition \ref{prop:scalarlandscape}. The conversation with AI tools has not produced the proof but just the idea, that has been carefully written by the authors. Therefore, we have not used generative AI tools for creating results and proof in a copy and paste manner. We have carefully reviewed all AI-assisted work: in particular, the LLM-generated part of the code used in the experiments was verified and tested for correctness. We take responsibility for the final content of this work, including text and claims or artifacts produced with the aid of generative AI.

\subsection*{Ethics statement}
This work focuses on the theoretical analysis of scale sensitivity and quantization objective in machine learning model quantization and does not involve experiments on human subjects, sensitive personal data, or applications with direct societal risks. The datasets referenced are publicly available, and no private or restricted data was used. Potential ethical concerns related to misuse are minimal, as the contributions are mainly theoretical.

\subsection*{Reproducibility statement}
We have taken multiple steps to ensure reproducibility of our results. All theoretical claims are accompanied by rigorous proofs, presented in detail in the appendix. Assumptions underlying the theorems are explicitly stated, and definitions are given in full to allow independent verification. The code used for the experiment is provided as anonymous supplementary material at
\url{https://anonymous.4open.science/r/SOQ-345F}.

\subsection*{Acknowledgments}
Jonas von Berg, Massimiliano Datres and Gitta Kutyniok acknowledge support by the project ”Next Generation AI Computing (gAIn),” funded by the Bavarian Ministry of Science and the Arts and the Saxon Ministry for Science, Culture, and Tourism as well as by the Hightech Agenda Bavaria. 

Jonas von Berg, Carlo Kneißl and Gitta Kutyniok are also grateful for partial support from the Konrad Zuse School of Excellence in Reliable AI (DAAD). 

Additionally, Jonas von Berg, Massimiliano Datres, Carlo Kneißl and Gitta Kutyniok acknowledge support by the Munich Center for Machine Learning (MCML). 

Carlo Kneissl and Gitta Kutyniok acknowledge support by the project ”Genius Robot” (01IS24083), funded by
the Federal Ministry of Education and Research (BMBF), as well as the ONE Munich Strategy Forum (LMU Munich,
TU Munich, and the Bavarian Ministery for Science and Art)

Gitta Kutyniok furthermore acknowledges support by the German Research Foundation under Grants DFG-SPP-2298, KU 1446/31-1 and KU 1446/32-1, and by the Bavarian Ministry for Digital Affairs.

\newpage
\bibliography{iclr2027_conference}

@article{DBLP:journals/corr/abs-2407-21783,
  publtype={informal},
  author={Abhimanyu Dubey and Abhinav Jauhri and et al.},
  title={The Llama 3 Herd of Models},
  year={2024},
  cdate={1704067200000},
  journal={CoRR},
  volume={abs/2407.21783},
  url={https://doi.org/10.48550/arXiv.2407.21783}
}

@article{liu2024deepseek,
  title={Deepseek-v3 technical report},
  author={Liu, Aixin and Feng, Bei and Xue, Bing and Wang, Bingxuan and Wu, Bochao and Lu, Chengda and Zhao, Chenggang and Deng, Chengqi and Zhang, Chenyu and Ruan, Chong and others},
  journal={arXiv preprint arXiv:2412.19437},
  year={2024}
}

@article{ma2025bitnet,
  title={BitNet b1. 58 2B4T Technical Report},
  author={Ma, Shuming and Wang, Hongyu and Huang, Shaohan and Zhang, Xingxing and Hu, Ying and Song, Ting and Xia, Yan and Wei, Furu},
  journal={arXiv preprint arXiv:2504.12285},
  year={2025}
}

@book{vershynin2018,
  author    = {Vershynin, Roman},
  title     = {High-Dimensional Probability: An Introduction with Applications in Data Science},
  series    = {Cambridge Series in Statistical and Probabilistic Mathematics},
  publisher = {Cambridge University Press},
  year      = {2018},
}

@article{laurent2000,
  author  = {Laurent, B{\'e}atrice and Massart, Pascal},
  title   = {Adaptive estimation of a quadratic functional by model selection},
  journal = {Annals of Statistics},
  volume  = {28},
  number  = {5},
  pages   = {1302--1338},
  year    = {2000}
}

@book{wainwright2019high,
  title={High-dimensional statistics: A non-asymptotic viewpoint},
  author={Wainwright, Martin J},
  volume={48},
  year={2019},
  publisher={Cambridge university press}
}

@article{frantar2022gptq,
  title={Gptq: Accurate post-training quantization for generative pre-trained transformers},
  author={Frantar, Elias and Ashkboos, Saleh and Hoefler, Torsten and Alistarh, Dan},
  journal={arXiv preprint arXiv:2210.17323},
  year={2022}
}

@article{lin2024awq,
  title={Awq: Activation-aware weight quantization for on-device llm compression and acceleration},
  author={Lin, Ji and Tang, Jiaming and Tang, Haotian and Yang, Shang and Chen, Wei-Ming and Wang, Wei-Chen and Xiao, Guangxuan and Dang, Xingyu and Gan, Chuang and Han, Song},
  journal={Proceedings of machine learning and systems},
  volume={6},
  pages={87--100},
  year={2024}
}

@inproceedings{dettmers2023case,
  title={The case for 4-bit precision: k-bit inference scaling laws},
  author={Dettmers, Tim and Zettlemoyer, Luke},
  booktitle={International Conference on Machine Learning},
  pages={7750--7774},
  year={2023},
  organization={PMLR}
}

@inproceedings{chen2026geometry,
  title={The geometry of llm quantization: Gptq as babai's nearest plane algorithm},
  author={Chen, Jiale and Shabanzadeh, Yalda and Crn{\v{c}}evi{\'c}, Elvir and Hoefler, Torsten and Alistarh, Dan},
  booktitle={International Conference on Learning Representations},
  volume={2026},
  pages={122653--122695},
  year={2026}
}

@inproceedings{zhang2026qronos,
  title={Qronos: Correcting the past by shaping the future... in post-training quantization},
  author={Zhang, Shihao and Zhang, Haoyu and Colbert, Ian and Saab, Rayan},
  booktitle={International Conference on Learning Representations},
  volume={2026},
  pages={87775--87798},
  year={2026}
}

@article{tseng2024quip,
  title={Quip\#: Even better llm quantization with hadamard incoherence and lattice codebooks},
  author={Tseng, Albert and Chee, Jerry and Sun, Qingyao and Kuleshov, Volodymyr and De Sa, Christopher},
  journal={Proceedings of machine learning research},
  volume={235},
  pages={48630},
  year={2024}
}

@article{banner2019post,
  title={Post training 4-bit quantization of convolutional networks for rapid-deployment},
  author={Banner, Ron and Nahshan, Yury and Soudry, Daniel},
  journal={Advances in neural information processing systems},
  volume={32},
  year={2019}
}

@article{nagel2021white,
  title={A white paper on neural network quantization},
  author={Nagel, Markus and Fournarakis, Marios and Amjad, Rana Ali and Bondarenko, Yelysei and Van Baalen, Mart and Blankevoort, Tijmen},
  journal={arXiv preprint arXiv:2106.08295},
  year={2021}
}

@article{frantar2022optimal,
  title={Optimal brain compression: A framework for accurate post-training quantization and pruning},
  author={Frantar, Elias and Alistarh, Dan},
  journal={Advances in Neural Information Processing Systems},
  volume={35},
  pages={4475--4488},
  year={2022}
}

@article{hassibi1992second,
  title={Second order derivatives for network pruning: Optimal brain surgeon},
  author={Hassibi, Babak and Stork, David},
  journal={Advances in neural information processing systems},
  volume={5},
  year={1992}
}

@article{li2025gptaq,
  title={Gptaq: Efficient finetuning-free quantization for asymmetric calibration},
  author={Li, Yuhang and Yin, Ruokai and Lee, Donghyun and Xiao, Shiting and Panda, Priyadarshini},
  journal={arXiv preprint arXiv:2504.02692},
  year={2025}
}

@article{li2026rethinking,
  title={Rethinking Residual Errors in Compensation-based LLM Quantization},
  author={Li, Shuaiting and Deng, Juncan and Xu, Kedong and Deng, Rongtao and Gu, Hong and Jiang, Minghan and Shen, Haibin and Huang, Kejie},
  journal={arXiv preprint arXiv:2604.07955},
  year={2026}
}

@article{chee2023quip,
  title={Quip: 2-bit quantization of large language models with guarantees},
  author={Chee, Jerry and Cai, Yaohui and Kuleshov, Volodymyr and De Sa, Christopher M},
  journal={Advances in neural information processing systems},
  volume={36},
  pages={4396--4429},
  year={2023}
}

@article{ashkboos2024quarot,
  title={Quarot: Outlier-free 4-bit inference in rotated llms},
  author={Ashkboos, Saleh and Mohtashami, Amirkeivan and Croci, Maximilian L and Li, Bo and Cameron, Pashmina and Jaggi, Martin and Alistarh, Dan and Hoefler, Torsten and Hensman, James},
  journal={Advances in Neural Information Processing Systems},
  volume={37},
  pages={100213--100240},
  year={2024}
}

@article{zhang2024magr,
  title={Magr: Weight magnitude reduction for enhancing post-training quantization},
  author={Zhang, Aozhong and Wang, Naigang and Deng, Yanxia and Li, Xin and Yang, Zi and Yin, Penghang},
  journal={Advances in neural information processing systems},
  volume={37},
  pages={85109--85130},
  year={2024}
}

@inproceedings{xiao2023smoothquant,
  title={Smoothquant: Accurate and efficient post-training quantization for large language models},
  author={Xiao, Guangxuan and Lin, Ji and Seznec, Mickael and Wu, Hao and Demouth, Julien and Han, Song},
  booktitle={International conference on machine learning},
  pages={38087--38099},
  year={2023},
  organization={PMLR}
}

@inproceedings{liu2025spinquant,
  title={Spinquant: Llm quantization with learned rotations},
  author={Liu, Zechun and Zhao, Changsheng and Fedorov, Igor and Soran, Bilge and Choudhary, Dhruv and Krishnamoorthi, Raghuraman and Chandra, Vikas and Tian, Yuandong and Blankevoort, Tijmen},
  booktitle={International Conference on Learning Representations},
  volume={2025},
  pages={92009--92032},
  year={2025}
}

@article{li2021brecq,
  title={Brecq: Pushing the limit of post-training quantization by block reconstruction},
  author={Li, Yuhang and Gong, Ruihao and Tan, Xu and Yang, Yang and Hu, Peng and Zhang, Qi and Yu, Fengwei and Wang, Wei and Gu, Shi},
  journal={arXiv preprint arXiv:2102.05426},
  year={2021}
}

@inproceedings{nagel2020up,
  title={Up or down? adaptive rounding for post-training quantization},
  author={Nagel, Markus and Amjad, Rana Ali and Van Baalen, Mart and Louizos, Christos and Blankevoort, Tijmen},
  booktitle={International conference on machine learning},
  pages={7197--7206},
  year={2020},
  organization={PMLR}
}

@inproceedings{shao2024omniquant,
  title={Omniquant: Omnidirectionally calibrated quantization for large language models},
  author={Shao, Wenqi and Chen, Mengzhao and Zhang, Zhaoyang and Xu, Peng and Zhao, Lirui and Li, Zhiqian and Zhang, Kaipeng and Peng, Gao and Qiao, Yu and Luo, Ping},
  booktitle={International Conference on Learning Representations},
  volume={2024},
  pages={45472--45496},
  year={2024}
}

@article{amboage2026optimal,
  title={Optimal Post-Training Quantization Scales and Where to Find Them},
  author={Amboage, Juan and Monteagudo-Lago, Pablo and Colbert, Ian and Franco, Giuseppe and Fraser, Nicholas},
  journal={arXiv preprint arXiv:2606.10890},
  year={2026}
}

@article{banner2018aciq,
  title={Aciq: Analytical clipping for integer quantization of neural networks},
  author={Banner, Ron and Nahshan, Yury and Hoffer, Elad and Soudry, Daniel},
  year={2018}
}

@article{nahshan2021loss,
  title={Loss aware post-training quantization},
  author={Nahshan, Yury and Chmiel, Brian and Baskin, Chaim and Zheltonozhskii, Evgenii and Banner, Ron and Bronstein, Alex M and Mendelson, Avi},
  journal={Machine Learning},
  volume={110},
  number={11},
  pages={3245--3262},
  year={2021},
  publisher={Springer}
}

@article{max1960quantizing,
  title={Quantizing for minimum distortion},
  author={Max, Joel},
  journal={IRE Transactions on Information Theory},
  volume={6},
  number={1},
  pages={7--12},
  year={1960},
  publisher={IEEE}
}

@article{na2017convexity,
  title={On the convexity of the MSE distortion of symmetric uniform scalar quantization},
  author={Na, Sangsin and Neuhoff, David L},
  journal={IEEE Transactions on Information Theory},
  volume={64},
  number={4},
  pages={2626--2638},
  year={2017},
  publisher={IEEE}
}

@article{na2018monotonicity,
  title={Monotonicity of step sizes of MSE-optimal symmetric uniform scalar quantizers},
  author={Na, Sangsin and Neuhoff, David L},
  journal={IEEE Transactions on Information Theory},
  volume={65},
  number={3},
  pages={1782--1792},
  year={2018},
  publisher={IEEE}
}
\bibliographystyle{iclr2027_conference}

\newpage
\appendix
\startcontents[appendix]
\printcontents[appendix]{}{1}{\section*{Appendix}}

\section{Theory}
\subsection{Known results}

In this section we collect the probabilistic tools used throughout the paper.
Throughout the paper the sub-gaussian and sub-exponential norms are understood in the following Orlicz sense, which is the one used in \cite{vershynin2018}. Most of the results appearing in this section are taken from \cite{vershynin2018}.

\begin{defn}\label{def:psi2}
For a real random variable $Y$ we set
\[
\|Y\|_{\psi_2} := \inf\left\{u>0: \ \E\left[e^{Y^2/u^2}\right] \le 2 \right\} ,
\]
with the convention $\inf\emptyset = +\infty$, and we call $Y$ sub-gaussian if $\|Y\|_{\psi_2}<\infty$.
\end{defn}

\begin{defn}\label{def:psi1}
For a real random variable $Y$ we set
\[
\|Y\|_{\psi_1} := \inf\left\{u>0: \ \E\left[e^{|Y|/u}\right] \le 2 \right\} ,
\]
with the convention $\inf\emptyset = +\infty$, and we call $Y$ sub-exponential if $\|Y\|_{\psi_1}<\infty$.
\end{defn}
Gaussian random variables are sub-gaussian \cite[Ex.~2.5.8]{vershynin2018}; since the value of their $\psi_2$-norm enters the constants of our bounds, we compute it exactly.
\begin{lemma}\label{lem:gausspsi2}
If $Y\sim\cN(0,\tau^2)$ with $\tau>0$, then $\|Y\|_{\psi_2} = \sqrt{8/3}\,\tau$. Moreover, $\|aY\|_{\psi_2} = |a|\,\|Y\|_{\psi_2}$ for every $a\in\R$.
\end{lemma}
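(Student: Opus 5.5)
The plan is to compute $\E[e^{Y^2/u^2}]$ in closed form for $Y\sim\cN(0,\tau^2)$ and then solve the defining inequality of Definition~\ref{def:psi2} for $u$. For $u^2>2\tau^2$, completing the square in the Gaussian integral gives
\[
\E\left[e^{Y^2/u^2}\right]=\frac{1}{\sqrt{2\pi}\,\tau}\int_{\R} \exp\!\Bigl(-\frac{y^2}{2}\Bigl(\frac{1}{\tau^2}-\frac{2}{u^2}\Bigr)\Bigr)\,dy=\Bigl(1-\frac{2\tau^2}{u^2}\Bigr)^{-1/2}.
\]
For $u^2\le 2\tau^2$ the integrand does not decay, so the expectation is $+\infty$. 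Hence only $u>\sqrt2\,\tau$ can be admissible.

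On the range $u>\sqrt2\,\tau$, the map $u\mapsto(1-2\tau^2/u^2)^{-1/2}$ is continuous and strictly decreasing. It tends to $+\infty$ as $u\downarrow\sqrt2\,\tau$ and to $1$ as $u\to\infty$. The condition $(1-2\tau^2/u^2)^{-1/2}\le 2$ is therefore equivalent to $1-2\tau^2/u^2\ge 1/4$, that is, $u^2\ge 8\tau^2/3$. The admissible set is exactly $[\sqrt{8/3}\,\tau,\infty)$, so its infimum equals $\sqrt{8/3}\,\tau$, and the infimum is attained.

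For the homogeneity claim, I would argue directly from Definition~\ref{def:psi2}. If $a=0$, then $e^{0}=1\le2$ for every $u>0$, so $\|0\|_{\psi_2}=0$. If $a\neq0$, then $\E[e^{a^2Y^2/u^2}]\le2$ holds if and only if $\E[e^{Y^2/(u/|a|)^2}]\le 2$. The substitution $u=|a|v$ is a bijection of the admissible sets, so the infimum scales by $|a|$. This argument holds for any random variable $Y$. In the Gaussian case one can also check it directly, since $aY\sim\cN(0,a^2\tau^2)$.

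I do not expect any real obstacle. The only delicate point is handling the divergence for $u\le\sqrt2\,\tau$ correctly, so that the monotone-solving step is carried out only on the range where the expectation is finite.
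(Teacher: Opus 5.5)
Your proposal is correct and follows the paper's proof essentially verbatim. Both compute $\E[e^{Y^2/u^2}]=(1-2\tau^2/u^2)^{-1/2}$ for $u>\sqrt2\,\tau$ (and $+\infty$ otherwise), reduce the condition $\le 2$ to $u^2\ge \tfrac83\tau^2$, and obtain homogeneity by rescaling $u$ in Definition~\ref{def:psi2}; your explicit completing-the-square computation and separate treatment of $a=0$ just fill in steps the paper calls immediate.
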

\begin{proof}
For $u>\sqrt{2}\,\tau$ one has $\E[e^{Y^2/u^2}] = (1-2\tau^2/u^2)^{-1/2}$, and $\E[e^{Y^2/u^2}]=+\infty$ otherwise. Hence $\E[e^{Y^2/u^2}]\le 2$ if and only if $1-2\tau^2/u^2 \ge 1/4$, i.e.\ $u^2 \ge \tfrac{8}{3}\tau^2$. The homogeneity is immediate from the definition.
\end{proof}
The square of a sub-gaussian random variable is sub-exponential, with an exact relation between the two norms.
\begin{lemma}
\label{lem:subexp-subgauss-squared}
A random variable $X$ is sub-gaussian if and only if $X^2$ is sub-exponential. Moreover,
\[
\|X^2\|_{\psi_1} = \|X\|_{\psi_2}^2 .
\]
\end{lemma}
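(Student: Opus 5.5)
The plan is to work directly from the Orlicz definitions (Definitions~\ref{def:psi2} and~\ref{def:psi1}) and to observe that the two admissible sets of parameters correspond exactly under the map $v\mapsto v^2$. No probabilistic estimate is needed; the statement is an identity between infima.

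\textbf{Step 1.} For $u>0$ set $v:=\sqrt{u}>0$. Since $|X^2|=X^2$, we have pointwise $e^{|X^2|/u}=e^{X^2/v^2}$, and therefore
\[
\E\bigl[e^{|X^2|/u}\bigr]=\E\bigl[e^{X^2/v^2}\bigr]\in[1,\infty].
\]
Define the admissible sets
\[
A_1:=\{u>0:\E[e^{|X^2|/u}]\le 2\},\qquad A_2:=\{v>0:\E[e^{X^2/v^2}]\le 2\}.
\]
The displayed identity shows that $u\in A_1$ if and only if $\sqrt u\in A_2$. Hence $A_1=\{v^2: v\in A_2\}$.

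\textbf{Step 2.} The map $v\mapsto v^2$ is a strictly increasing bijection of $(0,\infty)$ onto itself, and it extends continuously and monotonically to $[0,\infty]$. It therefore commutes with infima:
\[
\inf A_1=\bigl(\inf A_2\bigr)^2 .
\]
This also covers the convention $\inf\emptyset=+\infty$. Indeed, $A_1=\emptyset$ exactly when $A_2=\emptyset$, and then both sides equal $+\infty$. This gives $\|X^2\|_{\psi_1}=\|X\|_{\psi_2}^2$ in $[0,\infty]$.

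\textbf{Step 3.} The equivalence follows immediately: $\|X\|_{\psi_2}<\infty$ if and only if $\|X\|_{\psi_2}^2<\infty$, that is, if and only if $\|X^2\|_{\psi_1}<\infty$. So $X$ is sub-gaussian precisely when $X^2$ is sub-exponential.

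There is no real obstacle here. The only point needing care is the bookkeeping of the extended-real conventions, namely empty admissible sets and the degenerate case $X=0$ almost surely, where both norms are $0$. These are all handled by the monotone-bijection argument in Step 2, so no separate case split is required.
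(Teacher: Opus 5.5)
Your proof is correct: the substitution $u=v^2$ identifies the two admissible sets in Definitions~\ref{def:psi2} and~\ref{def:psi1}, and monotonicity of $v\mapsto v^2$ on $[0,\infty]$ carries the infimum across, including the empty-set convention. The paper states this lemma without proof and defers to \cite{vershynin2018}, where the argument is exactly this reparametrization, so your approach coincides with the intended one.
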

The following lemma shows that centering does not increase the sub-exponential norm by more than a constant factor, which we make explicit.
\begin{lemma}
\label{lem:centering}
Let $X$ be a sub-exponential random variable. Then $X-\E X$ is sub-exponential as well, and
\[
\|X-\E X\|_{\psi_1}\ \le\ \Big(1+\frac{1}{\log 2}\Big)\,\|X\|_{\psi_1}\ \le\ 3\,\|X\|_{\psi_1}\, .
\]
\end{lemma}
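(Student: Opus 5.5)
The argument runs through the triangle inequality: first bound $\|X-\E X\|_{\psi_1}\le \|X\|_{\psi_1}+\|\E X\|_{\psi_1}$, then bound the norm of the constant $\E X$ by a multiple of $\|X\|_{\psi_1}$. The only nontrivial step is the constant, so we make it explicit.

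First we record that $\|\cdot\|_{\psi_1}$ is a norm on sub-exponential variables. It is the Luxemburg norm associated with the convex Young function $x\mapsto e^{|x|}-1$, since $\E[e^{|Y|/u}]\le 2$ is equivalent to $\E[e^{|Y|/u}-1]\le 1$. Convexity of the exponential then gives the triangle inequality in the standard way. Fix $u_1>\|X\|_{\psi_1}$ and $u_2>\|Y\|_{\psi_1}$ and set $u=u_1+u_2$. Then
\[
e^{|X+Y|/u}\le \exp\Big(\tfrac{u_1}{u}\tfrac{|X|}{u_1}+\tfrac{u_2}{u}\tfrac{|Y|}{u_2}\Big)\le \tfrac{u_1}{u}e^{|X|/u_1}+\tfrac{u_2}{u}e^{|Y|/u_2},
\]
and taking expectations bounds $\E[e^{|X+Y|/u}]$ by $2$. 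We then let $u_1,u_2$ decrease to the respective norms. Applied with $Y=-\E X$, this gives $\|X-\E X\|_{\psi_1}\le \|X\|_{\psi_1}+\|\E X\|_{\psi_1}$.

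Next we compute the norm of a constant $a$. From the definition, $e^{|a|/u}\le 2$ holds if and only if $u\ge |a|/\log 2$, so $\|a\|_{\psi_1}=|a|/\log 2$. It therefore remains to show $|\E X|\le \E|X|\le \|X\|_{\psi_1}$.

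For this last bound, let $K=\|X\|_{\psi_1}$. If $K=0$ then $X=0$ a.s. and there is nothing to prove. Otherwise, by monotone convergence the defining condition holds at $u=K$ itself, so $\E[e^{|X|/K}]\le 2$. Jensen's inequality then gives $e^{\E|X|/K}\le 2$, that is, $\E|X|\le K\log 2\le K$.

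Combining the pieces,
\[
\|X-\E X\|_{\psi_1}\le K+\frac{\E|X|}{\log 2}\le \Big(1+\frac{1}{\log 2}\Big)K.
\]
This is the stated inequality. In fact the Jensen step yields the sharper constant $2$, which a fortiori implies $1+1/\log 2\approx 2.44\le 3$. Finiteness of the right-hand side shows that $X-\E X$ is sub-exponential.

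The only point needing care is the triangle inequality for the Orlicz norm, together with attainment of the infimum at $u=K$. Both follow from convexity and monotone convergence, so no real obstacle arises.
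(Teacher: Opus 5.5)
Your proof is correct and takes the same route as the paper. Both proofs use the triangle inequality for $\|\cdot\|_{\psi_1}$, the identity $\|a\|_{\psi_1}=|a|/\log 2$ for constants, and a bound of $\E|X|$ by $\|X\|_{\psi_1}$. The differences are small: you verify the Orlicz triangle inequality explicitly, and you use Jensen at the attained value $u=K$ where the paper uses $e^x\ge 1+x$ with $t\downarrow\|X\|_{\psi_1}$. This gives your sharper bound $\E|X|\le K\log 2$, and hence the constant $2$.
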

\begin{proof}
Since $\|\cdot\|_{\psi_1}$ is a norm, the triangle inequality gives $\|X-\E X\|_{\psi_1}\le\|X\|_{\psi_1}+\|\E X\|_{\psi_1}$. For a deterministic $a\in\R$ we have $e^{|a|/t}\le 2$ if and only if $t\ge |a|/\log 2$, hence $\|a\|_{\psi_1}=|a|/\log 2$. It remains to bound $|\E X|$ by $\|X\|_{\psi_1}$. Let $t>\|X\|_{\psi_1}$, so that $\E e^{|X|/t}\le 2$. Using $e^{x}\ge 1+x$ we get $2\ge\E e^{|X|/t}\ge 1+\E|X|/t$, i.e.\ $\E|X|\le t$. Letting $t\downarrow\|X\|_{\psi_1}$ yields $|\E X|\le\E|X|\le\|X\|_{\psi_1}$. Combining the three estimates,
\[
\|X-\E X\|_{\psi_1}\ \le\ \|X\|_{\psi_1}+\frac{|\E X|}{\log 2}\ \le\ \Big(1+\frac{1}{\log 2}\Big)\|X\|_{\psi_1}\, ,
\]
and $1+1/\log 2$.
\end{proof}

We shall use the following concentration inequalities for sums and quadratic forms of independent sub-gaussian or sub-exponential random variables.

\begin{theorem}[Bernstein's inequality]
\label{cor:bernstein}
Let $X_1,\dots,X_N$ be independent, mean-zero, sub-exponential random variables, and let $K:=\max_i\|X_i\|_{\psi_1}$. Then, for every $t\ge0$,
\[
\P\left(\Big|\sum_{i=1}^N X_i\Big|\ge t\right)
\ \le\ 2\exp\left[-c\min\left(\frac{t^2}{K^2N},\ \frac{t}{K}\right)\right].
\]
\end{theorem}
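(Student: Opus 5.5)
This is the standard Bernstein inequality for sub-exponential sums (as in \cite[Thm.~2.8.1]{vershynin2018}), and I would prove it by the Chernoff--MGF method. By the union bound over the two tails and symmetry (apply the argument to $-X_i$, which has the same $\psi_1$-norm), it suffices to bound $\P(S\ge t)$ with $S:=\sum_{i=1}^N X_i$ by $\exp[-c\min(t^2/(K^2N),t/K)]$. For any $\lambda>0$, Markov's inequality and independence give
\[
\P(S\ge t)\le e^{-\lambda t}\prod_{i=1}^N \E\, e^{\lambda X_i}.
\]
So everything reduces to a uniform bound on the moment generating function of a single centered sub-exponential variable.

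\textbf{Key step (MGF bound).} I would show that there are absolute constants $c_0,C_0>0$ such that every mean-zero $Y$ with $\|Y\|_{\psi_1}\le K$ satisfies
\[
\E e^{\lambda Y}\le \exp(C_0\lambda^2K^2)\quad\text{for all }|\lambda|\le c_0/K.
\]
After rescaling, take $K=1$, so that $\E e^{|Y|}\le 2$.

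1. From $e^{|Y|}\ge |Y|^p/p!$ we get the moment bound $\E|Y|^p\le 2\,p!$ for every $p\ge1$.
2. Expand $e^{\lambda Y}=1+\lambda Y+\sum_{p\ge2}\lambda^pY^p/p!$. The linear term has zero expectation because $\E Y=0$.
3. The remaining series is bounded by $\sum_{p\ge2}2|\lambda|^p=2\lambda^2/(1-|\lambda|)\le 4\lambda^2$ for $|\lambda|\le 1/2$. Hence $\E e^{\lambda Y}\le 1+4\lambda^2\le e^{4\lambda^2}$.

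The exchange of expectation and series is justified by dominated convergence via $\E e^{|\lambda||Y|}<\infty$. This is the only genuinely analytic point, and I expect it to be the main (though mild) obstacle, mostly in keeping the constants absolute.

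\textbf{Optimization.} Plugging the MGF bound into the Chernoff bound gives, for $0<\lambda\le c_0/K$,
\[
\P(S\ge t)\le\exp\bigl(-\lambda t+C_0N\lambda^2K^2\bigr).
\]
The unconstrained minimizer is $\lambda^\ast=t/(2C_0NK^2)$.
\begin{itemize}
\item If $\lambda^\ast\le c_0/K$, choose $\lambda=\lambda^\ast$, which yields the bound $\exp(-t^2/(4C_0NK^2))$.
\item Otherwise choose $\lambda=c_0/K$. The condition $\lambda^\ast> c_0/K$ means $C_0N\lambda^2K^2\le \lambda t/2$, so the exponent is at most $-\lambda t/2=-c_0t/(2K)$.
\end{itemize}
In both cases the exponent is at most $-c\min(t^2/(K^2N),t/K)$ with $c=\min(1/(4C_0),c_0/2)$. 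Adding the two tails produces the prefactor $2$, which proves the claim.
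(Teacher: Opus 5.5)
Your proof is correct and is the standard Chernoff--MGF argument; your moment bound $\E|Y|^p\le 2\,p!$ gives explicit constants $C_0=4$ and $c_0=1/2$. The paper does not prove this inequality itself but quotes it from \cite{vershynin2018}, whose proof (Thm.~2.8.1, using the MGF bound of Prop.~2.7.1) follows the same route: bound the MGF of each centered sub-exponential summand for $|\lambda|\le c_0/K$, then optimize $\lambda$ separately in the two regimes.
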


\begin{theorem}[Hanson--Wright inequality]\label{thm:HanWrig}
Let $X=(X_1,\ldots,X_n)\in\R^n$ be a random vector with independent components satisfying $\E[X_i]=0$ and $\|X_i\|_{\psi_2}\le K$ for every $i\in[n]$, and let $A\in\R^{n\times n}$. Then, for every $t\ge 0$,
\begin{align*}
\P\left(\left|X^\top A X-\E\left[X^\top A X\right]\right|>t\right) \le2\exp\left(-c\min\left\{\frac{t^2}{K^4\|A\|_{F}^2},\ \frac{t}{K^2\|A\|_{op}} \right\}\right),
\end{align*}
where $\|A\|_{F}$ and $\|A\|_{op}$ denote the Frobenius and the operator norm of $A$.
\end{theorem}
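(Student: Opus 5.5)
This is the classical Hanson--Wright inequality, and I would follow the standard route of \cite[Thm.~6.2.1]{vershynin2018}. By homogeneity I first reduce to $K=1$: replacing $X$ by $X/K$ rescales the quadratic form by $K^{-2}$. I then write
\[
X^\top AX-\E X^\top AX=\sum_{i}a_{ii}\bigl(X_i^2-\E X_i^2\bigr)+\sum_{i\neq j}a_{ij}X_iX_j ,
\]
using that $\E[X_iX_j]=0$ for $i\ne j$ by independence and centering. It then suffices to prove the tail bound for each of the two sums at level $t/2$ and take a union bound.

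\textbf{Diagonal part.} By Lemma~\ref{lem:subexp-subgauss-squared}, $\|X_i^2\|_{\psi_1}=\|X_i\|_{\psi_2}^2\le1$. Lemma~\ref{lem:centering} then gives $\|X_i^2-\E X_i^2\|_{\psi_1}\le 3$. The summands $a_{ii}(X_i^2-\E X_i^2)$ are independent, centered and sub-exponential with norms at most $3|a_{ii}|$. I would apply the weighted form of Bernstein's inequality (Theorem~\ref{cor:bernstein}, in the version with coefficients, proved by the same MGF argument), with $\sum_i a_{ii}^2\le\|A\|_F^2$ and $\max_i|a_{ii}|\le\|A\|_{op}$. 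This yields exactly the $\min\{t^2/\|A\|_F^2,\ t/\|A\|_{op}\}$ form.

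\textbf{Off-diagonal part.} This is the main obstacle. Let $S=\sum_{i\ne j}a_{ij}X_iX_j$. I would bound $\E e^{\lambda S}$ in three steps.
\begin{enumerate}
\item \emph{Decoupling.} For convex $F$ one has $\E F(S)\le \E F\bigl(4X^\top A X'\bigr)$, where $X'$ is an independent copy of $X$. This follows by introducing random subsets $I\subset[n]$ through i.i.d.\ Bernoulli selectors, conditioning, and applying Jensen.
\item \emph{Gaussian comparison.} Conditionally on $X'$, the variable $X^\top AX'$ is a sum of independent sub-gaussian terms, so $\E_X e^{\lambda X^\top AX'}\le \exp\bigl(C\lambda^2\|AX'\|_2^2\bigr)$. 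This is the same bound as if $X$ were replaced by a standard Gaussian $g$. Repeating the argument in $X'$ reduces everything to the Gaussian chaos $g^\top Ag'$.
\item \emph{Gaussian computation.} Using rotation invariance and the SVD of $A$, $g^\top Ag'$ has the law of $\sum_i s_i g_ig_i'$, where $s_i$ are the singular values of $A$. Its MGF is explicit: $\prod_i(1-\lambda^2s_i^2)^{-1/2}\le \exp\bigl(C\lambda^2\sum_i s_i^2\bigr)$ for $|\lambda|\le c/\max_i s_i$.
\end{enumerate}
Together these give $\E e^{\lambda S}\le \exp(C\lambda^2\|A\|_F^2)$ for $|\lambda|\le c/\|A\|_{op}$. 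The delicate points are the constants in the decoupling step and keeping $\lambda$ inside the admissible range throughout the comparison.

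\textbf{Conclusion.} Finally, Markov's inequality applied to $e^{\lambda S}$, optimized over $\lambda=\min\{t/(2C\|A\|_F^2),\ c/\|A\|_{op}\}$, gives the off-diagonal tail $\exp(-c\min\{t^2/\|A\|_F^2,\ t/\|A\|_{op}\})$. The symmetric argument with $-A$ handles the lower tail. Combining this with the diagonal bound and restoring $K$ yields the stated inequality.
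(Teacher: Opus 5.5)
Your proposal is correct. It is exactly the standard proof of \cite[Thm.~6.2.1]{vershynin2018}: Bernstein for the diagonal part, then decoupling (valid for arbitrary $A$ by the Bernoulli-selector and conditional Jensen argument you describe), Gaussian comparison and the explicit Gaussian-chaos MGF for the off-diagonal part. The paper does not prove this theorem; it lists it among known tools taken from that reference. The only bookkeeping you leave implicit is that the union bound over the two parts gives a prefactor $4$, which becomes $2$ after replacing $c$ by $c/2$.
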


We recall some tail bounds for the sup-norm and the Euclidean norm of a standard Gaussian vector. The first is a union bound over the standard Gaussian tail $\P(|g|>t)\le 2e^{-t^2/2}$, $t\ge0$ \cite[Prop.~2.1.2]{vershynin2018}; the second is the sharp $\chi^2$ tail bound of Laurent and Massart \cite[Lemma~1]{laurent2000}.

\begin{lemma}\label{lem:norm_of_gaussian_vectors}
Let $X\sim\cN(0,\Id_d)$. Then, for every $d\ge2$,
\begin{align*}
\P\left(\|X\|_\infty > \log d\right)\ \le\ 2d\,e^{-\log^2 d/2} ,
\end{align*}
and, for every $x>0$,
\begin{align*}
\P\left(\|X\|_2^2 \ge d+2\sqrt{dx}+2x\right)\le e^{-x},
\qquad
\P\left(\|X\|_2^2 \le d-2\sqrt{dx}\right)\le e^{-x}.
\end{align*}
In particular, choosing $x=d/16$, each of the two inequalities $\tfrac12 d\le\|X\|_2^2\le 2d$ fails with probability at most $e^{-d/16}$.
\end{lemma}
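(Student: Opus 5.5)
The plan is to treat the three claims separately, since each follows from a standard tail bound plus a short computation.

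\textbf{Sup-norm bound.} Write $X=(X_1,\dots,X_d)$ with $X_i$ i.i.d.\ $\cN(0,1)$. Take the scalar tail bound $\P(|g|>t)\le 2e^{-t^2/2}$ for $t\ge0$ from \cite[Prop.~2.1.2]{vershynin2018} and apply it with $t=\log d\ge0$, which is valid for $d\ge2$. Since $\{\|X\|_\infty>\log d\}=\bigcup_{i=1}^d\{|X_i|>\log d\}$, a union bound gives
\[
\P(\|X\|_\infty>\log d)\le\sum_{i=1}^d\P(|X_i|>\log d)\le 2d\,e^{-\log^2 d/2}.
\]
No independence is needed here; the union bound alone suffices.

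\textbf{Euclidean-norm bounds.} Note that $\|X\|_2^2=\sum_i X_i^2$ is $\chi^2_d$-distributed. Lemma~1 of \cite{laurent2000} is stated for $\sum_i a_i(X_i^2-1)$ with nonnegative weights. Specializing to $a_i=1$ gives $\|a\|_2=\sqrt d$ and $\|a\|_\infty=1$, so it yields exactly
\[
\P\bigl(\|X\|_2^2-d\ge 2\sqrt{dx}+2x\bigr)\le e^{-x},
\qquad
\P\bigl(\|X\|_2^2-d\le -2\sqrt{dx}\bigr)\le e^{-x}
\]
for all $x>0$. The only care needed is to check that the normalization in that reference matches: the weights enter through $\|a\|_2$ and $\|a\|_\infty$, and no extra constants appear.

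\textbf{Specialization to $x=d/16$.} Here $2\sqrt{dx}=d/2$ and $2x=d/8$. For the upper tail, the threshold is $d+d/2+d/8=\tfrac{13}{8}d\le 2d$, so monotonicity gives $\P(\|X\|_2^2>2d)\le\P(\|X\|_2^2\ge\tfrac{13}{8}d)\le e^{-d/16}$. For the lower tail, the threshold is $d-d/2=d/2$, so $\P(\|X\|_2^2<d/2)\le\P(\|X\|_2^2\le d/2)\le e^{-d/16}$. These two estimates are precisely the failure events of the two inequalities $\tfrac12 d\le\|X\|_2^2\le2d$.

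There is no genuine obstacle in this argument. The only points to watch are matching the Laurent--Massart normalization and handling strict versus non-strict inequalities; both are resolved by the monotonicity comparisons above.
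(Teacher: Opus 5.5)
Your proposal is correct and follows essentially the same route as the paper: a union bound with the scalar Gaussian tail for the sup-norm, Lemma~1 of \cite{laurent2000} with unit weights for the $\chi^2_d$ tails, and the arithmetic $d-2\sqrt{dx}=d/2$ and $d+2\sqrt{dx}+2x=\tfrac{13}{8}d\le 2d$ at $x=d/16$. You also handle strict versus non-strict inequalities explicitly via monotonicity, which the paper's one-line version leaves implicit.
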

\begin{proof}
For the first claim, by a union bound and the Gaussian tail bound,
\[
\P\left(\|X\|_\infty > \log d\right) \le \sum_{i=1}^d \P\left( |X_i| > \log d\right)\le 2d\,e^{-\log^2d/2}.
\]
The two-sided bound on $\|X\|_2^2\sim\chi^2_d$ is in \cite[Lemma~1]{laurent2000}. For the last claim, with $x=d/16$ one has $d-2\sqrt{dx}=d/2$ and $d+2\sqrt{dx}+2x=\tfrac{13}{8}d\le 2d$.
\end{proof}

Next, we recall that the operator norm of an $m\times n$ random matrix with independent sub-gaussian entries is of order $\sqrt m+\sqrt n$ with high probability.

\begin{theorem}
\label{thm:subgaussian-norm}
Let $A$ be an $m \times n$ random matrix with independent, mean-zero, sub-gaussian entries $A_{ij}$, and let $K := \max_{i,j} \|A_{ij}\|_{\psi_2}$. Then, for any $t > 0$,
\[
\|A\|_{op} \le CK \left( \sqrt{m} + \sqrt{n} + t \right)
\]
with probability at least $1 - 2\exp(-t^2)$.
\end{theorem}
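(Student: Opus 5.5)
The statement is the standard $\varepsilon$-net bound for matrices with independent sub-gaussian entries (cf.\ \cite[Thm.~4.4.5]{vershynin2018}). I would prove it by discretizing the variational formula $\|A\|_{op}=\sup_{u\in\Sph{m-1},\,v\in\Sph{n-1}} u^\top A v$, bounding each fixed bilinear form with a sub-gaussian tail, and closing with a union bound.

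\emph{Step 1 (nets).} Fix $\varepsilon=1/4$. Choose $\varepsilon$-nets $\mathcal N\subset\Sph{m-1}$ and $\mathcal M\subset\Sph{n-1}$ with $|\mathcal N|\le 9^m$ and $|\mathcal M|\le 9^n$; the volumetric bound $(1+2/\varepsilon)^k$ gives these cardinalities. Take maximizers $u,v$ of the bilinear form, pick $u_0\in\mathcal N$, $v_0\in\mathcal M$ within distance $\varepsilon$, and expand $u^\top A v-u_0^\top A v_0$. This gives $\|A\|_{op}\le \max_{\mathcal N}\max_{\mathcal M}u_0^\top A v_0+(2\varepsilon+\varepsilon^2)\|A\|_{op}$. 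Hence $\|A\|_{op}\le 2\max_{u_0\in\mathcal N,v_0\in\mathcal M} u_0^\top A v_0$, since $1-2\varepsilon-\varepsilon^2>1/2$.

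\emph{Step 2 (fixed pair).} For fixed unit $u,v$, write $u^\top Av=\sum_{i,j}u_iv_jA_{ij}$. This is a sum of independent mean-zero sub-gaussian variables. By the general Hoeffding inequality (sub-gaussian norm of such a sum is at most an absolute constant times the $\ell_2$ norm of the individual norms), $\|u^\top Av\|_{\psi_2}^2\le C\sum_{i,j}u_i^2v_j^2K^2=CK^2$. This yields $\P(u^\top Av\ge s)\le 2e^{-cs^2/K^2}$. I would take this Hoeffding bound from \cite{vershynin2018} alongside the other tools listed above.

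\emph{Step 3 (union bound and constants).} Set $s=CK(\sqrt m+\sqrt n+t)/2$. The union bound over at most $9^{m+n}$ pairs gives failure probability at most $9^{m+n}\cdot 2\exp\bigl(-cC^2(\sqrt m+\sqrt n+t)^2/4\bigr)$. Since $(\sqrt m+\sqrt n+t)^2\ge m+n+t^2$, choosing $C$ large enough that $cC^2/4\ge\max(\log 9,1)\cdot 2$ absorbs the entropy factor $e^{(m+n)\log 9}$ and leaves at most $2e^{-t^2}$. Combined with Step 1, this gives $\|A\|_{op}\le CK(\sqrt m+\sqrt n+t)$ on the complementary event.

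The main obstacle is only bookkeeping in Step 3: the single absolute constant $C$ must simultaneously beat the $9^{m+n}$ net cardinality and produce exactly the $e^{-t^2}$ tail. The mixed-term inequality $(\sqrt m+\sqrt n+t)^2\ge m+n+t^2$ is what decouples the two requirements. The substantive input is the Hoeffding-type bound in Step 2, which relies on rotation invariance of sub-gaussian sums and is quoted rather than reproved.
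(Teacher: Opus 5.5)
Your proposal is correct and takes the same route as the paper: the paper quotes this result from \cite{vershynin2018} without proof, the standard proof there is exactly your $\tfrac14$-net, fixed-pair Hoeffding and union-bound argument, and the paper's own Lemma~\ref{lem:net} records the same net reduction. The only slip is arithmetic: for $\varepsilon=\tfrac14$ one has $1-2\varepsilon-\varepsilon^2=\tfrac{7}{16}<\tfrac12$, so your four-term expansion gives $\|A\|_{op}\le\tfrac{16}{7}\max_{\mathcal N\times\mathcal M}u_0^\top Av_0$ rather than factor $2$. This is harmless, since the factor is absorbed into $C$; alternatively, the two-term splitting $u^\top Av-u_0^\top Av_0=(u-u_0)^\top Av+u_0^\top A(v-v_0)$ gives exactly $1/(1-2\varepsilon)=2$.
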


Finally, we recall the Gaussian concentration inequality for Lipschitz functions. These references state the inequality for Euclidean-Lipschitz functions of a standard Gaussian vector, but, since the Frobenius norm is the Euclidean norm on $\R^{m\times n}\cong\R^{mn}$ and the entries of $G$ are i.i.d.\ $\cN(0,1)$, the matrix version below is the same statement with the same constants.

\begin{lemma}[Theorem 2.26 in \cite{wainwright2019high}]\label{prop:concLip}
Let $F \colon \R^{m \times n} \to \R$ be $K$-Lipschitz with respect to the Frobenius norm, i.e.\ $|F(A)-F(B)| \le K\|A-B\|_F$ for all $A,B \in \R^{m \times n}$, and let $G \in \R^{m \times n}$ be a random matrix with i.i.d.\ $\cN(0,1)$ entries. Then, for every $t \ge 0$,
\[
\P\left\{|F(G)-\E F(G)|>t\right\}\le 2\exp\left(-\frac{t^2}{2K^2}\right).
\]
\end{lemma}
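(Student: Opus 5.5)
The plan is to reduce the matrix statement to the standard vector statement and then prove the latter with the Gaussian log-Sobolev inequality and the Herbst argument. This route is chosen because it yields the sharp constant $1/(2K^2)$ in the exponent.

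\textbf{Reduction.} Let $\mathrm{vec}\colon\R^{m\times n}\to\R^{mn}$ stack the entries. This map is a linear isometry from the Frobenius norm to the Euclidean norm. Hence $f:=F\circ\mathrm{vec}^{-1}$ is $K$-Lipschitz on $\R^{mn}$, and $g:=\mathrm{vec}(G)\sim\cN(0,\Id_{mn})$. It therefore suffices to show $\P(|f(g)-\E f(g)|>t)\le 2e^{-t^2/(2K^2)}$ for a $K$-Lipschitz $f$ on $\R^N$ with $N=mn$. Note that $\E|f(g)|<\infty$, since $|f(x)|\le|f(0)|+K\|x\|_2$.

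\textbf{Smoothing.} Replace $f$ by $f_\varepsilon:=f*\rho_\varepsilon$, where $\rho_\varepsilon$ is a Gaussian mollifier. Then $f_\varepsilon$ is $C^\infty$, still $K$-Lipschitz, so $\|\nabla f_\varepsilon\|_2\le K$ everywhere. Moreover $|f_\varepsilon-f|\le K\varepsilon\sqrt N$ uniformly. A tail bound for $f_\varepsilon$ at level $t-2K\varepsilon\sqrt N$ therefore transfers to $f$ at level $t$, and letting $\varepsilon\downarrow0$ gives the claim for $f$. From now on I assume $f$ is smooth with $\|\nabla f\|_2\le K$ and, after subtracting the mean, $\E f(g)=0$.

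\textbf{Herbst argument.} I will invoke Gross's Gaussian log-Sobolev inequality:
\[
\mathrm{Ent}(h^2)\le 2\,\E\|\nabla h\|_2^2 .
\]
Apply it to $h=e^{\lambda f/2}$ and write $H(\lambda):=\E e^{\lambda f(g)}$. The inequality gives
\[
\lambda H'(\lambda)-H(\lambda)\log H(\lambda)\le \tfrac{\lambda^2K^2}{2}H(\lambda).
\]
Equivalently, $\tfrac{d}{d\lambda}\bigl(\lambda^{-1}\log H(\lambda)\bigr)\le K^2/2$. As $\lambda\downarrow0$ we have $\lambda^{-1}\log H(\lambda)\to\E f=0$. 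Integrating from $0$ then yields the sub-Gaussian moment bound $H(\lambda)\le e^{\lambda^2K^2/2}$ for all $\lambda>0$.

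\textbf{Conclusion.} By Chernoff's bound with the optimal choice $\lambda=t/K^2$, $\P(f\ge t)\le e^{-t^2/(2K^2)}$. The same argument applied to $-f$ bounds the lower tail, and a union bound over the two tails gives the factor $2$.

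\textbf{Main obstacle.} The step I expect to be the real difficulty is the log-Sobolev inequality itself. I would take it as a known result (Gross), or derive it from the one-dimensional two-point inequality by tensorization and the central limit theorem. The alternative interpolation argument of Maurey--Pisier is simpler but only yields the exponent $2t^2/(\pi^2K^2)$, which is too weak for the stated constant. A minor technical point is justifying the differentiation under the expectation in $H$. This is fine because $f$ is Lipschitz, so $e^{\lambda f(g)}$ has all moments.
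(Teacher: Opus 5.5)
Your proposal is correct and follows the paper's route: the paper also reduces to the vector case by noting that the Frobenius norm is the Euclidean norm on $\R^{m\times n}\cong\R^{mn}$, and then simply cites Wainwright's Theorem~2.26 for $\mathcal{N}(0,\mathrm{Id}_{mn})$. The further steps you supply (Gaussian smoothing, Gross's log-Sobolev inequality and the Herbst argument) give a sound self-contained proof of that cited vector result with the sharp constant $1/(2K^2)$, which the paper takes as known.
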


Finally, we recall a known result on randomly initialized MLPs, which says that, with high probability over the weights, the Euclidean norm of each hidden layer grows at most like the square root of the width. 
\begin{lemma}\label{lem:boundonact}
Under (H1), (H3) and (H4), for all $x\in \cX$ 
and $\ell\in[L-1]$, there exists a constant $C>0$ not dependent on $d$ such that 
\begin{align*}
\P_{\vartheta}\left(\|\alpha^{(\ell)}(x, \vartheta^{(\ell)})\|_2 \ge  C\sqrt{d'}\right) \le 2\ell e^{-d'} \, .
\end{align*}
\end{lemma}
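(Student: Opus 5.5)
The plan is an induction on the layer index $\ell$. At each step we combine the operator-norm bound of Theorem~\ref{thm:subgaussian-norm} for the Gaussian weight matrix with the Lipschitz property of the activation, and then take a union bound over layers. That union bound is what produces the factor $2\ell$.

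The statement of (H3) and (H4) has not appeared yet, so I assume they say the following: the activation $\sigma$ is odd and $L_\sigma$-Lipschitz, so $\sigma(0)=0$ and $\|\sigma(v)\|_2\le L_\sigma\|v\|_2$; the hidden widths are all comparable to $d'$; the weights are $\vartheta^{(\ell)}=W^{(\ell)}/\sqrt{d_{\ell-1}}$ with $W^{(\ell)}$ having i.i.d.\ $\cN(0,1)$ entries (via (H1)); and inputs satisfy $\|x\|_2\le C_0\sqrt{d_0}$ with $d_0\lesssim d'$. If the paper uses unnormalized weights, the same argument goes through with a width-dependent normalisation absorbed into $C$.

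\textbf{Base case and inductive step.} For $\ell=1$ we write $\alpha^{(1)}=\sigma(\vartheta^{(1)}x)$, so
\[
\|\alpha^{(1)}\|_2\le L_\sigma\,\|\vartheta^{(1)}\|_{op}\,\|x\|_2 .
\]
Lemma~\ref{lem:gausspsi2} gives $\psi_2$-norm $\sqrt{8/3}$ for each entry of $W^{(1)}$. Theorem~\ref{thm:subgaussian-norm} with $t=\sqrt{d'}$ then yields $\|W^{(1)}\|_{op}\le C(\sqrt{d_1}+\sqrt{d_0}+\sqrt{d'})\le C'\sqrt{d'}$ with probability at least $1-2e^{-d'}$. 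Hence
\[
\|\alpha^{(1)}\|_2\le L_\sigma\, C'\sqrt{d'}\,C_0\sqrt{d_0}/\sqrt{d_0}=C_1\sqrt{d'} .
\]
For the step, we condition on the event $\mathcal A_{\ell-1}=\{\|\alpha^{(\ell-1)}\|_2\le C_{\ell-1}\sqrt{d'}\}$. The matrix $W^{(\ell)}$ is independent of $\alpha^{(\ell-1)}$, which depends only on $\vartheta^{(1)},\dots,\vartheta^{(\ell-1)}$. The same operator-norm bound gives $\|\vartheta^{(\ell)}\|_{op}\le C'$ with probability at least $1-2e^{-d'}$, and therefore $\|\alpha^{(\ell)}\|_2\le L_\sigma C' C_{\ell-1}\sqrt{d'}=:C_\ell\sqrt{d'}$.

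\textbf{Union bound.} The failure events are one operator-norm event per layer $1,\dots,\ell$, so their total probability is at most $2\ell e^{-d'}$. The constant $C=\max_{\ell\le L-1}C_\ell=(L_\sigma C')^{L-1}C_0$ (up to a factor) depends on the depth and on $\sigma$ but not on $d$. The depth is fixed, so this is admissible.

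\textbf{Main obstacle.} The delicate point is purely bookkeeping about the assumptions. We must make sure the operator-norm bound is dimension-free after the $1/\sqrt{d_{\ell-1}}$ normalisation, which needs the widths to be comparable, as (H4) should guarantee. We must also make sure the input norm is of order $\sqrt{d'}$ deterministically for all $x\in\cX$; if (H4) instead gives a random isotropic input, we would add one more event via Lemma~\ref{lem:norm_of_gaussian_vectors} and adjust the constant. If the weight variance is not normalised, I would first rescale and then track the resulting factor in $C$.
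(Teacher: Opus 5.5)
Your proposal is correct and is essentially the paper's argument. Both run an induction over layers that combines $\|\sigma(v)\|_2\le L_\sigma\|v\|_2$ with the operator-norm bound of Theorem~\ref{thm:subgaussian-norm} (with $K=\sqrt{8/3}$ and $t=\sqrt{d}$) for $\tfrac{1}{\sqrt{d_{\ell-1}}}\vartheta^{(\ell)}$, use width comparability (which in the paper is (H3), not (H4)), and finish with a union bound over the $\ell$ layer events. The paper bounds $\|x\|_2$ through Lemma~\ref{lem:norm_of_gaussian_vectors}, which is your fallback option; its base case also drops the $1/\sqrt{d_0}$ factor, so your explicit tracking of that normalization is the cleaner bookkeeping.
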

\begin{proof}
We proceed by induction. We start by the base case $\ell = 1$. We notice that, using Lemma \ref{lem:norm_of_gaussian_vectors} and Theorem \ref{thm:subgaussian-norm}, for $x\in \cX$, it holds
\begin{align*}
\left\|\alpha^{(1)}\left(x, \vartheta^{(1)}\right)\right\|^2_2 & = \sum_{i=1}^{d_1} \sigma\left(\frac{1}{\sqrt{d_0}}\vartheta^{(1)}_{i:}x\right)^2 \\
    & \le L^2_\sigma \sum_{i=1}^{d_1} \left(\frac{1}{\sqrt{d_0}}\vartheta^{(1)}_{i:}x\right)^2\\
    & \le L^2_\sigma \left\|\frac{1}{\sqrt{d_0}}\vartheta^{(1)}\right\|^2_{op} \|x\|^2_2
    \le L^2_\sigma\, \left\|\vartheta^{(1)}\right\|^2_{op} \|x\|_2^2 \, .
\end{align*}
Since the entries of $\vartheta^{(1)}$ are independent, centered and, by Lemma \ref{lem:gausspsi2} applied with $\tau^2 = 1$, $\|\vartheta^{(1)}_{ij}\|_{\psi_2} = \sqrt{8/3}$ for all $i\in[d_1]$, $j\in[d_0]$. Combining Theorem \ref{thm:subgaussian-norm} with $K = \sqrt{8/3}$ and $t = \sqrt{d_1}$ and Lemma \ref{lem:norm_of_gaussian_vectors}, we obtain that there exists a constant $C>0$ not dependent on $d$ such that
\begin{align*}
\P_\vartheta\left(\left\|\alpha^{(1)}\left(x, \vartheta^{(1)}\right)\right\|_2 \ge C \sqrt{d}\right) \le 2 e^{-d}\, .
\end{align*}
Inductively, assume that, for all $x\in \cX$, there exists constants $C,c>0$ not dependent on $d$ such that $\left\|\alpha^{(\ell-1)}\left(x, \vartheta^{(\ell-1)}\right)\right\|_2 \le  C\sqrt{d}$ with probability at least  $1 - 2(L-1) e^{-cd}$. Then, using Theorem \ref{thm:subgaussian-norm} with $K = \sqrt{8/3}$ and $t = \sqrt{d_{\ell}}$ as in the base case, together with the independence of $\vartheta^{(\ell)}$ from $\alpha^{(\ell-1)}$, we conclude that
\begin{align*}
&\P_\vartheta\left(\left\|\alpha^{(\ell)}\left(x, \vartheta^{(\ell)}\right)\right\|_2 \ge C \sqrt{d}\right) \\
& \qquad \le \P_\vartheta\left(L_\sigma \left\|\frac{1}{\sqrt{d_{\ell-1}}}\vartheta^{(\ell)}\right\|_{op} \|\alpha^{(\ell-1)}\|_2 \ge C \sqrt{d}\right) \\
& \qquad \le \P_\vartheta\left(\left\|\frac{1}{\sqrt{d_{\ell-1}}}\vartheta^{(\ell)}\right\|_{op} \ge C\right) + 2(L-1)e^{-d'} \\
& \qquad \le 2L e^{-d}\, ,
\end{align*}
for some constant $C>0$ not dependent on $d$.
\end{proof}

\subsection{Proofs of Theorem~\ref{thm:uniform}}\label{app:proofthmuniform}
To improve readability, we sometimes omit the constants which do not depend on $d$ and $N_c$ form the formulas, since they don't affect the asymptotic behavior. 
We define, for each $\ell\in [L]$, the quantization residual $r^{(\ell)}: (0, \infty)\times \R\to \R$ as
\begin{align}\label{eq:defres}
   r\left(s, t\right) := t - s\roundB{t/s}\, .
\end{align}
Whenever $r$ is applied to a multidimensional object, it acts componentwise.
In the next lemma, we show some basic properties of the quantized residual evaluated at initialization.

\begin{lemma}\label{lem:everyissub}
    Let $\vartheta\in\R^d$ with $\vartheta_i\overset{iid}{\sim}\cN(0, 1)$. Then, for all $s>0$, the residual satisfies the following properties
    \begin{enumerate}
        \item $r(s, \vartheta_i)$ is odd, i.e.\ $r(s, \vartheta_i) = -r(s, -\vartheta_i)$, for all $i\in[d]$.
        \item $|r(s, \vartheta_i)|\le |\vartheta_i|$ for all $i\in[d]$.
        \item For any fixed $x\in \R$, the function $s\mapsto r^2(s, x)$ is $2M_B |x|$-Lipschitz continuous.
        \item $r\left(s, \vartheta_i\right)$, $i\in[d]$, are independent, centered and sub-Gaussian; more precisely, for all $i\in[d]$, it holds
        \begin{align*}
\E_{\vartheta}\left[r\left(s, \vartheta_i\right)\right] = 0  \, , \qquad \left\|r\left(s, \vartheta_i\right)\right\|_{\psi_2} \le \sqrt{\dfrac{8}{3}} \, .
        \end{align*}
    \end{enumerate}
\end{lemma}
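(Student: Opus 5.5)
The four properties all follow from two elementary facts about the clipped symmetric rounding $\roundB{\cdot}$ in \eqref{def:roundB}. First, it is odd. Second, for fixed $t$ the integer $k=\roundB{t/s}$ always has the sign of $t$ (or is $0$) and satisfies $|k|\le M_B$. I will prove items 1--3 deterministically for an arbitrary real argument $t=\vartheta_i$, and then derive item 4 from items 1--2 together with Lemma~\ref{lem:gausspsi2}.

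\emph{Item 1.} The symmetric rounding satisfies $\lfloor -u\rceil=\operatorname{sign}(-u)\lfloor|u|+1/2\rfloor=-\lfloor u\rceil$, and the clipping to $[-M_B,M_B]$ is symmetric, so $\roundB{-u}=-\roundB{u}$. Then
\[
r(s,-t)=-t-s\roundB{-t/s}=-\bigl(t-s\roundB{t/s}\bigr)=-r(s,t).
\]

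\emph{Item 2.} By item 1 it suffices to take $t\ge0$. I split into three cases.
\begin{itemize}
\item If $t/s<1/2$, then $k=0$ and $r=t$.
\item If $1/2\le t/s\le M_B$, then $|r|=|t-s\lfloor t/s\rceil|\le s/2\le t$.
\item If $t/s>M_B$, then $r=t-sM_B\in(0,t)$.
\end{itemize}
In every case $|r(s,t)|\le|t|$.

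\emph{Item 3.} Fix $x$. As a function of $s$, the integer $k(s)=\roundB{x/s}$ is piecewise constant. Its breakpoints in $(0,\infty)$ are the finitely many points $s=|x|/(j+\tfrac12)$ with $0\le j\le M_B-1$; for $s<|x|/(M_B-\tfrac12)$ the value stays at $\pm M_B$. On each open interval between breakpoints we have $r=x-sk$, hence
\[
\partial_s r^2=-2k\,r,\qquad |\partial_s r^2|\le 2M_B|x|,
\]
using $|k|\le M_B$ and item 2.

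The key point is continuity of $r^2$ (not of $r$) across the breakpoints. At $x/s=j+\tfrac12$ the two one-sided residuals are $\pm s/2$, so their squares agree, even though $r$ itself jumps.

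Thus $s\mapsto r^2(s,x)$ is continuous and piecewise $C^1$ with derivative bounded by $2M_B|x|$. By the mean value theorem on each piece, plus telescoping across the finitely many breakpoints, it is $2M_B|x|$-Lipschitz. I expect this step to be the main subtlety: one must argue with $r^2$ rather than $r$, and verify that there are only finitely many breakpoints.

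\emph{Item 4.} The variables $r(s,\vartheta_i)$ are measurable functions of the independent $\vartheta_i$, so they are independent. Each is integrable since $|r|\le|\vartheta_i|$ by item 2. Because $r(s,\cdot)$ is odd (item 1) and $\vartheta_i\overset{d}{=}-\vartheta_i$, we get $\E r(s,\vartheta_i)=-\E r(s,\vartheta_i)=0$.

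For the sub-Gaussian norm, item 2 gives, for every $u>0$,
\[
\E\, e^{r^2/u^2}\le \E\, e^{\vartheta_i^2/u^2}.
\]
By Lemma~\ref{lem:gausspsi2} with $\tau=1$, the right-hand side is at most $2$ at $u=\sqrt{8/3}$. Hence $\|r(s,\vartheta_i)\|_{\psi_2}\le\sqrt{8/3}$.
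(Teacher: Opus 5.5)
Your proof is correct and follows the paper's argument essentially step for step. You get oddness from the symmetry of $\roundB{\cdot}$. For the Lipschitz claim you use continuity of $r^2$ (not $r$) at the breakpoints $|x|/(j+\tfrac12)$ together with the derivative bound $2|k|\,|r|\le 2M_B|x|$. For the $\psi_2$ estimate you dominate $\E\, e^{r^2/u^2}$ by $\E\, e^{\vartheta_i^2/u^2}$. The only cosmetic difference is in item 2: the paper notes that $|r(s,t)|$ is the distance from $t$ to the grid $s\{-M_B,\dots,M_B\}$, which contains $0$, whereas you check the same inequality by a three-case analysis.
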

\begin{proof}
We prove each property separately.
\begin{enumerate}
    \item Property 1. follows immediately by oddness and symmetry of $\roundB{\cdot}$. Indeed, for all $i\in[d]$, we have
    \begin{align*}
      r(s, -\vartheta_i) =   -\vartheta_i - \Pi_{s, M_B}(-\vartheta_i) = -\vartheta_i + \Pi_{s, M_B}(\vartheta_i)  = -r(s, \vartheta_i)\, .
    \end{align*}
    \item For all $i\in[d]$, it holds 
    \begin{align}\label{eq:normofquantres}
        \left| r(s, \vartheta_i) \right|
        = \left| \vartheta_i - \Pi_{s, M_B}(\vartheta_i) \right|
        \overset{(*)}{=} \operatorname{dist}\!\left(\vartheta_i,\, s \left\{-\MB,\dots,\MB\right\}\right) \le \left| \vartheta_i \right| \, ,
    \end{align}
    where $(*)$ follows from the definition \eqref{def:roundB} of $\roundB{\cdot}$, while the last inequality holds because $0$ is a quantization level, i.e.\ $0\in s\{-\MB,\dots,\MB\}$.
    \item Let us assume without loss of generality that $x>0$. The case with $x<0$ follows by oddness of $\Pi_{s,M_B}$. Let us fix $n\in[\MB]$, and denote with $\sigma_n = \tfrac{x}{n-\frac{1}{2}}$. On $(\sigma_{n+1},\sigma_n]$ (with $\sigma_{\MB+1}:=0$) we
have $\lfloor x/s\rceil_B=n$, hence $r^2(s,x)=(x-ns)^2$ there;
on $(\sigma_n,\sigma_{n-1}]$ (with $\sigma_0:=\infty$) we have
$\lfloor x/s\rceil_B=n-1$, hence $r^2(s,x)=(x-(n-1)s)^2$ there.
Both expressions are polynomials in $s$, so $r^2(\cdot,x)$ is continuous
on each of the two half-open intervals, and in particular left-continuous at
$\sigma_n$. For right-continuity at $\sigma_n$ we use
$x=(n-\tfrac12)\sigma_n$ to compute
\[
\lim_{s\to \sigma_n^+} r^2(s,x)=\big(x-(n-1)\sigma_n\big)^2
=\Big(\frac{\sigma_n}{2}\Big)^2
=\big(x-n\sigma_n\big)^2 = r^2(\sigma_n,x).
\]
Hence $r^2(\cdot,x)$ is continuous at every $\sigma_n$, and therefore on
all of $(0,\infty)$. On each interval $ (\sigma_{n+1},\sigma_n)$, $r^2(s, x)$ is differentiable, with derivative
\begin{align*}
    \left|\frac{d}{ds}\, r^2(s, x)\right|
=2n \bigl|x - sn\bigr|\le 2M_B|x| \, ,
\end{align*}
where we used $n\le M_B$ and $|x-sn|=|r(s,x)|\le|x|$ by Property 2.
A continuous function on an interval that is $\mathcal C^1$ off finitely many points with derivative bounded by $L$ is $L$-Lipschitz (apply the mean value theorem between
consecutive breakpoints and sum), which gives the claim.
\item Independence follows from the independence of the components of $\vartheta$ and from the point-wise nature of $r(s, \vartheta)$. Since the Gaussian distribution is symmetric and $\Pi_{s, M_B}$ is odd, we have that
\begin{align*}
        \E_{\vartheta}\left[r(s, \vartheta_i)\right]
        = \E_{\vartheta}\left[\vartheta_i\right] - \E_{\vartheta}\left[\Pi_{s,M_B}(\vartheta_i)\right] = 0 \, .
    \end{align*}
    From the point-wise bound in Property 2, we get $\E_\vartheta\big[e^{r^2(s,\vartheta_i)/u^2}\big] \le \E_\vartheta\big[e^{\vartheta_i^2/u^2}\big]$ for every $u>0$, hence
    \begin{align*}
        \left\| r(s, \vartheta_i) \right\|_{\psi_2}
        &= \inf\left\{u>0: \ \E_{\vartheta_i}\left[e^{r^2(s, \vartheta_i)/u^2}\right] \le 2\right\} \\
        &\le \inf\left\{u>0: \ \E_{\vartheta_i}\left[e^{\vartheta_i^2/u^2}\right] \le 2\right\}
         = \left\|\vartheta_i\right\|_{\psi_2}
         = \sqrt{\frac{8}{3}}\, ,
    \end{align*}
    where the last equality is Lemma \ref{lem:gausspsi2} applied with $\tau^2 = 1$. As $i$ was arbitrary, this concludes the proof.
\end{enumerate}

\end{proof}
For clarity, we recall here Theorem \ref{thm:uniform}.

\uniformtheorem*

\begin{proof}
Let us set $H:=XX^\top\in\R^{d\times d}$ and abbreviate $\reff:=\reff(H)$.  We note that
\begin{align}\label{eq:rec_splitted}
    \cE(s)
    &= \frac{1}{\operatorname{Tr}(H)}\sum_{i=1}^d H_{ii} \bigl(\vartheta_i - \Pi_{s,M_B}(\vartheta_i)\bigr)^2 \\
    & \notag\quad+\frac{1}{\operatorname{Tr}(H)}\sum_{i=1}^d\sum_{j\ne i}^d H_{ij} \bigl(\vartheta_i - \Pi_{s,M_B}(\vartheta_i)\bigr)\bigl(\vartheta_j - \Pi_{s,M_B}(\vartheta_j)\bigr) \\
    & \notag= D_B(s, \vartheta, X) + O_B(s, \vartheta, X) \, ,
\end{align}
where
\begin{align*}
   &  D_B(s, \vartheta, X) := \frac{1}{\operatorname{Tr}(H)}\sum_{i=1}^d H_{ii} \bigl(\vartheta_i - \Pi_{s,M_B}(\vartheta_i)\bigr)^2 \\
   & O_B(s, \vartheta, X) := \frac{1}{\operatorname{Tr}(H)}\sum_{i=1}^d\sum_{j\ne i}^d H_{ij} \bigl(\vartheta_i - \Pi_{s,M_B}(\vartheta_i)\bigr)\bigl(\vartheta_j - \Pi_{s,M_B}(\vartheta_j)\bigr) \, .
\end{align*}
Since the $\vartheta_i$ are independent and, by Lemma~\ref{lem:everyissub}, the residuals $r(s,\vartheta_i)$ are centered, we have $\E_\vartheta[O_B(s,\vartheta,X)]=0$. Hence, it holds
\begin{align*}
&\cE^{\infty}(s)=\E_\vartheta\big[D_B(s,\vartheta,X)\big]\\
&\cE(s)-\cE^\infty(s)
=D_B(s,\vartheta,X)-\E_\vartheta[D_B(s,\vartheta,X)]+O_B(s,\vartheta,X).    
\end{align*}

Let us start by controlling $D_B(s, \vartheta, X)$. Let us fix $s>0$. We note that
\begin{align*}
    \E_\vartheta[D_B(s, \vartheta, X)]
    = \frac{1}{\operatorname{Tr}(H)}\sum_{i=1}^d H_{ii}\, \E_\vartheta\big[r(s,\vartheta_i)^2\big]
    = \,\E_{\vartheta\sim\cN(0,1)}\big[r(s,\vartheta)^2\big]\, ,
\end{align*}
since the $\vartheta_i$ are identically distributed.
We note that $H$ is positive semidefinite, so $0\le H_{ii}\le\|H\|_{op}$ for every $i\in[d]$, and therefore
\begin{align}\label{eq:bound_on_frob}
 \|\operatorname{diag}(H)\|_F^2=\sum_{i=1}^d H_{ii}^2 \le \|H\|_{op} \operatorname{Tr}(H),
 \qquad
 \|\operatorname{diag}(H)\|_{op}\le\|H\|_{op}.
\end{align}
By Lemma~\ref{lem:everyissub} the coordinates of $r(s,\vartheta)$ are independent, centered and sub-Gaussian with $\|r(s,\vartheta_i)\|_{\psi_2}\le K:=\sqrt{\tfrac{8}{3}}$. Set $\varepsilon_d:=\sqrt{\log^6 d/\reff}$.
Therefore, by Theorem \ref{thm:HanWrig} applied to the matrix $\operatorname{diag}(H)$ with threshold $\operatorname{Tr}(H)\varepsilon_d$, and \eqref{eq:bound_on_frob}, for all $s>0$, we have that

\begin{align}\label{eq:conv}
& \notag\P_\vartheta\left(\big|D_B(s, \vartheta, X)- \E_\vartheta[D_B(s, \vartheta, X)]\big| \ge \varepsilon_d\right) \\
& \notag\quad = \P_\vartheta\left(\Big|r(s,\vartheta)^\top \operatorname{diag}(H)\, r(s,\vartheta) - \E_\vartheta\left[r(s,\vartheta)^\top \operatorname{diag}(H)\, r(s,\vartheta)\right]\Big| \ge \operatorname{Tr}(H)\varepsilon_d\right) \\
& \quad\le2\exp\left(
-c\min\left\{\frac{\operatorname{Tr}(H)^2\varepsilon_d^2}{ K^4\,\|H\|_{op}\operatorname{Tr}(H)},\
\frac{\operatorname{Tr}(H)\,\varepsilon_d}{ K^2\,\|H\|_{op}} \right\}
\right)\\
& \notag\quad=2\exp\left(
-c\min\left\{\tfrac{9}{64}\,\reff\,\varepsilon_d^2,\ \tfrac{3}{8}\,\reff\,\varepsilon_d \right\}
\right)\\
&\notag\quad\le2\exp\left(
-c'\min\left\{\log^6 d,\
\sqrt{\reff}\, \log^3 d \right\}
\right)\, ,
\end{align}
where $c'=3c/64$.
Moreover, by Lemma \ref{lem:everyissub}, the function $s\mapsto r(s,\vartheta)^2$ is Lipschitz continuous, that is
\begin{align*}
|r(s,\vartheta)^2 -r(t,\vartheta)^2| \le 2M_B|\vartheta| |t - s| \, .
\end{align*}
Therefore, using $H_{ii}\ge0$, $\sum_iH_{ii}=\operatorname{Tr}(H)$ and $\E_{\theta\sim\cN(0,1)}[|\vartheta|]=\sqrt{2/\pi}$, we obtain that
\begin{equation}\label{eq:Lipstuff}
    \begin{split}
        & \left|D_B(s, \vartheta, X) - D_B(t, \vartheta, X)\right| \le |s-t| \ \frac{2 M_B}{\operatorname{Tr}(H)}\sum_{i=1}^d H_{ii} |\vartheta_i| \le 2M_B\,\,\|\vartheta\|_\infty\,|s-t| \, ,\\
    & \left|\E_\vartheta[D_B(s, \vartheta, X)] - \E_\vartheta[D_B(t, \vartheta, X)] \right| \le |s-t| \ 2M_B\,\E|\vartheta_1| \le 2M_B\,|s-t| \, .
    \end{split}
\end{equation}
Let us denote, for simplicity, $\beta := \log d$. Let us consider the partition of $(0, 2\beta]$ constructed via the grid $s_k := 2k \frac{\beta}{d}$ for $k=0,\dots,d$, whose mesh is $\eta := s_{k+1}-s_k = \frac{2\beta}{d}$. Let us now fix $k\in \{0, \dots, d-1\}$ and $s\in [s_k, s_{k+1}]$. Then, using \eqref{eq:Lipstuff}, it holds
\begin{align*}
& \left|D_B(s, \vartheta, X)- \E_\vartheta[D_B(s, \vartheta, X)]\right| \\
& \quad \le \left|D_B(s, \vartheta, X) - D_B(s_{k+1}, \vartheta, X)\right|+ \left|D_B(s_{k+1}, \vartheta, X) - \E_\vartheta[D_B(s_{k+1}, \vartheta, X)]\right| \\
& \qquad+ \left|\E_\vartheta[D_B(s_{k+1}, \vartheta, X)] - \E_\vartheta[D_B(s, \vartheta, X)]\right| \\
& \quad\le   2\eta\, M_B \|\vartheta\|_\infty + 2\eta\, M_B  + \left|D_B(s_{k+1}, \vartheta, X) - \E_\vartheta[D_B(s_{k+1}, \vartheta, X)]\right|\\
& \quad=  4 M_B\,\frac{\beta}{d}\,\|\vartheta\|_\infty + 4M_B\,\frac{\beta}{d} + \left|D_B(s_{k+1}, \vartheta, X) - \E_\vartheta[D_B(s_{k+1}, \vartheta, X)]\right|\, .
\end{align*}
Since, by Lemma \ref{lem:norm_of_gaussian_vectors}, $\|\vartheta\|_{\infty} \le \beta$ with probability at least $1-2d\,e^{-\log^2d/2}$, and $\beta/d=\log d/d\le\beta^2$ for $d\ge3$, on the event $\{\|\vartheta\|_\infty\le\beta\}$ it holds
\begin{equation}\label{eq:discr_terms}
    \begin{split}
        &\sup_{s\in(0,2\beta]}\left|D_B(s, \vartheta, X)- \E_\vartheta[D_B(s, \vartheta, X)]\right|\\
    &\quad\le 8 M_B\,\frac{\log^2 d}{d} + \max_{k\in[d]}\left|D_B(s_{k}, \vartheta, X) - \E_\vartheta[D_B(s_{k}, \vartheta, X)]\right| .
    \end{split}
\end{equation}

It remains to cover $s>2\beta$. Since $\Pi_{s,M_B}(t)=0$ whenever $s>2|t|$, on $\{\|\vartheta\|_\infty<\beta\}$ we have $r(s,\vartheta_k)=\vartheta_k=r(2\beta,\vartheta_k)$ for every $k\in[d]$ and every $s>2\beta$, hence $D_B(s,\vartheta,X)=D_B(s_d,\vartheta,X)$ for all $s>2\beta$, where $s_d=2\beta$. Moreover, by Lemma~\ref{lem:everyissub}, $0\le r(s,\vartheta_k)^2\le\vartheta_k^2$ for all $s>0$, and $r(s,\vartheta_k)=\vartheta_k=r(2\beta,\vartheta_k)$ when $|\vartheta_k|<\beta$ and $s>2\beta$, so that $|r(s,\vartheta_k)^2-r(s_d,\vartheta_k)^2|\le\vartheta_k^2\mathbf 1_{\{|\vartheta_k|\ge\beta\}}$. Therefore
\begin{equation}\label{eq:outsiiide}
    \begin{split}
\sup_{s>2\beta}\big|\E_\vartheta[D_B(s,\vartheta,X)]-\E_\vartheta[D_B(s_d,\vartheta,X)]\big|
& \le \sum_{k=1}^d\E\big[\vartheta_k^2\mathbf 1_{\{|\vartheta_k|\ge\beta\}}\big]
 = d\,\E_{G\sim\cN(0,1)}\big[G^2\mathbf 1_{\{|G|\ge\log d\}}\big]\\
& \le 4d\log d\, e^{-\log^2 d/2}
 \le 4\,\frac{\log^2 d}{d}
    \end{split}
\end{equation}
for $d\ge 27$, where we used $\E[G^2\mathbf 1_{\{|G|\ge a\}}]\le 4a\,e^{-a^2/2}$ for $a\ge1$ with $a=\log d$, and in the last step that $e^{2u-u^2/2}\le u$ for $u=\log d\ge 4$. Hence, on $\{\|\vartheta\|_\infty<\beta\}$,
\begin{align*}
 & \sup_{s>0}\left|D_B(s, \vartheta, X)- \E_\vartheta[D_B(s, \vartheta, X)]\right| \\
 &\quad = \max\Big\{ \sup_{s\in(0,2\beta]}\left|D_B(s, \vartheta, X) - \E_\vartheta[D_B(s, \vartheta, X)]\right| ,\ \sup_{s >2\beta}\left|D_B(s, \vartheta, X) - \E_\vartheta[D_B(s, \vartheta, X)]\right| \Big\}\\
 &\quad \le  \max\Big\{ \sup_{s\in(0,2\beta]}\left|D_B(s, \vartheta, X) - \E_\vartheta[D_B(s, \vartheta, X)]\right| ,\ 
 \left|D_B(s_d, \vartheta, X) -\E_\vartheta[D_B(s_d, \vartheta, X)] \right| \\
 & \qquad+\sup_{s >2\beta} \left| \E_\vartheta[D_B(s_d, \vartheta, X)]-\E_\vartheta[D_B(s, \vartheta, X)]\right| \Big\}\\
 & \quad\overset{(*)}{\le} 12 M_B\,\frac{\log^2 d}{d} + \max_{k\in[d]}\left|D_B(s_{k}, \vartheta, X) - \E_\vartheta[D_B(s_{k}, \vartheta, X)]\right| ,
\end{align*}
where in the second step we used that $D_B(s,\vartheta,X)=D_B(s_d,\vartheta,X)$ for $s>2\beta$ on $\{\|\vartheta\|_\infty<\beta\}$, and in $(*)$ we used \eqref{eq:discr_terms} for the first term, \eqref{eq:outsiiide} for the last term, $M_B\ge2$, and the fact that $s_d$ is one of the grid points, so that $|D_B(s_d,\vartheta,X)-\E_\vartheta[D_B(s_d,\vartheta,X)]|\le\max_{k\in[d]}|D_B(s_k,\vartheta,X)-\E_\vartheta[D_B(s_k,\vartheta,X)]|$.
Combining \eqref{eq:conv} and \eqref{eq:discr_terms}, we can conclude that
\begin{equation}\label{eq:bound_1}
    \begin{split}
    &\P_\vartheta\left( \sup_{s>0} \ \left|D_B(s, \vartheta, X)- \E_\vartheta[D_B(s, \vartheta, X)]\right| > \varepsilon_d + 12 M_B\,\frac{\log^2 d}{d}\right)
    \\
    &\quad\le \P_\vartheta\big(\|\vartheta\|_\infty > \beta\big) + \sum_{k=1}^{d}\P_\vartheta\left(  \left|D_B(s_{k}, \vartheta, X) - \E_\vartheta[D_B(s_{k}, \vartheta, X)]\right| > \varepsilon_d\right) \\
    &\quad\le  2d\,e^{-\log^2 d/2} + 2d\exp\left(-c'\min\left\{\log^6 d,\ \sqrt{\reff}\,\log^3 d \right\}\right)\\
    &\quad \le 4d\exp\left(-c''\min\left\{\log^2 d,\ \sqrt{\reff}\,\log^3 d \right\}\right)\, ,
    \end{split}
\end{equation}
which concludes the bound on $D_B(s, \vartheta, X)$.

It remains now to bound $O_B(s, \vartheta, X)$. Let $\xi\in\{-1,1\}^d$ have i.i.d.\ $\operatorname{Rad}(1/2)$ coordinates, independent of $\vartheta$. Since the coordinates of $\vartheta$ are independent and symmetric, $\xi\odot\vartheta\overset{d}{=}\vartheta$ as random vectors; moreover $r(s,\xi_i\vartheta_i)=\xi_i\,r(s,\vartheta_i)$ by item~1 of Lemma~\ref{lem:everyissub}. Writing $\widetilde H := H - \operatorname{diag}(H)$, it holds, for every $s>0$,

\begin{align*}
    \sum_{i=1}^d\sum_{j\ne i}^d H_{ij}\, r(s,\xi_i\vartheta_i)\,r(s,\xi_j \vartheta_j)
   & = \sum_{i,j=1}^d \xi_i\, r(s,\vartheta_i )\,\widetilde H_{ij}\, r(s,\vartheta_j)\, \xi_j \\
    &= \xi^\top \operatorname{diag}(r(s,\vartheta))\, \widetilde H\, \operatorname{diag}(r(s,\vartheta))\, \xi\, ,
\end{align*}
and consequently, as processes indexed by $s>0$,
\[
\big(O_B(s,\vartheta,X)\big)_{s>0}\ \overset{d}{=}\ \Big(\frac{d}{\operatorname{Tr}(H)}\,\xi^\top \operatorname{diag}(r(s,\vartheta))\, \widetilde H\, \operatorname{diag}(r(s,\vartheta))\, \xi\Big)_{s>0},
\]
so that it suffices to bound the supremum of the right-hand side.
Since $\widetilde H$ is equal to $H$ on the off-diagonal terms and equal to zero on the diagonal terms, and $H$ is positive semidefinite with diagonal entries in $[0,\|H\|_{op}]$, it follows that $\|\widetilde H\|_F\le\|H\|_F$ and, by Weyl's inequality, $\|\widetilde H\|_{op}\le\|H\|_{op}$.
For fixed $\vartheta\in \R^d$, there are at most $M_Bd$ positive breakpoints,
\[
\sigma_{i,k}=\frac{|\vartheta_i|}{k+1/2},\qquad 1\le i\le d,\quad 0\le k\le M_B-1 \, ,
\]
which we sort increasingly as $0<\tau_1<\dots<\tau_K$, $K\le M_Bd$, and we set $\tau_0:=0$.
Let us define
\[
p(s) := \sum_{i,j=1}^d \xi_i\, r(s,\vartheta_i )\,\widetilde H_{ij}\, r(s,\vartheta_j)\, \xi_j \qquad s>0 .
\]
For each $s\in (\tau_k, \tau_{k+1})$ the residual $r(s,\vartheta_i)$ is affine, and therefore
\[
p(s)=p_k(s):=  \sum_{i,j=1}^d \widetilde H_{ij}\,\xi_i \xi_j\, (\vartheta_i - s\, n_i)(\vartheta_j - s\, n_j) \, ,
\]
where $n_i := \roundB{\vartheta_i/s}$, which is constant for $s\in(\tau_k, \tau_{k+1})$. Thus $p_k$ is a polynomial of degree at most $2$, defined on all of $\R$, and it coincides with $p$ on the open interval $(\tau_k,\tau_{k+1})$. Therefore, it holds
\begin{align*}
    \sup_{s>0} \, |p(s)| = \max\Big\{\max_{k=0,\dots,K-1}\ \sup_{s\in (\tau_k, \tau_{k+1})} |p_k(s)|,\ \ \max_{k=1,\dots,K}|p(\tau_k)|,\ \ \sup_{s>\tau_K}|p(s)|\Big\}.
\end{align*}
For $s>\tau_K=2\|\vartheta\|_\infty$ every rounding vanishes, so $r(s,\vartheta)=\vartheta$ and $p(s)=\sum_{i,j}\widetilde H_{ij}\xi_i\xi_j\vartheta_i\vartheta_j$ is constant.
By Lemma \ref{lem:threepointLagr}, applied to $p_k$ on the closed interval $[\tau_k,\tau_{k+1}]$, we have

\begin{align*}
    \sup_{s\in (\tau_k, \tau_{k+1})} \, |p_k(s)| \le 3\max\big\{|p_k(\tau_k)|,\,|p_k(\tau_{k+1})|,\,|p_k((\tau_k + \tau_{k+1})/2)|\big\}.
\end{align*}
Therefore, we have that
\begin{align*}
    &\sup_{s>0} \, |p(s)| \\
    & \quad\le 3\max\Big\{\max_{k<K}\max\big\{|p_k(\tau_k)|,|p_k(\tau_{k+1})|,|p_k(\tfrac{\tau_k+\tau_{k+1}}2)|\big\},\ \max_{k\le K}|p(\tau_k)|,\ \Big|\sum_{i,j=1}^d \widetilde H_{ij}\xi_i \xi_j \vartheta_i \vartheta_j\Big| \Big\}\, .
\end{align*}
Each quantity inside the maximum is of the form $\xi^\top C\xi$ with $C=\operatorname{diag}(b)\,\widetilde H\operatorname{diag}(b)$, where $b\in\R^d$ collects either the residuals $r(s_0,\vartheta_i)$ at a node $s_0$, or their one-sided limits at a breakpoint, or $\vartheta_i$ itself. Since $0$ is a quantization level, $|r(s,\vartheta_i)|\le|\vartheta_i|$, and at a breakpoint the one-sided limits equal $\pm\sigma_{i,k}/2$ with $\sigma_{i,k}\le2|\vartheta_i|$; hence $|b_i|\le|\vartheta_i|$ in all cases.
Thus there are at most $4M_Bd+1$
matrices $C_j \in \R^{d\times d}$ of the form $C_j=\operatorname{diag}(b^{(j)})\,\widetilde H\operatorname{diag}(b^{(j)})$ with $|b^{(j)}_i|\le|\vartheta_i|$, such that
\begin{equation}
\sup_{s>0}\left|\xi^\top \ \operatorname{diag}(r(s,\vartheta))\, \widetilde H\, \operatorname{diag}(r(s,\vartheta)) \ \xi \right|\le 3\max_{j\in[4M_Bd+1]}|\xi^\top C_j\xi| \, .
\label{eq:er-offdiag-finite}
\end{equation}
This includes rounding ties. Since $\widetilde H$ has zero diagonal, so does every
$C_j$, and hence
\[
\E_\xi[\xi^\top C_j\xi]=\operatorname{Tr} C_j=0.
\]
On the event $\{\|\vartheta\|_\infty\le\beta\}$, which has probability at least $1-2d\,e^{-\log^2 d/2}$, using $|b^{(j)}_i|\le\|\vartheta\|_\infty\le\beta$, the submultiplicativity of the operator norm and $\|H\|_F^2\le\|H\|_{op}\operatorname{Tr}(H)$, it holds that
\begin{align*}
    & \|C_j\|_{op} \le \|\vartheta\|^2_\infty\, \|\widetilde H\|_{op} \le \beta^2\, \|H\|_{op} = \log^2 d \|H\|_{op} \, ,\\
    & \|C_j\|^2_{F}  \le \|\vartheta\|^4_{\infty} \|\widetilde H\|^2_F \le \beta^4 \|H\|_{op} \operatorname{Tr}(H) = \log^4 d \|H\|_{op} \operatorname{Tr}(H)\, .
\end{align*}
We note that the $\xi_i$ are independent, centered, with $\|\xi_i\|_{\psi_2}$ an absolute constant. Hence, on $\{\|\vartheta\|_\infty\le\beta\}$, by Theorem~\ref{thm:HanWrig} applied
conditionally on $\vartheta$, \eqref{eq:er-offdiag-finite} and a union bound, it holds
\begin{equation}\label{eq:bound2}
    \begin{split}
        &\mathbb P_\xi\Big(\sup_{s>0} \ \frac{1}{\operatorname{Tr}(H)}
 \big|\xi^\top\operatorname{diag}(r(s,\vartheta))\,\widetilde H\,
 \operatorname{diag}(r(s,\vartheta))\,\xi\big|>\varepsilon_d\Big)\\
 & \quad\le \sum_{j=1}^{4M_Bd+1}\mathbb P_\xi\Big(|\xi^\top C_j\xi|>\frac{\operatorname{Tr}(H)}{3}\varepsilon_d\Big)\\
 &\quad\le 2(4M_Bd+1)\exp\Big(-c\min\Big\{
 \frac{\operatorname{Tr}(H)^2\varepsilon_d^2/9}{\log^4 d\,\|H\|_{op}\operatorname{Tr}(H)},\
 \frac{\operatorname{Tr}(H)\varepsilon_d/3}{\log^2 d\,\|H\|_{op}}\Big\}\Big) \\
 &\quad= 2(4M_Bd+1)\exp\Big(-c\min\Big\{
 \frac{\reff\,\varepsilon_d^2}{9\log^4 d},\ \frac{\reff\,\varepsilon_d}{3\log^2 d}\Big\}\Big)\\
 & \quad= 2(4M_Bd+1)\exp\Big(-c\min\Big\{
 \frac{\log^2 d}{9},\ \frac{\log d \sqrt{\reff}}{3}\Big\}\Big) \, .
    \end{split}
\end{equation}
Integrating over $\vartheta$ and using the equality in law established above,
\begin{align*}
&\P\Big(\sup_{s>0}|O_B(s,\vartheta,X)|>\varepsilon_d\Big)\\
& \quad\le \P_\vartheta\big(\|\vartheta\|_\infty>\beta\big)+2(4M_Bd+1)\exp\Big(-\frac{c}{9}\min\big\{\log^2 d,\ \log d \sqrt{\reff}\big\}\Big).    
\end{align*}

Combining \eqref{eq:rec_splitted}, \eqref{eq:bound_1} and \eqref{eq:bound2}, we conclude that
\begin{align*}
    &\P\left( \sup_{s>0} \ \left|\cE(s) - \cE^{\infty}(s)\right| > 2\sqrt{\frac{\log^6 d}{\reff(H)}} +12 M_B\,\frac{\log^2 d}{d}\right)\\
    &\quad \le  C\,M_B\,d\, \exp\left\{-c\min\left\{\log^2 d,\ \log d \sqrt{\reff(H)}\right\}\right\}
\end{align*}
for absolute constants $c,C>0$. In particular, if $\reff(H) \ge \log^8 d$, then $\min\{\log^2 d,\log d\sqrt{\reff}\}=\log^2 d$ and $\sqrt{\log^6 d/\reff}\le 1/\log d$, so that

\begin{align*}
    \P\left( \sup_{s>0} \ \left|\cE(s) - \cE^{\infty}(s)\right| > \frac{2+12M_B}{\log d}\right)
    \le  C\,M_B\,d\,e^{-c \log^2 d}\, ,
\end{align*}
where $c,C>0$ are universal constants independent of $d$.
\end{proof}

\subsection{Proof of Corollary \ref{cor:minimizers}}\label{app:minimizers}
 Corollary \ref{cor:minimizers} follows directly from the uniform approximation in Theorem~\ref{thm:uniform}, which controls the difference between the infima of $\cE$ and $\cE^\infty$ and ensures that any approximate
minimizer of $\cE$ also approximately minimizes $\cE^\infty$.
\begin{corollary}\label{cor:minimizers}
Under the assumptions of Theorem~\ref{thm:uniform}, set
$\varepsilon_d:=\frac{2+12M_B}{\log d}$. Then, with probability at least
$1-C\,M_B\,d\,e^{-c\log^2 d}$, the following statements hold simultaneously:
\begin{enumerate}
\item The optimal values satisfy
\[
\,\bigl|m-m^{\infty}\bigr|\ \le\ \varepsilon_d \, .
\]
\item For every $\eta\ge0$ and every $\widehat s>0$ satisfying
$\cE(\widehat s)\le m+\eta$ one has
\[
0\ \le\ \bigl(\cE^{\infty}(\widehat s)-m^{\infty}\bigr)
\ \le\ 2\varepsilon_d+ \eta .
\]
In particular, any exact minimizer $\widehat s$ of $\cE$ satisfies this bound
with $\eta=0$.
\end{enumerate}
\end{corollary}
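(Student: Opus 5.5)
We will work on the single event of Theorem~\ref{thm:uniform},
\[
\Omega_d:=\Big\{\sup_{s>0}\bigl|\cE(s)-\cE^{\infty}(s)\bigr|\le\varepsilon_d\Big\},
\]
which has probability at least $1-C\,M_B\,d\,e^{-c\log^2 d}$. Both claims of the corollary will be derived deterministically on $\Omega_d$. Because the theorem controls the supremum over all $s>0$ at once, the bounds also apply to scales $\widehat s$ that depend on the realization of $\vartheta$ and $X$. This is what makes the two items hold simultaneously on the same event.

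For item~1, we use that $m$ and $m^\infty$ are finite, since both functions are nonnegative. On $\Omega_d$ we have, for every $s>0$,
\[
\cE(s)\ge\cE^\infty(s)-\varepsilon_d\ge m^\infty-\varepsilon_d .
\]
Taking the infimum over $s$ gives $m\ge m^\infty-\varepsilon_d$. Symmetrically, $\cE^\infty(s)\ge\cE(s)-\varepsilon_d\ge m-\varepsilon_d$ for every $s$, so $m^\infty\ge m-\varepsilon_d$. Together these give $|m-m^\infty|\le\varepsilon_d$. No attainment of the infima is needed, which is convenient because we do not want to rely on Proposition~\ref{prop:scalarlandscape} for the empirical loss.

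For item~2, the lower bound $\cE^\infty(\widehat s)-m^\infty\ge0$ holds by the definition of the infimum. For the upper bound, let $\widehat s$ satisfy $\cE(\widehat s)\le m+\eta$. We chain three inequalities:
\[
\cE^\infty(\widehat s)\le\cE(\widehat s)+\varepsilon_d\le m+\eta+\varepsilon_d\le m^\infty+2\varepsilon_d+\eta .
\]
The first step uses the uniform bound at the point $\widehat s$, and the last step uses item~1. The case of an exact minimizer is $\eta=0$.

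There is no genuine obstacle here. The one point that needs care is to avoid fixing $s$ before invoking the probability bound, since a data-dependent $\widehat s$ would break a pointwise argument. Working on the single event $\Omega_d$ from the uniform statement handles this.
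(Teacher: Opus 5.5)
Your proposal is correct and follows essentially the same route as the paper: both work on the single high-probability event $\sup_{s>0}|\cE(s)-\cE^{\infty}(s)|\le\varepsilon_d$ from Theorem~\ref{thm:uniform}, take infima of the two-sided bound to get $|m-m^{\infty}|\le\varepsilon_d$, and chain $\cE^{\infty}(\widehat s)\le\cE(\widehat s)+\varepsilon_d\le m+\eta+\varepsilon_d\le m^{\infty}+\eta+2\varepsilon_d$, with the lower bound coming from the definition of $m^{\infty}$. Your added remark that uniformity is what makes the bound valid for data-dependent $\widehat s$ is a correct and useful clarification.
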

\begin{proof}
    By Theorem~\ref{thm:uniform}, with probability at least
$1-CM_Bd\,e^{-c\log^2 d}$ it holds
\[
\sup_{s>0}\ \,\bigl|\cE(s)-\cE^{\infty}(s)\bigr|
\ \le\ \varepsilon_d \, .
\]
We work on this event. For every $s>0$, we have
\[
\cE^{\infty}(s)-\varepsilon_d\ \le\ \cE(s)\ \le\ \cE^{\infty}(s)+\varepsilon_d \, .
\]
Taking infima over $s>0$ yields
\[
m^{\infty}-\varepsilon_d\ \le\ m\ \le\ m^{\infty}+\varepsilon_d ,
\]
which proves the first point of the corollary.

Now let $\eta\ge0$ and let $\widehat s>0$ satisfy $\cE(\widehat s)\le m+\eta$.
The uniform error bound and the first assertion give
\[
\begin{aligned}
\cE^{\infty}(\widehat s)
&\le \cE(\widehat s)+\varepsilon_d\\
&\le m+\eta+\varepsilon_d\\
&\le m^{\infty}+\eta+2\varepsilon_d .
\end{aligned}
\]
The observation that $\cE^{\infty}(\widehat s)\ge m^{\infty}$ concludes the proof.
\end{proof}

\subsection{Verifying (H2) for Multi-Layer Perceptrons at initialization}\label{sec:jd}
The uniform approximation of Theorem~\ref{thm:uniform} relies on (H2), namely a lower bound on
the effective rank of $XX^\top$. In this section we show that this condition holds, with high probability, in the setting of a random multi-layer perceptron with isotropic data for channelwise quantization, i.e., for layer $\ell$, we quantize independently the rows $\theta^{(\ell)}_{i:}$ and $X$ is given by the output of the previous layer (not quantized). We begin by defining the considered model.

 \begin{defn}\label{def:smlp}
    Let \(L > 1\) be fixed. A \emph{multi-layer perceptron} (MLP) of depth \(L\) is defined as the parametric function \(f_\vartheta: \R^{d_0} \to \R\) obtained via the recursion
    \begin{equation}\label{eq:model_mlp}
    \begin{split}
     & \alpha^{(0)}(x) := x, \quad
    \alpha^{(\ell)}(x) := \sigma\left(\frac{1}{\sqrt{d_{\ell-1}}}\vartheta^{(\ell)} \alpha^{(\ell-1)}(x) \right) \quad \text{for } \ell = 1, \dots, L-1, \\
        & f_\vartheta(x) := \frac{1}{\sqrt{d_{L-1}}}\vartheta^{(L)\top} \alpha^{(L-1)}(x),
    \end{split}  
    \end{equation}
    Here, $\vartheta^{(\ell)}\in\R^{d_\ell\times d_{\ell-1}}$
for $\ell=1,\dots,L-1$, and
$\vartheta^{(L)}\in\R^{d_{L-1}}$. The nonlinearity $\sigma:\R\to\R$ acts componentwise,
is $L_\sigma$-Lipschitz, and satisfies $\sigma(0)=0$.
    \end{defn}

Rather than assuming (H2) directly, we give explicit
conditions on the network architecture and calibration
distribution under which it holds with high probability.

\paragraph{Assumptions} We assume that:
\begin{itemize}
\item[(H3)] $d_k = \alpha_k d$ for some constant $\alpha_\ell>0$ and $\ell\in[L]$, and we denote with $d := \min_{k=1,\dots,L-1} \, d_k $.
\item[(H4)] the calibration dataset $\mathbb{D}_{cal} = \{X_i'\}_{i=1}^{N_c}$ is the realization of $N_c$ iid copies of $X' \sim \cN(0, \operatorname{Id}_{d_0})$. We denote with with $X' = [X_1' \dots, X_{N_c}']\in \R^{d_0\times N_c}$. We assume that $N_c \ge C \log^8 d$ for some constant $C>0$ not dependent on $d$.
\end{itemize}

Assumption (H3) places the hidden-layer widths in a
proportional regime. Assumption (H4) models the calibration
inputs as isotropic Gaussian and requires the sample size
to grow at least polylogarithmically with the minimum hidden
width. This permits calibration sets that are asymptotically
smaller than the layer widths. We additionally assume that
the activation is odd and Lipschitz, covering choices such
as $\tanh$ but excluding ReLU.

The results of Theorem \ref{thm:uniform} relies on the feature Gram matrix $\alpha^{(\ell)}(X')\,\alpha^{(\ell)}(X')^\top$ having large effective rank. The following theorem shows that this condition is fulfilled, with high probability, for the (iterative) PTQ problem of a MLP with Gaussian weights and Gaussian calibration inputs, as soon as the width is large enough. 

In the rest of this section, to improve readability, we sometimes omit the constants which do not depend on $d$ and $N_c$ form the formulas, since they don't affect the asymptotic behavior. 
We start by controlling, with high probability, the operator norm $\opn{\alpha^{(\ell)}(X')\,\alpha^{(\ell)}(X')^\top}$ from above and the trace $\Tr\big(\alpha^{(\ell)}(X')\,\alpha^{(\ell)}(X')^\top\big)$ from below, which are the two quantities entering the effective rank. 
\begin{lemma}\label{lem:opnorm}
Let us assume (H1), (H3) and (H4) and let $\ell\in[L-1]$. Then, there exists universal constants $C,c>0$ such that
\[
\P_{\theta, X'}\left(\opn{\alpha^{(\ell)}(X')\,\alpha^{(\ell)}(X')^\top}>C(d+N_c)\right)\ \le\ 2\cdot 9^{-c(d+N_c)}+2\ell e^{-d}\, .
\]
\end{lemma}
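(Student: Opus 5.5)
Since $\opn{AA^\top}=\opn{A}^2$, I will bound the operator norm of the feature matrix $A:=\alpha^{(\ell)}(X')\in\R^{d_\ell\times N_c}$ directly. The first move is to peel off the nonlinearity and the weights. The activation is $L_\sigma$-Lipschitz with $\sigma(0)=0$, so it acts entrywise as a contraction up to $L_\sigma$. However, entrywise contraction does not control operator norms in general. For that reason I will avoid the layer-by-layer operator bound and instead use a two-step decomposition: conditioning on the weights, and concentration over the data.

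\textbf{Step 1 (conditioning on weights).} Conditionally on $\vartheta^{(1)},\dots,\vartheta^{(\ell)}$, the columns $\alpha^{(\ell)}(X'_j)$ are i.i.d.\ in $\R^{d_\ell}$, since they are a fixed function of the i.i.d.\ Gaussian inputs $X'_j$. The map $x\mapsto\alpha^{(\ell)}(x)$ is Lipschitz with constant
\[
\Lambda:=L_\sigma^\ell\prod_{k\le\ell}\opn{\vartheta^{(k)}/\sqrt{d_{k-1}}}.
\]
By Theorem~\ref{thm:subgaussian-norm} with $K=\sqrt{8/3}$, $t=\sqrt{d_k}$, and (H3), each factor is bounded by a constant with probability at least $1-2e^{-d}$. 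Together with Lemma~\ref{lem:boundonact}, which controls the activation norms, this gives an event $\Omega_\vartheta$ of probability at least $1-2\ell e^{-d}$ on which $\Lambda\le C$.

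\textbf{Step 2 (sub-Gaussian columns).} On $\Omega_\vartheta$, fix $u\in\Sph{d_\ell-1}$. The function $x\mapsto\langle u,\alpha^{(\ell)}(x)\rangle$ is $C$-Lipschitz. Because $\sigma$ is odd, this function is odd, so by symmetry of $X'$ it has mean zero. Lemma~\ref{prop:concLip} then shows that each column $\alpha^{(\ell)}(X'_j)$ is a centered sub-Gaussian random vector with $\psi_2$-norm at most an absolute constant. Oddness is used exactly here: it removes the mean, which would otherwise contribute a rank-one spike of size up to $N_c\,d$.

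\textbf{Step 3 (net argument).} I will bound $\opn{A}=\sup_{u,v}u^\top Av$ with $u\in\Sph{d_\ell-1}$ and $v\in\Sph{N_c-1}$, using $1/4$-nets of the two spheres. Their combined cardinality is at most $9^{d_\ell+N_c}$. For fixed $u,v$, the scalar $u^\top Av=\sum_j v_j\langle u,\alpha^{(\ell)}(X'_j)\rangle$ is a sum of independent centered sub-Gaussian variables. Its $\psi_2$-norm is therefore at most $C\|v\|_2=C$, so
\[
\P\bigl(|u^\top Av|>t\bigr)\le 2e^{-ct^2}.
\]
Choosing $t=C'\sqrt{d+N_c}$, taking a union bound over the net, and applying the standard approximation step gives
\[
\opn{A}\le C\sqrt{d+N_c}
\]
with conditional probability at least $1-2\cdot9^{-c(d+N_c)}$. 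This is the mechanism behind the $9^{-c(d+N_c)}$ term in the statement. Squaring, and adding $\P(\Omega_\vartheta^c)\le 2\ell e^{-d}$, yields the claim.

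The main obstacle is Step 2, and in particular making the Lipschitz constant of $\alpha^{(\ell)}$ uniform over the whole input space rather than pointwise. Lemma~\ref{lem:boundonact} alone is only pointwise in $x$, which is why I rely on the product of operator norms in Step 1 instead. If tracking $d_\ell$ versus $d$ causes trouble, I would absorb the ratio using the constants $\alpha_k$ from (H3).
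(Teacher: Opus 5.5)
Your proposal is correct and follows essentially the same route as the paper. You condition on the weights, bound the Lipschitz constant of $\alpha^{(\ell)}$ by $L_\sigma^\ell\prod_{k\le\ell}\opn{\vartheta^{(k)}/\sqrt{d_{k-1}}}$ on an event of probability at least $1-2\ell e^{-d}$, center using the oddness of $\sigma$, apply Gaussian Lipschitz concentration, and conclude with $\tfrac14$-nets of size $9^{d_\ell+N_c}$ and a union bound.

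The only difference is in the concentration step. You concentrate column by column and then recombine via independence of the columns and Hoeffding's inequality for sub-Gaussian sums. The paper instead applies Lemma~\ref{prop:concLip} once to $X\mapsto u^\top\alpha^{(\ell)}(X)v$, which is $\Lambda$-Lipschitz in Frobenius norm on the whole matrix, so it needs neither column independence nor the Hoeffding step. As you suspected, Lemma~\ref{lem:boundonact} is not needed at all.
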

\begin{proof}
Let us fix $\ell\in[d]$, $u\in \Sph{d_\ell-1}$ and $v\in\Sph{N_c-1}$. Let us fix $\theta^{(k)}\in \R^{d_k\times d_{k -1}}$ for $k\in[\ell]$ and define $ F_{u,v}: \R^{d_0\times N_{c}} \to \R$ as follows
\begin{align*}
    F_{u,v}(X):= u^\top \sigma\left(\frac{1}{\sqrt{d_{\ell-1}}}\theta^{(\ell)} \alpha^{(\ell -1)}(X)\right)v \, ,
\end{align*}
where $\alpha^{(\ell -1)}(X) \in \R^{d_{\ell-1}\times N_c}$ is the output of the $(\ell-1)$-th layer evaluated on the calibration dataset (with the convention $\alpha^{(0)}(X)=X$).
We note that $F_{u,v}$ is Lipschitz. Indeed, since $\|u\|_2=\|v\|_2=1$ and $\sigma$ is $L_\sigma$-Lipschitz and applied entrywise, for all $X,Y\in\R^{d_0\times N_c}$ it holds
\begin{align*}
  \left|F_{u,v}(X) - F_{u,v}(Y)\right| &\le  \left\|\sigma\left(\frac{1}{\sqrt{d_{\ell-1}}}\theta^{(\ell)} \alpha^{(\ell -1)}(X)\right) - \sigma\left(\frac{1}{\sqrt{d_{\ell-1}}}\theta^{(\ell)} \alpha^{(\ell -1)}(Y)\right)\right\|_{F} \\
  & \le L_\sigma \left\|\frac{1}{\sqrt{d_{\ell-1}}}\theta^{(\ell)}\right\|_{op} \|\alpha^{(\ell -1)}(X) - \alpha^{(\ell -1)}(Y)\|_{F} \\
  & \le L_\sigma^\ell \prod_{k=1}^{\ell} \left\|\frac{1}{\sqrt{d_{k-1}}} \ \, \theta^{(k)}\right\|_{op} \|X - Y\|_F =: \Lambda(\theta)\,\|X-Y\|_F \, ,
\end{align*}
where we used $\|MN\|_F\le\|M\|_{op}\|N\|_F$ and, in the last step, we iterated the previous inequality over the layers $\ell-1,\dots,1$.
Moreover, since the activation $\sigma$ is odd, it follows that $F_{u,v}(-X) = - F_{u,v}(X)$. Now, since by (H4), $X'_{ij}\overset{iid}{\sim} \cN(0,1)$ which is symmetric, it holds that 
\begin{align*}
    \E_{X'}[F_{u,v}(X')] = 0 \, .
\end{align*}
Therefore, using Lemma~\ref{prop:concLip}, for all $t>0$, it holds
\begin{equation}\label{eq:fixedpair}
\P_{X'}\big(|F_{u,v}(X')|>t\big)\ \le\ 2\exp\left\{-\frac{t^2}{2 \Lambda(\theta)^2}\right\}.
\end{equation}
We now control $\Lambda(\theta)$. Since the entries of $\theta^{(k)}$ are i.i.d.\ $\cN(0,1)$, we have $K=\max_{i,j}\|\theta^{(k)}_{ij}\|_{\psi_2}=\sqrt{8/3}$, and Theorem~\ref{thm:subgaussian-norm} with $t=\sqrt{d}$ gives, for every $k\in[\ell]$ (recall $d_k = \alpha d$),
\[
\P_\theta\left(\|\theta^{(k)}\|_{op} > 3CK\sqrt{d}\right)\ \le\ \P_\theta\left(\|\theta^{(k)}\|_{op} > CK\big(\sqrt{d_k}+\sqrt{d_{k-1}}+\sqrt d\big)\right)\ \le\ 2e^{-d}\, .
\]
Hence, by a union bound, the event
\[
\cE:=\left\{\max_{k\in[\ell]}\ \|\theta^{(k)}\|_{op}\le 3CK\sqrt{d}\right\}
\]
satisfies $\P_\theta(\cE^c)\le 2\ell e^{-d}$, and on $\cE$, under hypothesis (H3), it holds $\Lambda(\theta)\le\Lambda:=\left(3CKL_\sigma\right)^{\ell}\prod_{k=1}^{\ell}\sqrt{\frac{1}{\alpha_{k-1}}}$.
In particular, by \eqref{eq:fixedpair}, for every $\theta\in\cE$ and every $t>0$,
\begin{equation}\label{eq:fixedpair-E}
\P_{X'}\big(|F_{u,v}(X')|>t\big)\ \le\ 2\exp\left\{-\frac{t^2}{2 \Lambda^2}\right\}.
\end{equation}
Let $\cU\subset\Sph{d_{\ell}-1}$, $\cV\subset\Sph{N_c-1}$ be the $\tfrac{1}{4}$-nets of Lemma~\ref{lem:net}, so that $|\cU|\le 9^{d_\ell}$ and $|\cV|\le 9^{N_c}$. By Lemma~\ref{lem:net}, $$\opn{\sigma\left(\frac{1}{\sqrt{d_{\ell-1}}}\theta^{(\ell)} \alpha^{(\ell -1)}(X')\right)}\le 2\max_{(u,v)\in\cU\times\cV}|F_{u,v}(X')|\, ,$$ so, for every $t>0$, we have
\[
\left\{\opn{\sigma\left(\frac{1}{\sqrt{d_{\ell-1}}}\theta^{(\ell)} \alpha^{(\ell -1)}(X')\right)}>2t\right\}\ \subseteq\ \bigcup_{(u,v)\in\cU\times\cV}\{|F_{u,v}(X')|>t\},
\]
and by \eqref{eq:fixedpair-E} and a union bound, for every $\theta\in\cE$,
\begin{align*}
    \P_{X}\left(\opn{\sigma\left(\frac{1}{\sqrt{d_{\ell-1}}}\theta^{(\ell)} \alpha^{(\ell -1)}(X')\right)}>2t\right)\ &\le\ |\cU|\,|\cV|\cdot 2\exp\left\{-\frac{t^2}{2 \Lambda^2}\right\}\\
& \le\ 2\cdot 9^{\,d_\ell+N_c}\exp\left\{-\frac{t^2}{2 \Lambda^2}\right\}\, .
\end{align*}
Let us choose $t^2:=4\log 9\;(d_\ell+N_c)\,\Lambda^2$. Then $\dfrac{t^2}{2\Lambda^2}=2(d_\ell+N_c)\log 9$, so the right-hand side equals $2\cdot 9^{d_\ell+N_c}\cdot 9^{-2(d_\ell+N_c)}=2\cdot 9^{-(d_\ell+N_c)}$. Finally, since $X'$ and $\theta$ are independent,
\begin{align*}
& \P_{\theta,X'}\left(\opn{\sigma\left(\frac{1}{\sqrt{d_{\ell-1}}}\theta^{(\ell)} \alpha^{(\ell -1)}(X')\right)}>2t\right)\\
&\quad\le \E_\theta\left[\mathbf{1}_{\cE}\,\P_{X'}\left(\opn{\sigma\left(\frac{1}{\sqrt{d_{\ell-1}}}\theta^{(\ell)} \alpha^{(\ell -1)}(X')\right)}>2t\right)\right]+\P_\theta(\cE^c)\\
&\quad\le 2\cdot 9^{-(d_\ell+N_c)}+2\ell e^{-d}\, .
\end{align*}
Finally, since $X$ and $\theta$ are independent,
\begin{align*}
& \P_{\theta,X'}\left(\opn{\sigma\left(\frac{1}{\sqrt{d_{\ell-1}}}\theta^{(\ell)} \alpha^{(\ell -1)}(X')\right)}>2t\right)\\
&\quad\le \E_\theta\left[\mathbf{1}_{\cE}\,\P_{X'}\left(\opn{\sigma\left(\frac{1}{\sqrt{d_{\ell-1}}}\theta^{(\ell)} \alpha^{(\ell -1)}(X')\right)}>2t\right)\right]+\P_\theta(\cE^c)\\
&\quad\le 2\cdot 9^{-(d_\ell+N_c)}+2\ell e^{-d}\, {\color{blue}.}
\end{align*}
We note that $\|\alpha^{(\ell)}(X')\,\alpha^{(\ell)}(X')^\top\|_{op} = \|\alpha^{(\ell)}(X)\|^2_{op}$. Therefore, under (H3), there exists universal constants $C,c>0$ such that  
\begin{align*}
\P_{\theta,X'}\left(\opn{\alpha^{(\ell)}(X')\,\alpha^{(\ell)}(X')^\top}>C\,(d+N_c)\right)
&\le \P_{\theta,X'}\left(\opn{\alpha^{(\ell)}(X')}>2t\right)\\
&\le 2\cdot 9^{-(d_\ell+N_c)}+2\ell e^{-d}\\
& \le\ 2\cdot 9^{-c(d+N_c)}+2\ell e^{-d}\, ,
\end{align*}
which concludes the proof.
\end{proof}

The next lemma shows that, with high probability, a lower bound on the trace $\Tr\left(\alpha^{(\ell)}(X')\,\alpha^{(\ell)}(X')^\top\right)$ for all $\ell\in[L-1]$.
\begin{lemma}\label{lem:trace-lower}
Let us assume (H1), (H3) and (H4). Then there exist universal constants $a,C,c>0$ not depending on $d$, such that for every $\ell\in\{0,\dots,L-1\}$, it holds
\[
\P\left(\Tr\left(\alpha^{(\ell)}(X')\,\alpha^{(\ell)}(X')^\top\right)\ <\ a\, d N_c\right)\ \le\ C N_c\, e^{-c d}\, .
\]
\end{lemma}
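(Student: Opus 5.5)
The trace is a sum over calibration samples,
\[
\Tr\big(\alpha^{(\ell)}(X')\,\alpha^{(\ell)}(X')^\top\big)=\sum_{j=1}^{N_c}\big\|\alpha^{(\ell)}(X'_j)\big\|_2^2 ,
\]
so it suffices to show the following pointwise statement for a single Gaussian input $x\sim\cN(0,\Id_{d_0})$: there are constants $a_\ell>0$ with $\|\alpha^{(\ell)}(x)\|_2^2\ge a_\ell d$ with probability at least $1-C_\ell e^{-cd}$. A union bound over the $N_c$ samples then gives the claim with $a:=\min_\ell a_\ell\cdot\min(1,\ldots)$ and failure probability $CN_ce^{-cd}$. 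The constants are absorbed using (H3).

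I would prove the pointwise statement by induction on $\ell$, together with the matching upper bound $\|\alpha^{(\ell)}(x)\|_2^2\le C_\ell d$.

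\emph{Base case $\ell=0$.} Lemma~\ref{lem:norm_of_gaussian_vectors} gives $\tfrac12 d_0\le\|x\|_2^2\le 2d_0$ except with probability $2e^{-d_0/16}$.

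\emph{Inductive step.} Condition on $\alpha^{(\ell-1)}(x)$. Since it is independent of $\vartheta^{(\ell)}$, the preactivations
\[
h_i=d_{\ell-1}^{-1/2}\,\vartheta^{(\ell)\top}_{i:}\alpha^{(\ell-1)}(x)
\]
are i.i.d.\ $\cN(0,q)$ with $q:=\|\alpha^{(\ell-1)}(x)\|_2^2/d_{\ell-1}$. By the induction hypothesis and (H3), on the good event $q\in[q_-,q_+]$ for constants $0<q_-\le q_+$ independent of $d$. The upper bound can alternatively be taken from Lemma~\ref{lem:boundonact}.

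Then $\|\alpha^{(\ell)}(x)\|_2^2=\sum_{i=1}^{d_\ell}\sigma(h_i)^2$ is a sum of i.i.d.\ terms with mean $m(q):=\E_{Z}[\sigma(\sqrt q Z)^2]$. Since $|\sigma(t)|\le L_\sigma|t|$, Lemma~\ref{lem:gausspsi2} and Lemma~\ref{lem:subexp-subgauss-squared} give
\[
\|\sigma(h_i)^2\|_{\psi_1}\le \tfrac83 L_\sigma^2 q_+ .
\]
By Lemma~\ref{lem:centering}, the centered terms are sub-exponential with norm $K=O(1)$. Bernstein's inequality (Theorem~\ref{cor:bernstein}) with $t=\tfrac12 m_- d_\ell$ then gives
\[
\sum_i\sigma(h_i)^2\ge \tfrac12 m_-\,d_\ell\ge \tfrac12 m_-\alpha_\ell\, d
\]
with conditional probability at least $1-2e^{-cd}$, where $m_-:=\min_{q\in[q_-,q_+]}m(q)$. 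The same Bernstein bound, applied from the other side, gives the upper bound $\sum_i\sigma(h_i)^2\le C_\ell d$. Integrating over the conditioning and adding the failure probability of the previous layer closes the induction with $C_\ell\lesssim \ell$.

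The main obstacle is showing $m_->0$ uniformly in $d$. I would argue as follows. The map $q\mapsto m(q)$ is continuous on $(0,\infty)$ by dominated convergence, using the bound $\sigma(\sqrt q z)^2\le L_\sigma^2q_+z^2$. It is also strictly positive for every $q>0$, because the Gaussian has full support and $\sigma$ is not identically zero. Hence $m(q)=0$ would force $\sigma\equiv0$ almost everywhere, and by continuity everywhere. A continuous positive function on the compact interval $[q_-,q_+]$ has a positive minimum.

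This requires the (implicit) nondegeneracy assumption $\sigma\not\equiv0$, which I would state explicitly. Oddness of $\sigma$ is not needed for this lemma. The remaining work is only bookkeeping of how the constants $a_\ell,q_\pm$ depend on $L_\sigma$, the $\alpha_k$ and $\ell$, none of which involve $d$ or $N_c$.
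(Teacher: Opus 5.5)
Your proposal is correct and follows essentially the same route as the paper. Both arguments run an induction on $\ell$ with two-sided norm bounds, and both condition on the previous layer so that the preactivations are i.i.d.\ $\mathcal{N}(0,q)$. Both use Bernstein with $t=\tfrac12 m_- d_\ell$ for the lower tail, prove $m_->0$ by continuity on a compact interval of variances, and finish with a union bound over the $N_c$ samples. The differences are cosmetic: the paper gets the upper tail from the $\chi^2$ bound of Lemma~\ref{lem:norm_of_gaussian_vectors} via $\sigma^2\le L_\sigma^2 z^2$ rather than from Bernstein, and it runs the induction directly on the event intersected over all samples.
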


\begin{proof}
Let $\ell\in\{0,\dots,L-1\}$. We note that
\begin{align*}
  \Tr\left(\alpha^{(\ell)}(X')\alpha^{(\ell)}(X')^\top\right) = \|\alpha^{(\ell)}(X')\|_F^2=\sum_{j=1}^{N_c}\|\alpha^{(\ell)}_{:j}(X')\|_2^2=\sum_{j=1}^{N_c}\|\alpha^{(\ell)}(X'_{:j})\|_2^2 \, ,
\end{align*}
so that a lower bound on the trace follows from a lower bound on $\|\alpha^{(\ell)}(X'_{:j})\|_2^2$ holding simultaneously for all $j\in[N_c]$. For $\ell\ge1$, $i\in[d_\ell]$ and $j\in[N_c]$, let us define the random variables
\begin{align*}
   & Z_i^j := \frac{1}{\sqrt{d_{\ell-1}}}\, \theta^{(\ell)}_{i:}\alpha^{(\ell-1)}(X'_{:j}) \, ,\\
   \, 
   &Y_i^j := \sigma\big(Z_i^j\big)^2 \, . 
\end{align*}
We note that $\|\alpha^{(\ell)}(X'_{:j})\|_2^2=\sum_{i=1}^{d_\ell}Y_i^j$.
Let $\cF_{\ell-1}$ denote the $\sigma$-algebra generated by $X',\theta^{(1)},\dots,\theta^{(\ell-1)}$. Since $\theta^{(\ell)}$ is independent of $\cF_{\ell-1}$, the random variables $Y^j_1, \dots, Y^j_{d_\ell}$ are i.i.d., and, using (H3), we have
\begin{align*}
    0\le Y_i^j \le L_\sigma^2 \big(Z_i^j\big)^2 \, .
\end{align*}
Moreover, conditioned on $\cF_{\ell-1}$, we have that
$Z_i^j \sim \cN\left(0, \frac{\|\alpha^{(\ell-1)}(X'_{:j})\|_2^2}{d_{\ell-1}}\right)$. Let us define following iterative constants
\begin{align*}
    & a_0:=\frac12, \quad b_0:=2 \\
    & m_\ell = \inf_{h \in [a_{\ell-1}, b_{\ell-1}]} \, \E_{Z\sim \mathcal{N}(0, h)} \big[\sigma^2(Z)\big]\ , \\
    & b_\ell:=2L_\sigma^2\, b_{\ell-1}\, ,\\
    & a_\ell:=\tfrac12\, m_\ell\, .
\end{align*}
and the event
\begin{align*}
    E_\ell := \bigcap_{j=1}^{N_c} \left\{ a_\ell d_\ell \le \|\alpha^{(\ell)}(X'_{:j})\|_2^2 \le b_\ell d_\ell \right\} \, .
\end{align*}

We estimate now the probability of $E_{\ell}$. We proceed by induction over $\ell$. The base case $\ell=0$ follows from Lemma~\ref{lem:norm_of_gaussian_vectors}. Indeed, since $\alpha^{(0)}(X'_{:j})=X'_{:j}\sim\cN(0,\operatorname{Id}_{d_0})$ by (H4), for each $j\in [N_c]$ each of the two inequalities
\begin{align*}
    \frac{1}{2} d_0 \le \|\alpha^{(0)}(X'_{:j})\|_2^2 = \|X'_{:j}\|_2^2 \le 2d_0
\end{align*}
fails with probability at most $e^{-d_0/16}$. Therefore, by a union bound,
\begin{align*}
    \P \left( E_0^{c} \right) \le \sum_{j=1}^{N_c} \left[\P \left(\|X'_{:j}\|_2^2 < \frac{1}{2} d_0 \right) + \P \left(\|X'_{:j}\|_2^2 >  2d_0 \right)\right] \le 2N_c e^{-d_0/16} \, .
\end{align*}
Let now $\ell\ge1$. As inductive hypothesis, we assume that there exists $c_1,\dots,c_{\ell-1}>0$, depending only on the activation $\sigma$ and on the layer index, such that the event
\begin{align*}
    E_{\ell-1} = \bigcap_{j=1}^{N_c} \left\{ a_{\ell-1} d_{\ell-1} \le \|\alpha^{(\ell-1)}(X'_{:j})\|_2^2 \le b_{\ell-1} d_{\ell-1} \right\}
\end{align*}
satisfies
\begin{align*}
    \P\left(E_{\ell-1}^C\right)\ \le\ 2N_c\, e^{-d_0/16}+N_c\sum_{k=1}^{\ell-1}\left(2e^{-c_k d_k}+e^{-d_k/16}\right) \, .
\end{align*}
Then, we have that
\begin{equation}\label{eq:altogheter}
\begin{split}
    \P \left( E_\ell^{C} \right) &\le \P \left( E_\ell^{C} \ \Big| \ E_{\ell-1} \right) + \P\left( E^{C}_{\ell-1}\right)\\
 & \le \sum_{j=1}^{N_c} \left[\P \left(\|\alpha^{(\ell)}(X'_{:j})\|_2^2 < a_\ell d_\ell \ \Big| \ E_{\ell-1} \right) + \P \left(\|\alpha^{(\ell)}(X'_{:j})\|_2^2 > b_\ell d_\ell \ \Big| \ E_{\ell-1} \right)\right] \\
 & \qquad+ \P\left( E^{C}_{\ell-1}\right) \, .
\end{split}    
\end{equation}

Since $E_{\ell-1}\in\cF_{\ell-1}$, for every event $A$ we have $\P(A\,|\,E_{\ell-1})=\E\big[\mathbf 1_{E_{\ell-1}}\P(A\,|\,\cF_{\ell-1})\big]/\P(E_{\ell-1})$, so it suffices to bound $\P(A\,|\,\cF_{\ell-1})$ uniformly on the event $E_{\ell-1}$.

By Lemma~\ref{lem:norm_of_gaussian_vectors}, we notice that on $E_{\ell-1}$, it holds
\begin{equation}\label{eq:upppppp}
    \begin{split}
        \P \left(\|\alpha^{(\ell)}(X'_{:j})\|_2^2 > b_\ell d_\ell \ \Big| \ \cF_{\ell-1} \right)
  &\le \P \left(L_\sigma^2\sum_{i=1}^{d_\ell}\big(Z_i^j\big)^2 > 2L_\sigma^2 b_{\ell-1} d_\ell \ \Big| \ \cF_{\ell-1} \right) \\
  & \le \P \left( \sum_{i=1}^{d_\ell} \sqrt{d_{\ell-1}}\Big(\frac{Z_i^j}{\|\alpha^{(\ell-1)}(X'_{:j})\|_2}\Big)^2 > 2 d_\ell\ \Big| \ \cF_{\ell-1} \right)\\
  & \le e^{-d_\ell/16} \, ,
    \end{split}
\end{equation}
where we used $Y_i^j\le L_\sigma^2(Z_i^j)^2$ and $b_\ell=2L_\sigma^2 b_{\ell-1}$ in the first step, $s_j^2\le b_{\ell-1}$ in the second step, and the fact that, conditioned on $\cF_{\ell-1}$, $d_{\ell-1}\frac{Z_1^j}{\|\alpha^{(\ell-1)}(X'_{:j})\|_2}, \dots, d_{\ell-1}\frac{Z_{d_{\ell}}^j}{\|\alpha^{(\ell-1)}(X'_{:j})\|_2}$ are iid $\cN(0,1)$ in the last step.

Let us define
\begin{equation}\label{eq:m_ell}
m_\ell := \inf_{h \in [a_{\ell-1}, b_{\ell-1}]} \, \E_{Z\sim \mathcal{N}(0, h)} \big[\sigma^2(Z)\big] \, .
\end{equation}
We note that $m_\ell>0$. Indeed, the map $h\mapsto\E_{Z\sim\cN(0,h)}[\sigma^2(Z)]=\E_{g\sim\cN(0,1)}[\sigma^2(\sqrt h\, g)]$ is continuous on $(0,\infty)$, and it is strictly positive for every $h>0$: otherwise $\sigma(\sqrt h\, g)=0$ almost surely, i.e.\ $\sigma=0$ Lebesgue-almost everywhere, which is excluded by assumption. Hence the infimum over the compact interval $[a_{\ell-1},b_{\ell-1}]\subset(0,\infty)$ is attained and positive. 

Note also that $m_\ell\le L_\sigma^2 b_{\ell-1}$, since $\E_{Z\sim\cN(0,h)}[\sigma^2(Z)]\le L_\sigma^2 h$. We notice that on $E_{\ell-1}$,
\[
\E \big[Y_i^j \ \big| \ \cF_{\ell-1}\big] = \E_{Z\sim\cN(0,s_j^2)}\big[\sigma^2(Z)\big] \ge m_\ell \, ,
\]
by definition of $m_l$. Conditioned on $\cF_{\ell-1}$, we note that $Y^j_1, \dots, Y^j_{d_\ell}$ are sub-exponential i.i.d.\ random variables with
\begin{align*}
\big\|Y_i^j-\E [Y_i^j \ | \ \cF_{\ell-1}]\big\|_{\psi_1} & \le C\,\|Y_i^j\|_{\psi_1} \\
&\le C L_\sigma^2 \big\|(Z_i^j)^2\big\|_{\psi_1} \\
& = C L_\sigma^2 \|Z_i^j\|_{\psi_2}^2\\
&= \tfrac{8}{3}\, C L_\sigma^2 \frac{\|\alpha^{(\ell-1)}(X'_{:j})\|_2^2}{d_{\ell-1}} \\
&\le \tfrac{8}{3}\, C L_\sigma^2 b_{\ell-1}\, ,
\end{align*}
where $C\ge1$ is an absolute constant, the $\psi_1$- and $\psi_2$-norms are taken with respect to the conditional distribution given $\cF_{\ell-1}$, and we used Lemma \ref{lem:subexp-subgauss-squared}, Lemma \ref{lem:centering}, and Lemma \ref{lem:gausspsi2} with $\tau=\frac{\|\alpha^{(\ell-1)}(X'_{:j})\|_2^2}{d_{\ell-1}}$. By applying Theorem~\ref{cor:bernstein} conditioned on $\cF_{\ell-1}$ to the centered variables $Y_i^j-\E[Y_i^j\,|\,\cF_{\ell-1}]$, with $t:=\frac{m_\ell}{2}d_\ell$, it follows that on $E_{\ell-1}$,
\begin{align*}
    \P\left(\|\alpha^{(\ell)}(X'_{:j})\|_2^2 < a_\ell d_\ell \ \Big| \ \cF_{\ell-1} \right)
    & \le \P\left(\left|\sum_{i=1}^{d_\ell} Y_i^j - d_{\ell}\,\E [Y_1^j \ | \ \cF_{\ell-1}]\right| \ge \frac{m_\ell}{2} d_\ell \ \Big| \ \cF_{\ell-1} \right) \\
    & \le 2\exp\left\{-c\min\left(\frac{t^2}{d_\ell K_\ell^2},\frac{t}{K_\ell}\right)\right\} \, ,
\end{align*}
where $c>0$ is an absolute constant, and in the first step we used that, on $E_{\ell-1}$, $a_\ell d_\ell=\frac{m_\ell}{2}d_\ell\le d_\ell\,\E[Y_1^j\,|\,\cF_{\ell-1}]-\frac{m_\ell}{2}d_\ell$, since $\E[Y_1^j\,|\,\cF_{\ell-1}]\ge m_\ell$. We now compute the minimum in the exponent. Plugging in $t=\frac{m_\ell}{2}d_\ell$, we have
\begin{align*}
    \min\left(\frac{t^2}{d_\ell K_\ell^2},\frac{t}{K_\ell}\right)
    = d_\ell\min\left\{\left(\frac{m_\ell}{2K_\ell}\right)^2,\ \frac{m_\ell}{2K_\ell}\right\}
    = d_\ell\left(\frac{m_\ell}{2K_\ell}\right)^2 \, ,
\end{align*}
where the last equality holds because $m_\ell\le L_\sigma^2 b_{\ell-1}\le K_\ell$ gives $\frac{m_\ell}{2K_\ell}\le\frac12<1$, so that the square is the smaller of the two terms. Therefore, on $E_{\ell-1}$, we have
\begin{align}\label{eq:smalll}
    \P\left(\|\alpha^{(\ell)}(X'_{:j})\|_2^2 < a_\ell d_\ell \ \Big| \ \cF_{\ell-1} \right)
    \le 2\exp\left\{-c_\ell\, d_\ell\right\} \, ,
\end{align}
where $c_\ell=c\left(\frac{m_\ell}{2K_\ell}\right)^2>0$ depends only on $\sigma$ and $\ell$, and not on $d_\ell$ or $N_c$.

Using \eqref{eq:altogheter}, \eqref{eq:upppppp}, \eqref{eq:smalll} and taking the union over $j\in[N_c]$, we obtain
\begin{align*}
    \P\left(E_\ell^C\right)\ \le\ 2N_c\, e^{-d_0/16}+N_c\sum_{k=1}^{\ell}\left(2e^{-c_k d_k}+e^{-d_k/16}\right)\, .
\end{align*}
Under (H3), there exist universal constants $C, c>0$ which depends only on $\sigma$ and $\kappa$ such that
\begin{align*}
    \P\left(E_\ell^c\right)\ \le \ C N_c\, e^{-cd}\, .
\end{align*}

Finally, let $a:= \alpha_{\ell}\min_{0\le\ell\le L-1}a_\ell>0$. Since on the event $E_\ell$ we have $\|\alpha^{(\ell)}(X'_{:j})\|_2^2\ge a_\ell d_\ell\ge a\, d_\ell$ for every $j\in[N_c]$, the first display of the proof yields the inclusion
\begin{align*}
    E_\ell\ \subseteq\ \left\{\Tr\left(\alpha^{(\ell)}(X')\ \alpha^{(\ell)}(X')^\top\right)\ \ge\ a\, d N_c\right\} ,
\end{align*}
and therefore
\begin{align*}
    \P\left(\Tr\left(\alpha^{(\ell)}(X')\ \alpha^{(\ell)}(X')^\top\right)\ <\ a\, d_\ell N_c\right)
    \ \le\ \P\left(E_\ell^c\right)
    \ \le\ C N_c\, e^{-cd}\, .
\end{align*}
This concludes the proof.
\end{proof}
\begin{restatable}{theorem}{mlpworks}\label{thm:mlpworks}
Let us assume (H1), (H3) and (H4), and let $\ell\in[L-1]$. Then there exist constants $ C,c, \bar{C}>0$ not dependent on $d$, and an absolute constant $D>0$, such that for all $d>D$, with probability at least $1-\bar{C} N_c\, e^{- c\, d}$ over $\theta$ and $X$,
\[
\reff\big(\alpha^{(\ell)}(X')\,\alpha^{(\ell)}(X')^\top\big)\ \ge\, C\log^8 d\, .
\]
\end{restatable}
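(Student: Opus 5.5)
The plan is to combine the two preceding lemmas directly, since the effective rank is the ratio $\Tr(M)/\opn{M}$ with $M:=\alpha^{(\ell)}(X')\,\alpha^{(\ell)}(X')^\top$. By Lemma~\ref{lem:trace-lower}, with probability at least $1-CN_c e^{-cd}$ we have $\Tr(M)\ge a\,dN_c$. By Lemma~\ref{lem:opnorm}, with probability at least $1-2\cdot 9^{-c(d+N_c)}-2\ell e^{-d}$ we have $\opn{M}\le C(d+N_c)$. A union bound places us on both events simultaneously. The total failure probability is at most $CN_ce^{-cd}+2\cdot9^{-c(d+N_c)}+2\ell e^{-d}$. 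Since $N_c\ge1$ and $\ell\le L-1$ is fixed, this is bounded by $\bar C N_c e^{-cd}$ after shrinking $c$ and enlarging $\bar C$, with constants depending only on $L$, $\sigma$ and the $\alpha_k$.

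On the intersection we obtain
\[
\reff(M)\ \ge\ \frac{a\,dN_c}{C(d+N_c)}\ \ge\ \frac{a}{2C}\,\min\{d,N_c\},
\]
using $d+N_c\le 2\max\{d,N_c\}$ and $dN_c/\max\{d,N_c\}=\min\{d,N_c\}$. There are two cases. If $N_c\le d$, the bound is at least $\tfrac{a}{2C}N_c$, and (H4) gives $N_c\ge C'\log^8 d$. If $N_c>d$, the bound is at least $\tfrac{a}{2C}d$, and $d\ge\log^8 d$ for all $d$ larger than an absolute constant $D$ (indeed $d/\log^8 d\to\infty$). In both cases $\reff(M)\ge C''\log^8 d$ with $C''=\tfrac{a}{2C}\min\{C',1\}$, which does not depend on $d$.

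The main care point is bookkeeping of constants rather than any new estimate. I must check that $a$ from Lemma~\ref{lem:trace-lower} and $C$ from Lemma~\ref{lem:opnorm} depend only on $\sigma$, $L$ and the width ratios in (H3), and not on $d$ or $N_c$. I also need to reconcile the two lemmas' normalizations: one uses $d_\ell=\alpha_\ell d$ and the other the minimum width $d$. This only changes constants, via (H3). Finally, the threshold $D$ should be absolute, coming solely from the inequality $d\ge\log^8 d$, while the statement's remaining constants absorb everything else.
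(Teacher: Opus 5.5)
Your proposal is correct and follows the paper's proof essentially verbatim: intersect the events of Lemma~\ref{lem:trace-lower} and Lemma~\ref{lem:opnorm} by a union bound, use $dN_c/(d+N_c)\ge\tfrac12\min\{d,N_c\}$, and conclude from (H4) together with $d\ge\log^8 d$ for $d$ beyond an absolute threshold. Your bookkeeping is slightly more careful than the paper's: you keep the constant from (H4) in $C''=\tfrac{a}{2C}\min\{C',1\}$, whereas the paper simply writes $N_c\ge\log^8 d$.
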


\begin{proof}
By definition of the effective rank,
\begin{align*}
    \reff\big(\alpha^{(\ell)}(X')\ \alpha^{(\ell)}(X')^\top\big) = \frac{\Tr\big(\alpha^{(\ell)}(X')\ \alpha^{(\ell)}(X')^\top\big)}{\opn{\alpha^{(\ell)}(X')\ \alpha^{(\ell)}(X')^\top}}\, ,
\end{align*}
By Lemma~\ref{lem:opnorm}, there exist constants $C,c>0$ such that
\[
\P_{\theta, X'} \left(\opn{\alpha^{(\ell)}(X')\,\alpha^{(\ell)}(X')^\top}> C(d+N_c)\right)\ \le\ 2\cdot 9^{-c(d+N_c)}+2\ell e^{-d}\, .
\]
On the other hand, by Lemma~\ref{lem:trace-lower}, there exist constants $a, c', C'>0$ such that
\begin{align*}
    \P_{\theta,X'}\left(\Tr\left(\alpha^{(\ell)}(X')\,\alpha^{(\ell)}(X')^\top\right)\ <\ a\, d N_c\right)\ \le\ C' N_c\, e^{-c' d} \, .
\end{align*}
Therefore, by a union bound, with probability at least
\[
1-2\cdot 9^{-c(d+N_c)}-2\ell e^{-d}-C'N_c e^{-c'd}\ \ge\ 1-\bar C N_c\, e^{-\bar c\, d}
\]
with $\bar c:=\min\{c\log 9,\,1,\,c'\},\bar C:=2+2L+C'$, it holds that
\begin{align*}
    \reff\big(\alpha^{(\ell)}(X')\ \alpha^{(\ell)}(X')^\top\big)
    = \frac{\Tr\big(\alpha^{(\ell)}(X')\ \alpha^{(\ell)}(X')^\top\big)}{\opn{\alpha^{(\ell)}(X')\ \alpha^{(\ell)}(X')^\top}}
    \ \ge\ \tilde{C}\,\frac{d N_c}{d+N_c}
    \ \ge\ \frac{\tilde{C}}{2}\,\min\{d,N_c\}\, ,
\end{align*}
with $\tilde C:=a\kappa/C$, where in the last step we have used the fact that $\frac{dN_c}{d+N_c}\ge\frac12\min\{d,N_c\}$. Therefore, since by (H4) $N_c\ge\log^8 d$, and $\log^8 d\le d$ for $d$ large enough, we conclude that, with probability at least $1-\bar C N_c e^{-\bar c d}$,
\begin{align*}
    \reff\big(\alpha^{(\ell)}(X')\ \alpha^{(\ell)}(X')^\top\big)\ \ge\ \frac{\tilde C}{2}\,\log^8 d\, .
\end{align*}
\end{proof}
\subsection{Proof of Proposition \ref{prop:scalarlandscape} }\label{app:lim_func}
We start by proving that $\cJ$ separates the two sources of quantization error. The first is the rounding error, which does not depend on the bit precision and would be present even with an infinite grid; the second is the clipping error, due to inputs falling outside the representable range $[-M_Bs,M_Bs]$, and it is the only term through which $M_B$ enters. We record this decomposition in the following corollary, which will be the starting point of our analysis of the loss landscape.
\begin{corollary}\label{cor:series}
For every $B\geq2$ and $s>0$,
\begin{align*}
\cJ(M_B,s) &=\mathcal{J}_\infty(s)+4s\sum_{j=0}^{\infty}g\!\left(\left(M_B+j+\tfrac12\right)s\right), 
\end{align*}
with
\begin{align*}
\mathcal{J}_\infty(s)&:= \E_{Z\sim\cN(0,1)}[(Z- s\lfloor Z/s \rceil)^2] = 1 - \sum_{j=0}^{\infty}4s\,g\bigl((j+\tfrac12)s\bigr)\, ,
\end{align*}
where $g(x):= \varphi(x) - x (1-\Phi(x))$, and both series converge absolutely.
\end{corollary}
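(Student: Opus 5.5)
The plan is a direct computation of the Gaussian integral defining $\cJ(M_B,s)$, splitting the real line into the rounding cells of the grid. By symmetry of $\varphi$ and oddness of $\roundB{\cdot}$, we have $\cJ(M_B,s)=2\int_0^\infty (t-s\roundB{t/s})^2\varphi(t)\,dt$. The key observation is that $g(x)=\varphi(x)-x(1-\Phi(x))=\int_x^\infty (t-x)\varphi(t)\,dt$. This follows from $\int_x^\infty t\varphi(t)\,dt=\varphi(x)$. Hence $g'(x)=-(1-\Phi(x))$ and $g''=\varphi$, so $g$ is positive, decreasing, and has Gaussian decay, $0<g(x)\le\varphi(x)/x^2$ for $x>0$ (Mills' ratio). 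Absolute convergence of both series then follows immediately, since $\sum_j s\,g((j+\tfrac12)s)$ is dominated by a Gaussian-tail Riemann sum.

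\textbf{Step 1 (infinite grid).} Write $t^2-(t-s\lfloor t/s\rceil)^2$ as a telescoping sum over thresholds. For $t\ge0$ with $\lfloor t/s\rceil=n$, we have $t^2-(t-ns)^2=\sum_{k=1}^{n}\big[(t-(k-1)s)^2-(t-ks)^2\big]=\sum_{k=1}^n 2s\,(t-(k-\tfrac12)s)$. Each summand is nonnegative exactly when $t\ge (k-\tfrac12)s$, i.e.\ when $\lfloor t/s\rceil\ge k$. Therefore
\[
t^2-r_\infty(s,t)^2=\sum_{j\ge0}2s\,\bigl(t-(j+\tfrac12)s\bigr)_+ ,
\]
where $r_\infty$ is the unclipped residual. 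Integrating against $2\varphi$ on $(0,\infty)$ and using Tonelli (all terms are nonnegative) gives
\[
\mathcal J_\infty(s)=1-4s\sum_{j\ge0}\int_{(j+\frac12)s}^\infty\bigl(t-(j+\tfrac12)s\bigr)\varphi(t)\,dt=1-4s\sum_j g\bigl((j+\tfrac12)s\bigr).
\]

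\textbf{Step 2 (clipping).} With clipping at $M_B$, the same telescoping stops at $k=M_B$. For $t\ge0$ this gives
\[
t^2-r(s,t)^2=\sum_{j=0}^{M_B-1}2s\,\bigl(t-(j+\tfrac12)s\bigr)_+ .
\]
We check this also for $t>(M_B+\tfrac12)s$: there the residual is $t-M_Bs$, and the telescoping identity is still exact. Subtracting the two representations yields
\[
r(s,t)^2-r_\infty(s,t)^2=\sum_{j\ge M_B}2s\,\bigl(t-(j+\tfrac12)s\bigr)_+ .
\]
Integrating against $2\varphi$ on $(0,\infty)$ and reindexing $j\mapsto M_B+j$ gives the clipping term $4s\sum_{j\ge0}g((M_B+j+\tfrac12)s)$. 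Adding Step 1 gives the claimed formula for $\cJ(M_B,s)$.

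\textbf{Main obstacle.} The only delicate point is verifying the telescoping identity at cell boundaries and ties. Because the symmetric rounding convention $\lfloor t\rceil=\operatorname{sign}(t)\lfloor|t|+\tfrac12\rfloor$ is used, both sides agree there: the residual is continuous in $t$ up to a sign flip, and the squared residual is continuous. In any case ties form a Lebesgue-null set, so they do not affect the integrals. The interchange of sum and integral is justified by nonnegativity, and the finiteness of the resulting series follows from the Gaussian bound on $g$ noted above.
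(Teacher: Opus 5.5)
Your proof is correct and takes essentially the same route as the paper. Both rest on the identity $(t-ns)^2-(t-(n+1)s)^2=2s\,\bigl(t-(n+\tfrac12)s\bigr)$, i.e.\ adding level $n+1$ lowers the loss by exactly $4s\,g\bigl((n+\tfrac12)s\bigr)$; the difference is only bookkeeping: the paper telescopes the integrated losses $\mathcal{E}^{\infty}(n,s)-\mathcal{E}^{\infty}(n+1,s)$ starting from $\mathcal{E}^{\infty}(0,s)=1$ and reaches the infinite grid by dominated convergence, whereas you telescope pointwise in $t$ and swap sum and integral by Tonelli, which also yields absolute convergence directly since $4s\sum_j g\bigl((j+\tfrac12)s\bigr)=1-\mathcal{J}_\infty(s)\le 1$.
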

\begin{proof}
    For a natural number $n\ge0$, let $\Pi_{s,n}(z):=s\max\{-n,\min\{n,\lfloor z/s\rceil\}\}$ denote the quantizer with $2n+1$ levels, so that $\cJ(n,s)=\E[(Z-\Pi_{s,n}(Z))^2]$. Let us fix $n\ge 0$, and notice that the quantizers $\Pi_{s,n}$ and $\Pi_{s,n+1}$ agree on $\{|z|\le (n+\tfrac12)s\}$. On $\{z>(n+\tfrac12)s\}$ one has $\Pi_{s,n}(z)=ns$, while $\lfloor z/s\rceil\ge n+1$ there, so
$\Pi_{s,n+1}(z)=(n+1)s$; the case $z<-(n+\tfrac12)s$ is symmetric. It holds
    \begin{align*}
       \cJ(n,s) - \cJ( n+1,s) & = 2\int_{(n+1/2)s}^{\infty}\Bigl[(z-ns)^2-\bigl(z-(n+1)s\bigr)^2\Bigr]\varphi(z)\,d z \\
       & \overset{(*)}{=} 4s\int_{(n+\tfrac12)s}^{\infty}(z-(n+\tfrac12)s)\varphi(z)\,d z\\
       & 
  =4s\bigl[\varphi((n+\tfrac12)s)-(n+\tfrac12)s\ \left(1-\Phi\left((n+\tfrac12)s\right)\right)\bigr]\\
  & =4s\,g\left((n+\tfrac12)s\right),
    \end{align*}
where in (*) we have used the fact that
\[
  (z-ns)^2-\bigl(z-(n+1)s\bigr)^2=2zs-2ns^2-s^2=2s\Bigl(z-\left(n+\tfrac12\right)s\Bigr),
\]
and $\int_x^\infty z\varphi(z) dz=\varphi(x)$ and $\int_x^\infty\varphi(z)\ dz=1-\Phi(x)$.
Let us fix $N>M_B$. Summing on $n=M_B, \dots, N-1$ gives
\[
\cJ(M_B,s)-\cJ(N,s)=\sum_{n=M_B}^{N-1}4s\,g\bigl((n+\tfrac12)s\bigr)\, .
\]
Since $0\le g(x)\le\varphi(x)$ for $x\ge0$, the series converges absolutely. Moreover
$\bigl(Z-\Pi_{s,N}(Z)\bigr)^2\le Z^2\in L^1$ and $\Pi_{s,N}(Z)\to s\lfloor Z/s \rceil$ point-wise as
$N\uparrow\infty$, so dominated convergence gives $\cJ(N, s)\to \mathcal{J}_\infty(s)$. Letting
$N\uparrow\infty$ and substituting $j=n-M_B$ concludes the proof. \newline 
Finally, since $\Pi_{s,0}\equiv0$ and hence $\cJ(0,s)=\E[Z^2]=1 $, we can use the above formula for $\cJ(M_B,s)-\cJ(N,s)$ with $M_B$ replaced by $0$ and $N \to \infty$ to prove that 
\[
\mathcal{J}_\infty(s)=1 - \sum_{n=0}^{\infty}4s\,g\bigl((n+\tfrac12)s\bigr)\, .
\]
 
\end{proof}
For clarity, we restate Proposition \ref{prop:scalarlandscape} below.
\inftyprop*
\begin{proof}
We prove each of the four points separately.
\begin{itemize}
    \item[(i)] The function $g$ is $\mathcal C^\infty$, with $g'=-(1-\Phi)$, $g''=\varphi$ and $g^{(k)}=p_k\varphi$ for a polynomial $p_k$ when $k\ge2$; in particular $|g^{(k)}(x)|\le C_k(1+x)^{k}\varphi(x)$ for $x\ge1$ and all $k\ge0$, using $g(x)\le\varphi(x)/x^2$ and $1-\Phi(x)\le\varphi(x)/x$ by Lemma~\ref{lem:millratio}. Hence every partial derivative in $(m,s)$ of the summand $s\,g\big((m+j+\tfrac12)s\big)$ is bounded by a summable function, on $\{s\ge s_0,\,m\ge1\}$. All series of partial derivatives thus converge uniformly on compact subsets of $[1,\infty)\times(0,\infty)$, and term-wise differentiation gives $\cbJ\in\mathcal C^\infty\big([1,\infty)\times(0,\infty)\big)$.
\item[(ii)] We start by noticing that
\begin{align*}
    g(as) & = \varphi(as) - as\big(1-\Phi(as)\big) = \int_{as}^\infty t\varphi(t)dt - as \int_{as}^\infty \varphi(t) dt \\
    & = \int_{as}^\infty (t -as) \varphi(t)dt \\
    & = s^2\int_{a}^\infty (u -a) \varphi(su) du \\
    & = s^2\int_{0}^\infty (u -a)_+ \varphi(su) du \, .
\end{align*}

Therefore, by definition of $\cbJ$ (cf.\ Corollary \ref{cor:series}), we can write
\begin{align*}
    \cbJ(m,s) &=1-4s\sum_{j= 0}^\infty
\left[
g\bigl((j+\tfrac12)s\bigr)
-g\bigl((j+m+\tfrac12)s\bigr)
\right] \\
& = 1-4s\sum_{j= 0}^\infty s^2\int_{0}^\infty \left[\left(u -(j+\tfrac12)\right)_+ -\left(u -(j+m+\tfrac12)\right)_+\right]\varphi(su) du \\
& = 1-4s^3\int_{0}^\infty \sum_{j= 0}^\infty\left[\left(u -(j+\tfrac12)\right)_+ -\left(u - m -(j+\tfrac12)\right)_+\right]\varphi(su) du \\
& = 1-4s^3\int_{0}^\infty D_m(u)\varphi(su) du \, ,
\end{align*}
where $D_m(u) := \sum_{j= 0}^\infty\left[\left(u -(j+\tfrac12)\right)_+ -\left(u - m -(j+\tfrac12)\right)_+\right]$.
Differentiating in $s$, it holds
\begin{align*}
    \partial_s \, \cbJ(m,s) &= -12s^2 \int_{0}^\infty D_m(u)\varphi(su) du + 4s^4 \int_{0}^\infty u^2D_m(u)\varphi(su) du \\
    & = 4s^2 \int_0^{\infty} (s^2u^2 - 3)D_m(u)\varphi(su) du \\
    & = -4s^2 \int_0^{\infty} (1-s^2u^2)D_m(u)\varphi(su) du - 8s^2 \int_0^{\infty} D_m(u)\varphi(su) du \\
    & \overset{(*)}{=} -4s^2 \int_0^{\infty} D_m(u) \frac{d}{d u}\left(u\varphi(su)\right) du - 8s^2 \int_0^{\infty} D_m(u)\varphi(su) du \\
    & = -4s^2 \left[D_m(u) u\varphi(su)\Big|_0^{\infty} - \int_0^{\infty}  u D'_m(u) \varphi(su)\, du\right]  - 8s^2 \int_0^{\infty} D_m(u)\varphi(su) du \\
    &=4s^2 \int_0^{\infty} [uD'_m(u) -2D_m(u)]\varphi(su) du
\end{align*}
where in (*) we used the fact that 
\[
\frac{d}{d u}\left(u\varphi(su)\right) = (1-s^2u^2)\varphi(us)\, .
\]
Integrating by parts we have 
\begin{equation}
    \begin{split}
         & \int [uD'_m(u) -2D_m(u)] du  \\
    & \quad= uD_m(u) - 3\int D_m(u)du \\
    & \quad= uD_m(u) - 3 \sum_{j=0}^\infty \int \left[\left(u -(j+\tfrac12)\right)_+ -\left(u - m -(j+\tfrac12)\right)_+\right] du \\
    &\quad= uD_m(u) - \frac{3}{2} \sum_{j=0}^\infty \Big[\left(u -(j+\tfrac12)\right)_+^2 -\left(u - m -(j+\tfrac12)\right)_+^2\Big] \, .        
    \end{split}
\end{equation}

Using again integration by parts, we get that
\begin{align*}
& \partial_s \, \cbJ(m,s) \\
& \quad= -4s^2 \int_0^{\infty}  \left(uD_m(u) -\frac{3}{2} \sum_{j=0}^\infty \Big[\left(u -(j+\tfrac12)\right)_+^2 -\left(u - m -(j+\tfrac12)\right)_+^2\Big]\right) \frac{d}{du} \varphi(su)  \, du\\
& \quad= 4s^4 \int_0^{\infty}  u P_m(u) \varphi(su)  \, du
\end{align*}
where $P_m(u):= uD_m(u) -\frac{3}{2} \sum_{j=0}^\infty \Big[\left(u -(j+\tfrac12)\right)_+^2 -\left(u - m -(j+\tfrac12)\right)_+^2\Big]$.
Now we notice that, for $0<u< m+\tfrac{1}{2}$, it holds 
\begin{align*}
    P_m(u) & =  \sum_{j=0}^\infty  u\left(u -(j+\tfrac12)\right)_+ -u\left(u - m -(j+\tfrac12)\right)_+ -\frac{3}{2}\left(u -(j+\tfrac12)\right)_+^2 \\
    & \quad+ \frac{3}{2}\left(u - m -(j+\tfrac12)\right)_+^2 \\
    & =  \sum_{j=0}^\infty \left(u-\frac{3}{2}\left(u -(j+\tfrac12)\right)_+\right)\left(u - j-\tfrac12\right)_+ \\
    & = \sum_{j=0}^{\lfloor u-1/2\rfloor_+} \left(u-\frac{3}{2}\left(u -j -\tfrac12\right)\right)\left(u - j-\tfrac12\right)\\
    & = \sum_{j=0}^{\lfloor u-1/2\rfloor_+} \left(-\frac{u}{2} +\frac 32 j +\frac34\right)\left(u - j-\tfrac12\right)   \, ,
\end{align*}
where $\sum_{j=0}^{\lfloor u-1/2\rfloor_+}$ denotes the sum over the indices $j\ge0$ with $u-j-\tfrac12>0$ (empty if $u<\tfrac12$). Writing $n:=\lfloor u-1/2\rfloor_++1$ for the number of such indices, the last sum equals $\frac{n}{2}\big(\frac14-(u-n)^2\big)$, which is nonnegative since $|u-n|\le\tfrac12$; hence $P_m\ge0$ on $(0,m+\tfrac12)$, and it vanishes exactly at the half-integers.

On the other hand, for all $m> \tfrac{1}{2}$, $u> m+1$, calling $n=\lfloor u-1/2\rfloor_++1$ and $r = \lfloor u-m-1/2\rfloor_+ +1$ we have that 
\begin{align*}
     P_m(u) & =  \sum_{j=0}^\infty  u\left(u -(j+\frac12)\right)_+ -u\left(u - m -(j+\frac12)\right)_+ -\frac{3}{2}\left(u -(j+\frac12)\right)_+^2 \\
     & \quad + \frac{3}{2}\left(u - m -(j+\frac12)\right)_+^2 \\ 
    & = \sum_{j=0}^{\lfloor u-1/2\rfloor_+} \left(-\frac{u}{2} +\frac 32 j +\frac34\right)\left(u - j-\frac12\right)  \\
    & \quad+ \sum_{j=0}^{\lfloor u-m-1/2\rfloor_+} \left( \frac{u}{2} - \frac32\left(j+\frac12 \right)  - \frac32 m\right) \left(u - m -j-\frac12\right)_+ \\
    & = \sum_{j=0}^{n-1} \left(-\frac{u^2}{2} + 2u\left(j+\frac12\right) - \frac{3}{2} \left(j+\frac12\right)^2\right) \\
    & \qquad+ \sum_{j=0}^{r-1}  \left( \frac{(u-m)^2}{2} -2(u-m)\left(j+\frac12 \right) + \frac32\left(j+\frac12 \right)^2  - m\left(u-m - \left(j+\frac12 \right)\right)\right) \\
    & = \frac{n}{2}\left(\frac14 -(u-n)^2\right) + \frac{r(u-m)^2}{2} - r^2(u-m) +\frac{r(4r^2-1)}{8} - m\left(r(u-m)- \frac{r^2}{2}\right)\\
    & = \frac{n}{2}\left(\frac14 -(u-n)^2\right)- \frac{r}{2}\left(\frac{1}{4} -(u-m -r)^2\right) - m\left(r(u-m)- \frac{r^2}{2}\right) \\
    & \overset{(*)}{\le} \frac{n}{8}- m\left(u-m-\frac{1}{2}\right) \\
  & \le \left(\frac{1}{8}-m\right)(u-m-1) + \frac{3}{16}(1-2m) \\
& < 0 \, ,
\end{align*}
where in (*) we have noticed that 
\begin{align*}
  & -\frac{1}{2} \le u-m-r = u-m-(\lfloor u-m-1/2\rfloor_+ +1) \le\frac{1}{2} \\
  & \qquad\implies \frac{r}{2}\left(\frac{1}{4} -(u-m -r)^2\right) \ge 0 \\
  & \left(r(u-m)- \frac{r^2}{2}\right) = \sum_{j=0}^{r-1} \left(u-m -j-\frac{1}{2}\right) \ge u-m -\frac{1}{2}\, ,
\end{align*}
and in the following step we used $n\le u+\tfrac12$.
We consider now what happens when $m+\tfrac{1}{2} < u < \min\{m+1, \lfloor m\rfloor +\tfrac{3}{2}\}$. On this interval, it holds
\begin{align*}
    D_m(u) & = \sum_{j= 0}^\infty\left[\left(u -(j+\tfrac12)\right)_+ -\left(u - m -(j+\tfrac12)\right)_+\right]\\
    & = \sum_{j=0}^{\lfloor m\rfloor} (u-j-\frac{1}{2}) - (u-m-\frac{1}{2})\\
    & = (\lfloor m \rfloor + 1)u -\frac{(\lfloor m\rfloor +1)^2}{2} - u + m + \frac{1}{2}\\
    & = \lfloor m \rfloor u - \frac{\lfloor m \rfloor(\lfloor m\rfloor + 2)}{2} + m
\end{align*}
and, in particular $D'_m(u) = \lfloor m \rfloor$. Note also that
\begin{align*}
 \frac{d}{d u } \sum_{j=0}^\infty \Big[\left(u -(j+\tfrac12)\right)_+^2 -\left(u - m -(j+\tfrac12)\right)_+^2\Big] = 2 D_m(u)   
\end{align*}
Let us compute 
\begin{align*}
    \frac{d}{d u} P_m(u) & = D_m(u) + u D'_m (u) - 3 D_m(u)\\
    & = u D'_m(u) - 2 D_m(u)\\
    & = u \lfloor m \rfloor - 2 \left(\lfloor m \rfloor u - \frac{\lfloor m \rfloor(\lfloor m\rfloor + 2)}{2} + m\right)\\
    & = \lfloor m \rfloor(\lfloor m\rfloor + 2 - u) - 2m \\
    & \le \lfloor m \rfloor(\lfloor m\rfloor + \frac32 - m) - 2m\\
    & = \lfloor m \rfloor\left(\frac32 -(m-\lfloor m\rfloor)\right) - 2 \lfloor m \rfloor -2(m - \lfloor m \rfloor)\\
    & = - \frac{\lfloor m \rfloor}{2} - (\lfloor m \rfloor + 2)(m- \lfloor m \rfloor)<0
\end{align*}
for all $m>0$. We notice that $P_m(m+\tfrac{1}{2}) = \frac{\lfloor m \rfloor+1}{2}(m -\lfloor m \rfloor)(\lfloor m \rfloor+1-m)$ is positive for non-integer $m$ and $0$ for integer $m$. 

Now, it follows that 
\begin{itemize}
    \item If $\min\{m+1, \lfloor m\rfloor +\tfrac{3}{2}\}= m+1$, then
    \begin{align*}
        P_m(m+1) \le \frac{3}{16}(1-2m) < 0 \, .
    \end{align*}
    \item If $\min\{m+1, \lfloor m\rfloor +\tfrac{3}{2}\}= \lfloor m\rfloor +\tfrac{3}{2}$, then
    \begin{align*}
        P_m\big( \lfloor m\rfloor +\tfrac{3}{2}\big)
        & = \big(\lfloor m\rfloor +\tfrac{3}{2}\big)\sum_{j= 0}^\infty\Big[\big(\lfloor m\rfloor +1 -j\big)_+ -\big(\lfloor m\rfloor +1- m -j\big)_+\Big] \\
        & \qquad -\frac{3}{2} \sum_{j=0}^\infty \Big[\big(\lfloor m\rfloor +1 -j\big)_+^2 -\big(\lfloor m\rfloor +1- m -j\big)_+^2\Big]\\
        & = \big(\lfloor m\rfloor +\tfrac{3}{2}\big)\Big[\sum_{j= 0}^{\lfloor m\rfloor} \big(\lfloor m\rfloor +1 -j\big) -\big(\lfloor m\rfloor +1- m\big)\Big] \\
        & \qquad -\frac{3}{2} \Big[\sum_{j=0}^{\lfloor m\rfloor} \big(\lfloor m\rfloor +1 -j\big)^2 -\big(\lfloor m\rfloor +1- m\big)^2\Big] \\
        &=\big(\lfloor m\rfloor+\tfrac32\big)
        \Big[\frac{(\lfloor m\rfloor+1)(\lfloor m\rfloor+2)}{2}-\big(\lfloor m\rfloor+1-m\big)\Big]\\
        &\qquad-\frac32
        \Big[\frac{(\lfloor m\rfloor+1)(\lfloor m\rfloor+2)(2\lfloor m\rfloor+3)}{6}-\big(\lfloor m\rfloor+1-m\big)^2\Big]\\
        &=-\big(\lfloor m\rfloor+\tfrac32\big)\big(\lfloor m\rfloor+1-m\big)
        +\frac32\big(\lfloor m\rfloor+1-m\big)^2\\
        &=\big(\lfloor m\rfloor+1-m\big)
        \Big[-\lfloor m\rfloor-\tfrac32+\frac32\big(\lfloor m\rfloor+1-m\big)\Big]\\
        &=-\frac12\big(\lfloor m\rfloor+1-m\big)\big(3m-\lfloor m\rfloor\big)\ <\ 0\, ,
    \end{align*}
    where in the second equality we used that $0<\lfloor m\rfloor+1-m\le1$, so that $(\lfloor m\rfloor+1-m-j)_+$ vanishes for $j\ge1$, and the last inequality holds since $\lfloor m\rfloor+1-m>0$ and $3m-\lfloor m\rfloor\ge2m>0$.
\end{itemize}
It remains to check what happens for $\lfloor m\rfloor +\tfrac{3}{2}\le u\le m+1$. On the interval, it holds
\begin{align*}
P_m(u)
&=\sum_{j=0}^\infty\Big[u\big(u-j-\tfrac12\big)_+-\frac32\big(u-j-\tfrac12\big)_+^2\Big] \\
& \quad -\sum_{j=0}^\infty\Big[u\big(u-m-j-\tfrac12\big)_+-\frac32\big(u-m-j-\tfrac12\big)_+^2\Big]\\
&=\sum_{j=0}^{\lfloor m\rfloor+1}\Big[u\big(u-j-\tfrac12\big)-\frac32\big(u-j-\tfrac12\big)^2\Big]
 -u\big(u-m-\tfrac12\big)+\frac32\big(u-m-\tfrac12\big)^2\\
&=\sum_{j=0}^{\lfloor m\rfloor+1}\Big[-\frac{u^2}{2}+2u\big(j+\tfrac12\big)-\frac32\big(j+\tfrac12\big)^2\Big]
 \\
 & \quad +\big(u-m-\tfrac12\big)\Big[\frac12\big(u-m-\tfrac12\big)-\frac12-m\Big]\\
&=-\frac{\lfloor m\rfloor+2}{2}\,u^2+(\lfloor m\rfloor+2)^2\,u-\frac{(\lfloor m\rfloor+2)\big[4(\lfloor m\rfloor+2)^2-1\big]}{8}\\
&\qquad-\frac12\big(u-m-\tfrac12\big)\big(m+\tfrac32-u\big)-m\big(u-m-\tfrac12\big)\\
&=\frac{\lfloor m\rfloor+2}{2}\big(u-\lfloor m\rfloor-\tfrac32\big)\big(\lfloor m\rfloor+\tfrac52-u\big)
 \\
 & \quad-\frac12\big(u-m-\tfrac12\big)\big(m+\tfrac32-u\big)-m\big(u-m-\tfrac12\big)\\
&\overset{(*)}{\le}\frac{u-m-\tfrac12}{2}\Big[(\lfloor m\rfloor+2)\big(m-\lfloor m\rfloor\big)-\tfrac12-2m\Big]\\
&=-\frac{u-m-\tfrac12}{2}\Big[\lfloor m\rfloor\big(2-m+\lfloor m\rfloor\big)+\tfrac12\Big]\ <\ 0\, ,
\end{align*}
where in (*) we used 
\begin{align*}
0& \le u-\lfloor m\rfloor-\tfrac32 = (u-m-\frac12) - (\lfloor m\rfloor +1 - m) \\
& \le(u-m-\frac12) - (\lfloor m\rfloor +1 - m)(u-m-\frac12)\\
& =\left[1 -  (\lfloor m\rfloor +1 - m)\right](u-m-\frac12)\\
& = (m- \lfloor m\rfloor)(u-m-\frac12)
\end{align*}
together with
\begin{align*}
& 0\le\lfloor m\rfloor+\tfrac52-u\le1 \\
& m+\tfrac32-u\ge m + \frac{3}{2} -(m+1) =\frac12.
\end{align*}
Therefore, for all $m> 1/2$, $P_m$ has a unique zero $u^*_m$ in $[m+\tfrac12, \min\{m+1, \lfloor m\rfloor +\tfrac{3}{2}\}]$, and moreover $P_m\ge0$ on $(0,u^*_m)$ and $P_m<0$ on $(u^*_m,\infty)$. Then 
\begin{align*}
\partial_s \, \cbJ(m,s) & = 4s^4 \int_0^{\infty}  u P_m(u) \varphi(su)  \, du\\
& = \frac{4s^4}{\sqrt{2\pi}} e^{-s^2u^{*^2}_m/2} \int_0^\infty u P_m(u) e^{-s^2(u^2 - u^{*^2}_m)/2} du
\end{align*}
Notice now that $\frac{4s^4}{\sqrt{2\pi}} e^{-s^2u^{*^2}_m/2} >0$ for all $m, s$, and $\int_0^\infty u P_m(u) e^{-s^2(u^2 - u^{*^2}_m)/2} du$ is strictly increasing in $s$, since its derivative in $s$ is $\int_0^\infty u\, s\,(u^{*^2}_m - u^2) P_m(u) e^{-s^2(u^2 - u^{*^2}_m)/2} du$ and the integrand is nonnegative and not identically zero, as $(u^{*^2}_m - u^2)$ and $P_m(u)$ have the same sign. Therefore, for all $m>1/2$, $s \mapsto \cbJ_s(m, s)$ has at most one zero. For all $m>1/2$, a minimum exists because 
\begin{align*}
    \cbJ(m,s) & =  1-4s^3\int_{0}^\infty D_m(u)\varphi(su) du \\
    &\le  1-4s^3\int_{1/2}^\infty D_m(u)\varphi(su) du \\
    & \le 1- 4s^3\int_{m+\tfrac{1}{2}}^\infty m\varphi(su) du \\
    & <1
\end{align*}
while $\lim_{s\to0}\,\cbJ(m,s)=\lim_{s\to\infty}\,\cbJ(m,s)=1$; indeed, $D_m(u)=0$ for $u<\tfrac12$ and $D_m(u)\le m(u+\tfrac12)$, so that $4s^3\int_0^\infty D_m(u)\varphi(su)\,du\le 4ms^3\int_{1/2}^\infty(u+\tfrac12)\varphi(su)\,du$, which tends to $0$ both as $s\to0$ and as $s\to\infty$.

We note now that at the critical point $s^*(m)$ we have
\begin{align*}
    0 & =  \cbJ_s(m,s^*(m)) \\
    & = \frac{4s^*(m)^4}{\sqrt{2\pi}} e^{-s^{*^2}(m)u^{*^2}_m/2} \int_0^\infty u P_m(u) e^{-s^{*^2}(m)(u^2 - u^{*^2}_m)/2} du \\
    &\implies \int_0^\infty u P_m(u) e^{-s^*(m)^2(u^2 - u^{*^2}_m)/2}\, du=0 \, .
\end{align*}
Finally, at $s^*(m)$, it holds
\begin{align*}
    &  \cbJ_{ss}(m,s^*(m)) \\
    & \quad= \frac{4s^*(m)^4}{\sqrt{2\pi}} e^{-s^{*^2}(m)u^{*^2}_m/2}\frac{d}{ds}\left(\int_0^\infty u P_m(u) e^{-s^2(u^2 - u^{*^2}_m)/2} du\right)\Big|_{s=s^*(m)}\\
    & \quad= \frac{4s^*(m)^4}{\sqrt{2\pi}} e^{-s^{*^2}(m)u^{*^2}_m/2}\int_0^\infty u\, s^{*}(m)(u^{*^2}_m - u^2) P_m(u)e^{-s^{*^2}(m)(u^2 - u^{*^2}_m)/2} du \\
    &\quad >0 \, 
\end{align*}
since $(u^{*^2}_m - u^2)$ and $P_m(u)$ have the same sign for all $u\in(0,\infty)$, and their product is not identically zero.
\item[(iii)] The function $z\mapsto\big(z-s\lfloor z/s\rceil\big)^2$ is even and $s$-periodic, and equals $z^2$ on $(-\tfrac s2,\tfrac s2)$; its Fourier expansion is therefore the rescaled expansion of $x^2$ on $(-\pi,\pi)$, namely
\[
\big(z-s\lfloor z/s\rceil\big)^2=\frac{s^2}{12}+\frac{s^2}{\pi^2}\sum_{k=1}^{\infty}\frac{(-1)^k}{k^2}\cos\Big(\frac{2\pi k z}{s}\Big),
\]
with absolute and uniform convergence. Taking expectations with respect to $Z\sim\cN(0,1)$ and using $\E[\cos(tZ)]=e^{-t^2/2}$ with $t=2\pi k/s$ yields
\begin{align*}
\mathcal{J}_\infty(s)& =\E\Big[\big(Z-s\lfloor Z/s\rceil\big)^2\Big]=\frac{s^2}{12}+\frac{s^2}{\pi^2}\sum_{k=1}^{\infty}\frac{(-1)^k}{k^2}\,e^{-2\pi^2k^2/s^2}\\
& = \frac{s^2}{12} + \frac{s^2}{\pi^2} \sum_{k=1}^\infty \frac{(-1)^k}{k^2} e^{-k^2\tau(s)}\, ,
\end{align*}
where $\tau(s) := \frac{2\pi^2}{s^2}$.
Therefore, for all $s>0$, we have that
\begin{align*}
 \mathcal{J}_{\infty}'(s) & = \frac{s}{6} + \frac{1}{\pi^2}\sum_{k=1}^\infty \frac{(-1)^k}{k^2} \frac{d}{ds} \left(s^2e^{-k^2\tau(s)}  \right)\\
 & =  \frac{s}{6} + \frac{1}{\pi^2}\sum_{k=1}^\infty \frac{(-1)^k}{k^2} \left[2s e^{-k^2\tau(s)} - s^2e^{-k^2\tau(s)} k^2\tau'(s)\right] \\
& = \frac{s}{6} + \frac{1}{\pi^2}\sum_{k=1}^\infty \frac{(-1)^k}{k^2} \left[2s e^{-k^2\tau(s)} + s^2e^{-k^2\tau(s)} \frac{4\pi^2k^2}{s^3}\right]\\
 & =  \frac{s}{6} + \frac{1}{\pi^2}\sum_{k=1}^\infty \frac{(-1)^k}{k^2} 2s\left[ 1 +  \frac{2\pi^2k^2}{s^2}\right]e^{-k^2\tau(s)} \\
 & =  \frac{s}{6} + \frac{1}{\pi^2}\sum_{k=1}^\infty (-1)^k 2s\left[ \frac{1}{k^2} +  \tau(s)\right]e^{-k^2\tau(s)} \\
 & <\frac{s}{6}\, ,
\end{align*}
since  $\left[ \frac{1}{k^2} +  \tau(s)\right]e^{-k^2\tau(s)}$ is strictly decreasing to $0$ in $k$ and $-(1+\tau)e^{-\tau}<0$. Recall now that $g(x) = \varphi(x) - x(1-\Phi(x))$ and $g'(x) = -x\varphi(x) - (1- \Phi(x)) + x\varphi(x)  = -(1-\Phi(x))$, hence
\begin{equation}\label{eq:cbjs}
    \begin{split}
        \cbJ_s(m, s)  &= \mathcal{J}_{\infty}'(s) +4\sum_{j=0}^\infty \left[\varphi\left(\left(m+\frac{1}{2} + j\right)s\right) - \right. \\
   &\qquad \left.2s \left(m+\frac{1}{2} + j\right)\left(1 - \Phi\left(\left(m+\frac{1}{2} + j\right)s\right)\right)\right] \, .
    \end{split}
\end{equation}
We note that, whenever $y\ge \sqrt{2}$, we have 
\begin{align}\label{eq:boundddony}
    \varphi(y) - 2y(1-\Phi(y)) < \frac{1 - y^2}{1+y^2} \varphi(y) \le -\frac{1}{3}\varphi(y)  \,
\end{align}
where we have used Lemma \ref{lem:millratio}. At $s = \frac{\sqrt{2}}{m+ \tfrac{1}{2}}$, it holds $\left(m+\tfrac{1}{2} + j\right)\frac{\sqrt{2}}{m+ \tfrac{1}{2}} \ge \sqrt{2}$ for all $j\ge0$. Therefore, combining the fact that all the terms in the sum appearing in \eqref{eq:cbjs} are negative and \eqref{eq:boundddony}, and keeping only the term $j=0$, we obtain
\begin{align*}
    \cbJ_s\left(m, \frac{\sqrt{2}}{m+ \tfrac{1}{2}}\right) < \frac{\sqrt{2}}{6m+ 3} - \frac{4}{3}\varphi(\sqrt{2}) \le \frac{\sqrt{2}}{9} - \frac{4}{3}\varphi(\sqrt{2}) <0
\end{align*}
for all $m\ge1$, since $\frac{\sqrt2}{9}$ and $\frac43\varphi(\sqrt2)=\frac{4}{3}\frac{e^{-1}}{\sqrt{2\pi}}$.
By (ii) of Proposition \ref{prop:scalarlandscape}, the function $\cbJ_s(m, \cdot)$ has a unique zero in $s^*(m)$ with $\cbJ_{ss}(m, s^*(m))>0$. By continuity it holds that
\begin{align*}
& \cbJ_s(m, s) <0 \qquad s <  s^*(m)   \\
& \cbJ_s(m, s) >0 \qquad s >  s^*(m) \, , 
\end{align*}
and, since $\cbJ_s\left(m, \frac{\sqrt{2}}{m+ \tfrac{1}{2}}\right)<0$, we know that $\frac{\sqrt{2}}{m+ \tfrac{1}{2}} < s^*(m)$. Let us call now $y_j = (m+j+\tfrac{1}{2})s^{*}(m)$. Since $y_j =(m+j+\tfrac{1}{2})s^{*}(m) >\frac{\sqrt{2}}{m+ \tfrac{1}{2}} (m+j+\tfrac{1}{2})\ge\sqrt{2}$, for all $j\ge 0$, we have that 
\begin{align*}
    y_j\varphi(y_j) - 2(1-\Phi(y_j)) \overset{(*)}{>} \left(y_j - \frac{2}{y_j}\right)\varphi(y_j) >0 \, , 
\end{align*}
where in (*) we have used again Lemma \ref{lem:millratio}. Since $g'=-(1-\Phi)$, we have $\cbJ_m(m,s)=-4s^2\sum_{j=0}^\infty\big(1-\Phi\big((m+j+\tfrac12)s\big)\big)$, and differentiating in $s$ gives $\cbJ_{ms}(m,s)=4s\sum_{j=0}^\infty\big[x_j\varphi(x_j)-2(1-\Phi(x_j))\big]$ with $x_j=(m+j+\tfrac12)s$. Therefore, 
\begin{align}\label{eq:cbjms}
    \cbJ_{ms}(m, s^*(m)) = 4s^*(m) \sum_{j=0 }^\infty \Big[y_j \varphi(y_j) - 2 (1-\Phi(y_j))\Big] > 0 \, .
\end{align}
 By the implicit function theorem, since $\cbJ_{ss}\bigl(m,s^*(m)\bigr)>0$ for every $m\ge1$, we have that $s^*(\cdot)\in\mathcal{C}^\infty([1,\infty))$, and 
\begin{align}\label{eq:impthm}
    \frac{d s^*}{dm}(m) =-\frac{\cbJ_{ms}\bigl(m,s^*(m)\bigr)}{\cbJ_{ss}\bigl(m,s^*(m)\bigr)}.
\end{align}
Combining \eqref{eq:cbjms}, \eqref{eq:impthm} and $\cbJ_{ss}\bigl(m,s^*(m)\bigr)>0$, we conclude that 
$$
\frac{d s^*}{dm}(m)
=-\frac{\cbJ_{ms}\bigl(m,s^*(m)\bigr)}
{\cbJ_{ss}\bigl(m,s^*(m)\bigr)}<0\, .
$$ 

\end{itemize}
\end{proof}

\subsubsection{Computation of $\kappa(m)$ for $m>0$ integer} \label{sec:compofkappa}
Let $y_j:=j+\tfrac12$. We recall that
\[
\cbJ(m,s)=1-4\sum_{j=0}^{\infty}s\,g(y_j s)+4\sum_{j=0}^{\infty}s\,g\big((m+j+\tfrac12) s\big) \, ,
\]
where $g(x)=\varphi(x)-x\big(1-\Phi(x)\big)$. Since we are interested in integer $m$, we note that the two series telescope, because $m+j+\tfrac12=y_{j+m}$, and the expression reduces to the finite sum 
$$
\cJ(m,s)=1-4s\sum_{j=0}^{m-1}g(y_js) \, .
$$
Moreover, it holds $g'(x)=-(1-\Phi(x))$ and $g''(x)=\varphi(x)$. Then, for all $c>0$, it holds that
\begin{align*}
\big[s\,g(cs)\big]' & =g(cs)+cs\,g'(cs)\\
& =\varphi(cs)-cs\big(1-\Phi(cs)\big)-cs\big(1-\Phi(cs)\big)\\
& =\varphi(cs)-2cs\big(1-\Phi(cs)\big)\, ,
\end{align*}
and
\begin{align*}
   \big[s\,g(cs)\big]'' & = c\, g'(cs) + c\, g'(cs) + c^2 s\, g''(cs) \\
   & = - 2c\big(1 - \Phi(cs)\big) + c^2 s\,\varphi(cs)\\
   & = c\Big[cs\,\varphi(cs)-2\big(1-\Phi(cs)\big)\Big] .
\end{align*}
In particular, a direct computation leads to
\begin{equation}\label{eq:determiningsm}
    \begin{split}
\partial_s\,\cE^{\infty}(m,s)
& =-4\sum_{j=0}^{m-1}\big[s\,g(y_j s)\big]' \\
& =4\sum_{j=0}^{m-1}\Big[2y_js\big(1-\Phi(y_js)\big)-\varphi(y_js)\Big]\, .
    \end{split}
\end{equation}
and
\begin{align*}
\cJ_{ss}(m,s)
& =-4\sum_{j=0}^{m-1}\big[s\,g(y_j s)\big]'' \\
& =-4\sum_{j=0}^{m-1}y_j\Big[y_js\,\varphi(y_js)-2\big(1-\Phi(y_js)\big)\Big]\, ,
\end{align*}
and, consequently,
\begin{align*}
\kappa(m,s) & =s^2\,\cJ_{ss}(m,s)
\\
& =4s\sum_{j=0}^{m-1}\Big[2y_js\big(1-\Phi(y_js)\big)-(y_js)^2\,\varphi(y_js)\Big]\, .
\end{align*}

The following algorithm is used for the numerical evaluation of the scale-invariant curvature $\kappa(M_B,s^*(M_B))$ reported in Figure~\ref{fig:kappaisdecr}.

\begin{algorithm}[ht]
\caption{Computation of $\kappa(M_B,s^*(M_B))$ for $B\in\{2,3,4,5,8,9\}$. The functions \texttt{E\_s} and \texttt{kappa} implement the finite sums of Section~\ref{sec:compofkappa}; \texttt{s\_star} locates the unique zero of $\cE^{\infty}_s(m,\cdot)$ by Brent's method on the bracket $[\sqrt2/(m+\tfrac12),20]$.}
\label{alg:kappa}
\begin{lstlisting}
import numpy as np
from scipy.stats import norm
from scipy.optimize import brentq

phi, Q = norm.pdf, norm.sf   # Q = 1 - Phi

def E_s(m, s):
    t = (np.arange(m) + 0.5) * s
    return -4 * np.sum(phi(t) - 2 * t * Q(t))

def kappa(m, s):
    t = (np.arange(m) + 0.5) * s
    return 4 * s * np.sum(2 * t * Q(t) - t**2 * phi(t))

def s_star(m):
    lo, hi = np.sqrt(2) / (m + 0.5), 20.0
    return brentq(lambda s: E_s(m, s), lo, hi, xtol=1e-12)

bits = [2, 3, 4, 5, 8, 9]
for B in bits:
    m = 2 ** (B - 1) - 1
    s = s_star(m)
    print(B, m, s, kappa(m, s))
\end{lstlisting}
\end{algorithm}

The following figure shows the scale-invariant curvature $\kappa$ of the limiting objective $s\mapsto\mathcal{E}^\infty(M_B,s)$, evaluated at its minimizer $s^*(M_B)$, as a function of the bit-width $B$. In particular, we note that $\kappa$ decreases monotonically with $B$, so the objective is sharply peaked around the optimal scale at low precision and increasingly flat as $B$ grows, which is why the choice of scale matters most at W2 and W3.
\begin{figure}[ht!]
    \centering
    \includegraphics[width=0.5\linewidth]{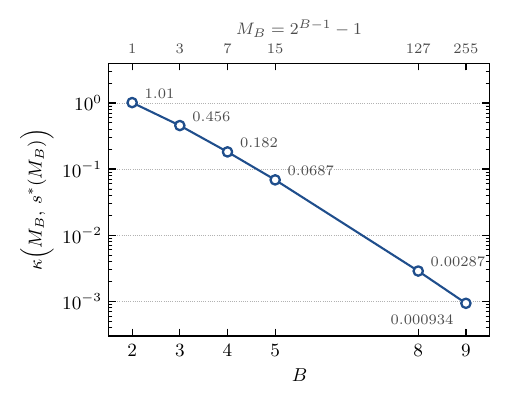}
    \caption{Scale-invariant curvature $\kappa(M_B, s^*(M_B))$ of $\mathcal{E}^\infty(M_B, s^*(M_B))$. $\kappa$ is decreasing as $B$ increases.}
    \label{fig:kappaisdecr}
\end{figure}

\subsubsection{Why relative curvature $\kappa$?}
\label{rem:scale_sense}
The scale-invariant curvature $\kappa$ is a natural quantity
for studying sensitivity to relative scale errors. To illustrate
this, fix the bit-width and let
$\vartheta \sim \mathcal N(0,I_d)$ and
$\vartheta_\sigma=\sigma\vartheta$, with $\sigma>0$. Define
the variance-normalized loss
\[
\mathcal E_\sigma(s)
:=
\frac{1}{d\sigma^2}
\mathbb E\!\left[
\left\|
\vartheta_\sigma
-s\roundB{\vartheta_\sigma/s}
\right\|_2^2
\right].
\]
Since $\vartheta_\sigma/s=\vartheta/(s/\sigma)$, we obtain
\[
\mathcal E_\sigma(s)=\mathcal E_1(s/\sigma),
\qquad
s_\sigma^*=\sigma s^*,
\]
where $s^*$ minimizes $\mathcal E_1$. Consequently, the same
relative scale error $\varepsilon$ produces the same
variance-normalized loss:
\[
\mathcal E_\sigma\!\left(s_\sigma^*(1+\varepsilon)\right)
=
\mathcal E_1\!\left(s^*(1+\varepsilon)\right).
\]
Although the ordinary second derivative depends on $\sigma$,
\[
\mathcal E_\sigma''(s_\sigma^*)
=
\sigma^{-2}\mathcal E_1''(s^*),
\]
the scale-invariant curvature does not:
\[
\kappa
=
(s_\sigma^*)^2\mathcal E_\sigma''(s_\sigma^*)
=
(s^*)^2\mathcal E_1''(s^*).
\]
It therefore governs the local loss increase under relative
scale perturbations:
\[
\mathcal E_\sigma\!\left(s_\sigma^*(1+\varepsilon)\right)
-\mathcal E_\sigma(s_\sigma^*)
=
\frac{\kappa}{2}\varepsilon^2+o(\varepsilon^2).
\]
This invariance motivates using relative perturbations to
assess sensitivity to scale choices, including those produced
by the rules in Table~\ref{tab:scale-selection}.

In practice, the scale-selection rules in
Table~\ref{tab:scale-selection} return different positive
scales for the same quantization setting. Their discrepancies
can be expressed as relative deviations from a common
reference scale:
\[
s_{\mathrm{rule}}
=
s_{\mathrm{ref}}(1+\varepsilon_{\mathrm{rule}}).
\]
Relative perturbations therefore provide a common way to
assess the effect of these differences across weight
magnitudes. When the reference scale minimizes the smooth
normalized loss, $\kappa$ governs the resulting loss increase up
to second order.

\subsection{Technical Lemmas}
The following elementary fact allows us to control the supremum of a quadratic polynomial over an interval by its values at three points. 
\begin{lemma}\label{lem:threepointLagr}
Let $a<b$, $m:=\frac{a+b}{2}$, and let $p$ be a real polynomial of degree at
most $2$. Then
\[
\sup_{s\in[a,b]}|p(s)|\ \le\ 3\max\big\{|p(a)|,\,|p(m)|,\,|p(b)|\big\}.
\]
\end{lemma}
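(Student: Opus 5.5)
The natural route is Lagrange interpolation at the three nodes $a$, $m$, $b$. Since $p$ has degree at most $2$, it coincides with its interpolant:
\[
p(s)=p(a)\,\ell_a(s)+p(m)\,\ell_m(s)+p(b)\,\ell_b(s),
\]
where $\ell_a,\ell_m,\ell_b$ are the Lagrange basis polynomials for these nodes. Applying the triangle inequality gives, for every $s\in[a,b]$,
\[
|p(s)|\le \max\{|p(a)|,|p(m)|,|p(b)|\}\cdot\Lambda(s),\qquad \Lambda(s):=|\ell_a(s)|+|\ell_m(s)|+|\ell_b(s)|.
\]
The claim therefore reduces to bounding the Lebesgue function, $\sup_{[a,b]}\Lambda\le 3$.

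To compute $\Lambda$, I would first apply the affine change of variables $s=m+h\,x$ with $h=(b-a)/2$. This maps $[a,b]$ onto $[-1,1]$, sends the nodes to $-1,0,1$, and leaves both the degree and the sup/max quantities unchanged. In the new variable the basis polynomials are
\[
\ell_{-1}(x)=\tfrac{x(x-1)}{2},\qquad \ell_0(x)=1-x^2,\qquad \ell_1(x)=\tfrac{x(x+1)}{2}.
\]

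By symmetry under $x\mapsto -x$, it suffices to treat $x\in[0,1]$. On this interval the signs are $\ell_{-1}\le 0$, $\ell_0\ge 0$ and $\ell_1\ge 0$. Hence
\[
\Lambda(x)=\tfrac{x(1-x)}{2}+(1-x^2)+\tfrac{x(x+1)}{2}=1+x-x^2.
\]
This is maximized at $x=\tfrac12$, where $\Lambda=\tfrac54\le 3$. This yields the stated constant with room to spare; in fact the sharper constant $5/4$ holds.

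The main point needing care is the sign bookkeeping for the basis polynomials on $[0,1]$. Beyond that, one should note explicitly that the interpolation identity is exact because $\deg p\le 2$ with three distinct nodes, which holds since $a<b$. Nothing else is delicate.
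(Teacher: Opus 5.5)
Your proof is correct and follows the paper's own argument: exact Lagrange interpolation at $a,m,b$, followed by the affine map onto $[-1,1]$ and a bound on the basis polynomials $\tfrac{x(x-1)}{2}$, $1-x^2$, $\tfrac{x(x+1)}{2}$. The only difference is that the paper bounds each basis polynomial by $1$ separately to get the constant $3$, whereas you compute the Lebesgue function exactly as $1+x-x^2$ on $[0,1]$. This gives the sharper constant $5/4$, which is also correct.
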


\begin{proof}
Define the Lagrange basis polynomials associated with the nodes $a,m,b$,
\[
L_a(s):=\frac{(s-m)(s-b)}{(a-m)(a-b)},\qquad
L_m(s):=\frac{(s-a)(s-b)}{(m-a)(m-b)},\qquad
L_b(s):=\frac{(s-a)(s-m)}{(b-a)(b-m)} .
\]
Each has degree $2$, equals $1$ at its own node and $0$ at the other two.
Hence $q:=p(a)L_a+p(m)L_m+p(b)L_b$ is a polynomial of degree at most $2$
that agrees with $p$ at $a$, $m$ and $b$; the difference $p-q$ has degree at
most $2$ and three distinct roots, so $p=q$ identically, i.e.
\begin{equation}\label{eq:lagrange}
p(s)=p(a)\,L_a(s)+p(m)\,L_m(s)+p(b)\,L_b(s)\qquad\text{for all }s\in\R .
\end{equation}
We now bound the basis polynomials on $[a,b]$. The affine change of variable
$s=m+\frac{b-a}{2}\,x$ maps $[a,b]$ onto $x\in[-1,1]$ and the nodes $a,m,b$
to $-1,0,1$, and transforms the basis polynomials into
\[
L_a=\frac{x(x-1)}{2},\qquad L_m=1-x^2,\qquad L_b=\frac{x(x+1)}{2}.
\]
On $[-1,1]$ the function $x\mapsto x(x-1)$ attains its minimum $-\frac14$ at
$x=\frac12$ and its maximum $2$ at $x=-1$, so $|L_a|\le1$; clearly
$L_m\in[0,1]$; and $|L_b|\le1$ by the symmetry $x\mapsto-x$. Therefore, by
\eqref{eq:lagrange} and the triangle inequality, for every $s\in[a,b]$,
\begin{align*}
|p(s)|& \le|p(a)|\,|L_a(s)|+|p(m)|\,|L_m(s)|+|p(b)|\,|L_b(s)|\\
&\le|p(a)|+|p(m)|+|p(b)| \\
& \le3\max\big\{|p(a)|,|p(m)|,|p(b)|\big\},    
\end{align*}

and the claim follows by taking the supremum over $s\in[a,b]$.
\end{proof}
We will control operator norms through the following standard $\varepsilon$-net
argument, which reduces the supremum over the unit spheres to a maximum over a
finite set of cardinality exponential in the dimension
(see \cite[Sec.~4.4]{vershynin2018}).
\begin{lemma}\label{lem:net}
Let $p,q\ge1$. There exist $\tfrac14$-nets $\cU\subset\Sph{p-1}$ and $\cV\subset\Sph{q-1}$ (with respect to the Euclidean norm) with $|\cU|\le 9^{p}$ and $|\cV|\le 9^{q}$, and for every $M\in\R^{p\times q}$
\[
\opn{M}\ \le\ 2\max_{(u,v)\in\cU\times\cV}|u^\top M v| .
\]
\end{lemma}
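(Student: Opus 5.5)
The plan is to prove the two assertions separately: first the cardinality bound by the standard volumetric (packing) argument, then the norm comparison by a one-step approximation argument. We take each net to be a maximal $\tfrac14$-separated subset of the sphere, which is automatically a $\tfrac14$-net.

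\textbf{Cardinality.} Take $\cU\subset\Sph{p-1}$ to be a maximal subset whose points are pairwise at Euclidean distance $>\tfrac14$; such a set is finite by compactness. By maximality, every $x\in\Sph{p-1}$ lies within distance $\tfrac14$ of some point of $\cU$, so $\cU$ is a $\tfrac14$-net. The balls of radius $\tfrac18$ centred at the points of $\cU$ are pairwise disjoint and all contained in the ball of radius $1+\tfrac18$. Comparing volumes gives
\[
|\cU|\,(1/8)^p\le(9/8)^p ,
\]
that is, $|\cU|\le 9^p$. The same argument in $\R^q$ yields $\cV$ with $|\cV|\le 9^q$.

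\textbf{Norm comparison.} By compactness of the spheres, pick $u\in\Sph{p-1}$ and $v\in\Sph{q-1}$ with $u^\top Mv=\opn{M}$. Choose $u_0\in\cU$ and $v_0\in\cV$ with $\|u-u_0\|_2\le\tfrac14$ and $\|v-v_0\|_2\le\tfrac14$. Then
\[
u^\top Mv-u_0^\top Mv_0=(u-u_0)^\top Mv+u_0^\top M(v-v_0).
\]
Each of the two terms on the right is bounded in absolute value by $\tfrac14\opn{M}$, using $\|u_0\|_2=\|v\|_2=1$. Hence
\[
\opn{M}\le\max_{(u',v')\in\cU\times\cV}|u'^\top Mv'|+\tfrac12\opn{M},
\]
and rearranging gives $\opn{M}\le 2\max_{(u',v')\in\cU\times\cV}|u'^\top Mv'|$.

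\textbf{Main obstacle.} Nothing here is delicate. The only point needing care is that the factor $2=1/(1-2\cdot\tfrac14)$ requires $\varepsilon=\tfrac14$ on both sides, and this choice is exactly what makes the volumetric constant $1+2/\varepsilon$ equal to $9$.
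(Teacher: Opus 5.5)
Your proof is correct and takes essentially the same route as the paper: the same splitting $u^\top Mv-u_0^\top Mv_0=(u-u_0)^\top Mv+u_0^\top M(v-v_0)$, the same bound by $\tfrac12\opn{M}$, and the same rearrangement. The only differences are cosmetic. You prove the covering bound $|\cU|\le(1+2/\varepsilon)^p=9^p$ directly via a maximal $\tfrac14$-separated set, whereas the paper just cites the standard volumetric estimate. You also use an exact maximizer $(u,v)$ instead of taking the supremum at the end, which is equivalent.
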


\begin{proof}
The cardinality bound is the standard volumetric estimate $\mathcal N(\Sph{n-1},\varepsilon)\le(1+2/\varepsilon)^n$ with $\varepsilon=\tfrac14$.
Let $u\in\Sph{p-1}$, $v\in\Sph{q-1}$ and pick $u_0\in\cU$, $v_0\in\cV$ with $\|u-u_0\|_2\le\frac14$, $\|v-v_0\|_2\le\frac14$. Since
$u^\top Mv-u_0^\top Mv_0=(u-u_0)^\top Mv+u_0^\top M(v-v_0)$,
\[
|u^\top M v|\ \le\ |u_0^\top M v_0|+\big(\|u-u_0\|_2+\|v-v_0\|_2\big)\opn{M}
\ \le\ \max_{(a,b)\in\cU\times\cV}|a^\top M b|+\tfrac12\opn{M}.
\]
Taking the supremum over $u,v$ and rearranging (all quantities are finite) gives the claim.
\end{proof}
Our analysis of the loss landscape of $\cE^{\infty}$ repeatedly requires sharp two-sided bounds on the Gaussian tail $1-\Phi$, which are most conveniently expressed through the inverse Mills ratio $\lambda:=\varphi/(1-\Phi)$. The lemma below collects the elementary properties of $\lambda$ that we shall need in the following lemma.
\begin{lemma}\label{lem:millratio}
    Let $\lambda(t) := \varphi(t)/\big(1-\Phi(t)\big)$ for $t\ge0$. Then
    \begin{enumerate}[label=(\roman*)]
        \item $\lambda\in\mathcal{C}^\infty([0,\infty))$ and $\lambda(0) = 2\varphi(0) = \sqrt{2/\pi}$;
        \item $\lambda'(t) = \lambda(t)\big(\lambda(t) - t\big)$ for all $t\ge0$;
        \item $t < \lambda(t) < t + \dfrac{1}{t}$ for all $t>0$; equivalently, $\dfrac{t}{1+t^2}\,\varphi(t) < 1-\Phi(t) < \dfrac{\varphi(t)}{t}$.
    \end{enumerate}
\end{lemma}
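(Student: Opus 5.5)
The three claims of Lemma~\ref{lem:millratio} are classical Mills-ratio facts, and I would prove them in the order (ii), (i), (iii), since (iii) rests on (ii). Throughout write $Q:=1-\Phi$. Then $Q>0$ on $[0,\infty)$, $Q'=-\varphi$ and $\varphi'(t)=-t\varphi(t)$.

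For (i), $\varphi$ and $Q$ are $\mathcal C^\infty$ and $Q$ does not vanish, so $\lambda=\varphi/Q$ is $\mathcal C^\infty$ on $[0,\infty)$. Moreover $Q(0)=\tfrac12$, which gives $\lambda(0)=2\varphi(0)=\sqrt{2/\pi}$. For (ii), the quotient rule gives
\[
\lambda'=\frac{\varphi' Q-\varphi Q'}{Q^2}=\frac{-t\varphi Q+\varphi^2}{Q^2}=\lambda(\lambda-t).
\]

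For the lower bound in (iii), use $\varphi(t)=\int_t^\infty u\varphi(u)\,du>t\int_t^\infty\varphi(u)\,du=tQ(t)$ for $t>0$, strictly since $u>t$ on a set of positive measure. This is $\lambda>t$, equivalently $Q<\varphi/t$.

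The upper bound $Q(t)>\frac{t}{1+t^2}\varphi(t)$ is the only step needing an idea. I would set $h(t):=Q(t)-\frac{t}{1+t^2}\varphi(t)$ and compute
\[
\frac{d}{dt}\Big(\frac{t}{1+t^2}\varphi(t)\Big)=\Big(\frac{1-t^2}{(1+t^2)^2}-\frac{t^2}{1+t^2}\Big)\varphi(t)=\frac{1-2t^2-t^4}{(1+t^2)^2}\varphi(t).
\]
Hence
\[
h'(t)=-\varphi(t)\Big(1+\frac{1-2t^2-t^4}{(1+t^2)^2}\Big)=-\frac{2}{(1+t^2)^2}\varphi(t)<0 .
\]
Since $h(t)\to0$ as $t\to\infty$ and $h$ is strictly decreasing, $h(t)>0$ for every $t\ge0$, in particular for $t>0$. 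Inverting gives $\lambda(t)<\frac{1+t^2}{t}=t+\frac1t$.

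The two forms of (iii) are equivalent by taking reciprocals, since all quantities are positive. The only potential obstacle is verifying that the derivative of $h$ simplifies so cleanly; the rest is routine.
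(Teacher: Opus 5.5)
Your proposal is correct and follows the paper's proof essentially line for line. The paper also gets (i) from smoothness with $1-\Phi>0$ and (ii) from the quotient rule. It obtains $\lambda>t$ from $\int_t^\infty\varphi(u)\,du<\int_t^\infty\frac{u}{t}\varphi(u)\,du=\varphi(t)/t$, and $\lambda<t+1/t$ from the auxiliary function $h(t)=1-\Phi(t)-\frac{t}{1+t^2}\varphi(t)$, which satisfies $h'(t)=-2\varphi(t)/(1+t^2)^2<0$ and $h(t)\to0$ as $t\to\infty$. One cosmetic point: despite your opening remark, your argument for (iii) never uses (ii), so that ordering is not needed.
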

\begin{proof}
    (i) is clear since $1-\Phi>0$ is smooth. For (ii), using $\varphi'(t) = -t\varphi(t)$ and $(1-\Phi)' = -\varphi$,
    \begin{align*}
        \lambda'(t) = \frac{-t\varphi(t)\big(1-\Phi(t)\big) + \varphi(t)^2}{\big(1-\Phi(t)\big)^2} = -t\,\lambda(t) + \lambda(t)^2 \, .
    \end{align*}
    For (iii), the upper bound on the tail follows from $1-\Phi(t) = \int_t^\infty\varphi(u)\,du < \int_t^\infty \frac{u}{t}\varphi(u)\,du = \varphi(t)/t$. For the lower bound let $h(t) := 1-\Phi(t) - \frac{t}{1+t^2}\varphi(t)$; then $h(t)\to0$ as $t\to\infty$ and a direct computation gives $h'(t) = -\frac{2\varphi(t)}{(1+t^2)^2} < 0$, so $h>0$ on $(0,\infty)$. Rearranging the two tail bounds gives $t<\lambda(t)<t+1/t$.
\end{proof}

\section{Experiments}\label{app:experiments}
In this section, we recall some quantization algorithm and preprocessing that are used for the experiments. Moreover, we report additional experiments that assess what our theory captures.

\subsection{Experimental Details}
\label{app:experimental-details}

This appendix specifies the setup of all experiments in
Section~\ref{sec:experiments} and Appendix~\ref{app:experiments}. The released code
(see the Reproducibility Statement) contains a frozen configuration file for every run,
and its \texttt{REPRODUCE.md} gives the commands for the five experiment families
below. Table~\ref{tab:exp-map} lists where the results of each family are reported.
As in Section~\ref{sec:experiments}, W$B$ denotes weight-only quantization at $B$ bits.
\begin{table}[h]
\centering
\caption{Experiment families and their reported results.}
\label{tab:exp-map}
\small
\begin{tabular}{@{}p{0.12\linewidth}p{0.30\linewidth}p{0.49\linewidth}@{}}
\toprule
Section & Content & Reported in \\
\midrule

\ref{subsec:globalscale}
& Global scale sensitivity and ablations
& Figures~\ref{fig:spread},
  \ref{fig:analyticregret_HIP},
  \ref{fig:spread-hip},
  \ref{fig:spread-extensions},
  \ref{fig:spread-without-minmax},
  \ref{fig:global-scale-perturbations},
  and~\ref{fig:calibration-size} \\
\addlinespace

\ref{app:zero-shot}
& Scale selection and zero-shot accuracy
& Tables~\ref{tab:scale-choice-performance-raw},
  \ref{tab:scale-choice-performance_HIP},
  \ref{tab:zero-shot-w3},
  \ref{tab:zero-shot-w2},
  and~\ref{tab:zero-shot-w2-engines-hip} \\
\addlinespace

\ref{app:HIP}
& Preprocessing and effective rank
& Figure~\ref{fig:w2-preprocessing-depth}
  and Table~\ref{tab:w2-preprocessing-depth} \\
\addlinespace

\ref{app:quanterror}
& Gaussian landscapes and finite-width convergence
& Figures~\ref{fig:gaussianlandscapes}
  and~\ref{fig:uniformerror} \\
\addlinespace

\ref{app:localreconstr}
& Local reconstruction sensitivity and empirical landscapes
& Figures~\ref{fig:local-gptq-sensitivity},
  \ref{fig:localreconstrHIP},
  \ref{fig:localreconstrHIPvdnoHIP},
  and~\ref{fig:landscape} \\

\bottomrule
\end{tabular}
\end{table}

\subsubsection{Models}
\label{app:models}

We quantize five publicly available decoder-only models (Table~\ref{tab:models}). Each is
loaded in fp16 at a pinned Hugging Face revision with a context length of 2{,}048 tokens.
OPT-125M is converted locally from the pinned PyTorch checkpoint to safetensors. Every
linear layer inside the transformer blocks is quantized: the attention projections
\texttt{q}, \texttt{k}, \texttt{v}, \texttt{o}, and the MLP projections (\texttt{gate},
\texttt{up} and \texttt{down} for Llama and Qwen; \texttt{fc1} and \texttt{fc2} for OPT). The quantization of each of this module exactly matches the quantization of $\vartheta^{(\ell)}$ in the sense of
Section~\ref{sec:notandpre}, calibrated on its own input activations.
The token embeddings and the language-model head stay in fp16. The short names in
Table~\ref{tab:models} are those used in the figures.

\begin{table}[h]
\centering
\caption{Models, short names used in the figures, and pinned checkpoint revisions.}
\label{tab:models}
\small
\resizebox{\linewidth}{!}{%
\begin{tabular}{@{}llll@{}}
\toprule
Model & Short name & Hugging Face repository & Revision \\
\midrule
OPT-125M & OPT-125M & \texttt{facebook/opt-125m} & \texttt{27dcfa74d334} \\
Llama-3.2-1B-Instruct & Llama-1B & \texttt{meta-llama/Llama-3.2-1B-Instruct} & \texttt{9213176726f5} \\
Llama-3.2-3B-Instruct & Llama-3B & \texttt{meta-llama/Llama-3.2-3B-Instruct} & \texttt{0cb88a4f764b} \\
Llama-3.1-8B-Instruct & Llama-8B & \texttt{meta-llama/Llama-3.1-8B-Instruct} & \texttt{0e9e39f249a1} \\
Qwen3-8B & Qwen3-8B & \texttt{Qwen/Qwen3-8B} & \texttt{b968826d9c46} \\
\bottomrule
\end{tabular}%
}
\end{table}

\subsubsection{Quantization setup}
\label{app:quantization}

\paragraph{Grid.}
All experiments are weight-only; activations remain in fp16. A W$B$ weight is quantized
with the round-to-nearest map $\Pi_{s,\MB}$ of \eqref{def:roundB}, i.e.\ onto the
symmetric uniform grid $s\{-\MB,\dots,\MB\}$ with $\MB = 2^{B-1}-1$ levels per side and
$2^B-1$ levels in total, with clipping to $\pm \MB s$. At $B=2$ this is the ternary grid
$\{-s,0,s\}$. We use $B\in\{2,3,4,6,8\}$. One fp16 scale is stored per quantization
group, and there are no zero-points. Per-channel W3 on Llama-3.1-8B therefore costs
$3.003$ bits per weight.

\paragraph{Granularity.}
\emph{Per-channel} quantization uses one scale per output row, which gives the
row-wise problems studied in Section~\ref{others}. \emph{Grouped} quantization uses one
scale per contiguous group of $g\in\{512,256,128,64\}$ input columns. With activation
ordering, the groups are formed after the column permutation.

\paragraph{GPTQ.}
Layers are quantized sequentially, block by block. Within a block the order is
$\{\texttt{q},\texttt{k},\texttt{v}\}\to \texttt{o}\to\{\texttt{up},\texttt{gate}\}\to\texttt{down}$.
Each block is calibrated on the outputs of the already-quantized preceding blocks
(``carry'' propagation), so the activations $X^{\ell-1}$ in \eqref{eq:PTQprob} come
from the partially quantized network. The Hessian proxy is
\[
  H=\frac{2}{N_c}\sum_{n=1}^{N_c} X_n X_n^\top ,
\]
where $X_n\in\R^{d\times 2048}$ holds the layer inputs for calibration window $n$. Up
to the constant factor, which cancels in the normalized loss of Section~\ref{others} and
in every scale rule below, this is the Gram matrix $XX^\top$ of the theory. $H$ is
accumulated in fp32 over $N_c=128$ windows with a forward batch of 4 windows. Before
inverting $H$ we add $\lambda\Id_d$ with
$\lambda = 0.01\cdot\operatorname{mean}(\operatorname{diag}H)$. Columns are processed in
order of decreasing $H_{ii}$ (activation ordering) in lazy batches of 128 columns, using
the inverse-Cholesky update of the reference implementation~\citep{frantar2022gptq}.
Columns with $H_{ii}=0$ are set to zero. For grouped quantization the scale of a group
is selected on the error-compensated weights at the point where the algorithm enters
the group.

\paragraph{Other engines.}

\emph{ResComp}~\citep{li2026rethinking} runs GPTAQ-style~\citep{li2025gptaq} paired
compensation plus a residual correction toward the full-precision weights, with
$\alpha_{\text{GPTAQ}}=\alpha_{\text{ResComp}}=0.25$. It uses the stability mode of the
reference implementation: \texttt{org} at W2 and \texttt{allw} otherwise.
\emph{QRoNoS}~\citep{zhang2026QRoNoS} re-solves the full layer at the first column and
uses ``reset'' propagation: each block is calibrated on full-precision inputs. Both
engines use the damping, activation ordering and block size above; for grouped rows the
compensation block is $\min(128,g)$. Because these engines use paired inputs, they
additionally admit the rule RTNH-cross, which searches $s$ over the grid of
Appendix~\ref{app:scale-rules} to minimize
$\|W X_{\mathrm{fp}} - \Pi_{s,\MB}(W)\, X_q\|_F^2$, where $X_{\mathrm{fp}}$ and $X_q$
are the full-precision and quantized-prefix inputs. We run it for $B\le 3$ only.

\paragraph{Hadamard incoherence processing (HIP).}
HIP (Appendix~\ref{app:HIP}) is applied independently to the input dimension of every
matrix, $W\mapsto WR$ with $R=\operatorname{diag}(\xi)\,Q$, where $Q$ is a normalized
Hadamard matrix ($Q^\top Q=\Id_d$) from the QuIP\# implementation~\citep{tseng2024quip}
at a pinned commit, and $\xi\in\{-1,1\}^d$ are Rademacher signs drawn from a seed and the
matrix name. The Hessian is transformed as $H\mapsto R^\top H R$, which leaves its
spectrum, and hence $\reff(H)$, unchanged (Appendix~\ref{app:HIP}). At inference the
layer input is rotated online by $R^\top$, so the network function is unchanged before
quantization. Unless stated otherwise the rotation seed is 0.

\subsubsection{Scale-selection rules}
\label{app:scale-rules}

Let $w$ denote the (error-compensated) weights of one row or group, and let
$s^e = \max_q |w_q|/\MB$ be its min-max scale (Section~\ref{sec:notandpre}). Every rule
outputs one scale $s$ per row or group (Table~\ref{tab:scale-selection}). Min-max and
Analytic require no search; Shrink-2.4 and WMSE search without activation data;
Proxy-static and RTNH use the Hessian.

\begin{table}[h]
\centering
\caption{Scale-selection rules. $e(s)=\Pi_{s,\MB}(w)-w$ is the rounding error at
scale $s$, and $H_{gg}$ is the undamped Hessian block of the row or group. The search
window for $\eta$ depends on the experiment (Appendix~\ref{app:scale-rules}).}
\label{tab:scale-selection}
\small
\begin{tabular}{@{}lll@{}}
\toprule
Rule & Scale & Objective \\
\midrule
Min-max & $s^e$ & none \\
Shrink-2.4 & $p\,s^e$, $p\in\{1.00,0.99,\dots,0.21\}$ & $\sum_q |e_q(p\, s^e)|^{2.4}$ \\
WMSE & $s^e e^{\eta}$, grid search & $\sum_q e_q(s)^2$ \\
Proxy-static (ours) & $s^e e^{\eta}$, grid search & $\sum_q e_q(s)^2/U_{qq}^2$ (GPTQ step loss) \\
RTNH & $s^e e^{\eta}$, grid search & $e(s)^\top H_{gg}\, e(s)$ \\
\midrule
Analytic & $s^*(\MB)\,\hat\sigma$ & none; $\hat\sigma$ is the RMS of $w$ \\
\bottomrule
\end{tabular}
\end{table}

\paragraph{Min-max and Shrink-2.4.}
Min-max is the symmetric min-max choice $s^e$ of Section~\ref{sec:notandpre}.
Shrink-2.4 is the clipping search of the GPTQ reference quantizer, applied to our
symmetric $(2^B-1)$-level grid: it scores the 80 shrink fractions $p_i = 1-i/100$,
$i=0,\dots,79$, of the min-max range by the $\ell^{2.4}$ rounding error.

\paragraph{Analytic scale.}
$s^*(\MB)$ is the unique minimizer of the limiting objective $\cJ(\MB,\cdot)$
(Proposition~\ref{prop:scalarlandscape}), obtained as the zero of
$\partial_s\cJ(\MB,\cdot)$ in the closed form \eqref{eq:determiningsm} by bisection.
This gives $s^*(\MB) = 1.2240,\ 0.6508,\ 0.3534,\ 0.1055,\ 0.0309$ for $B=2,3,4,6,8$
($\MB=1,3,7,31,127$). $\hat\sigma$ is computed on the current compensated weights and
excludes input columns with $H_{ii}=0$.
\paragraph{Proxy-static.}
\label{app:proxy-static}
Let $U$ be the upper Cholesky factor of the inverse
damped Hessian in activation order, and let
$e_q(s)=\Pi_{s,M_B}(w_q)-w_q$.
For each row or group, Proxy-static selects
\[
    s_{\mathrm{PS}}
    \in\operatorname*{arg\,min}_{s\in\mathcal S}
    \sum_q \frac{e_q(s)^2}{U_{qq}^2},
\]
where $\mathcal S$ is the candidate grid described below. Candidate scores use a fixed snapshot of the weights at group entry, including compensation from previous groups. The objective uses GPTQ's per-column loss weights
without simulating candidate-dependent error feedback.
After scale selection, quantization proceeds with the
usual GPTQ updates.
Algorithm~\ref{alg:proxy-static} summarizes the procedure.
We motivate proxy-static by analysing the GPTQ. 
Our selector is motivated by the loss incurred at each
GPTQ quantization step.
Let $A$ denote the positive-definite, damped Gram matrix
in activation order, and consider a quadratic loss 
$\mathcal L(z)=z^\top A z$.
At step $q$, let $F_q=\{q,\ldots,d\}$ denote the remaining
free coordinates, and let $w^{(q)}$ be their current
compensated weights. Fixing coordinate $q$ to its quantized
value introduces the residual
\[
    \epsilon_q(s)
    =w_q^{(q)}-Q_s(w_q^{(q)}).
\]
The optimal compensating update solves
\[
    \min_{\Delta\in\mathbb R^{|F_q|}}
    \Delta^\top A_{F_qF_q}\Delta
    \quad\text{subject to}\quad
    \Delta_q=-\epsilon_q(s).
\]
where $A_{F_qF_q}$ is the principal
submatrix of $A$ indexed by $F_q$. Writing $e_q$ for the corresponding coordinate vector,
the solution and minimum are
\[
    \Delta^\star
    =-\frac{\epsilon_q(s)}
    {[A_{F_qF_q}^{-1}]_{qq}}
    A_{F_qF_q}^{-1}e_q,
    \qquad
    \Delta\mathcal L_q(s)
    =\frac{\epsilon_q(s)^2}
    {[A_{F_qF_q}^{-1}]_{qq}}.
\]
Here the increment is measured from the conditional
quadratic minimizer given the previously fixed coordinates.

If $A^{-1}=U^\top U$ with $U$ upper triangular, then
\[
    [A_{F_qF_q}^{-1}]_{qq}=U_{qq}^2.
\]
Thus GPTQ's step loss is $\epsilon_q(s)^2/U_{qq}^2$.
Evaluating these losses for every candidate scale would
require recomputing the candidate-dependent compensation
trajectory.

Proxy-static instead freezes the weights at group entry.
For a row or group $g$ with snapshot $v$, it selects
\[
    s_{\mathrm{PS}}
    \in\operatorname*{arg\,min}_{s\in\mathcal S}
    \sum_{q\in g}
    \frac{(v_q-Q_s(v_q))^2}{U_{qq}^2}.
\]
This retains GPTQ's per-column loss weights while
approximating the sequential residuals by static rounding
residuals. The factor $U$ is already available from GPTQ,
so candidate scoring requires no additional factorization
or candidate-specific compensation pass.
After scale selection, the usual GPTQ updates are applied.
Algorithm~\ref{alg:proxy-static} summarizes the procedure.
\begin{algorithm}[t]
\caption{GPTQ with Proxy-static scale selection}
\label{alg:proxy-static}
\begin{algorithmic}[1]
\Require Weights $W\in\mathbb{R}^{m\times d}$,
         input Gram $H$, bit-width $B\geq 2$,
         group width $G$, candidate count $C$,
         interval $[\eta_{\min},\eta_{\max}]$,
         damping fraction $\lambda$
\Ensure Quantized weights $\widehat W$ and group scales
        $\{s^\star_{rg}\}$

\State $K\gets 2^{B-1}-1$
\State Define $Q_s(x)=s\,\operatorname{clip}
       (\operatorname{round}(x/s),-K,K)$
\State $\mathcal D\gets\{j:H_{jj}=0\}$
\State $W_{:,\mathcal D}\gets 0$;
       $H_{jj}\gets 1$ for $j\in\mathcal D$
\State Let $P$ sort $\operatorname{diag}(H)$ in descending order
\State $\widetilde W\gets WP^\top$;
       $\widetilde H\gets PHP^\top$
\State $\widetilde H\gets\widetilde H+
       \lambda\,\operatorname{mean}
       (\operatorname{diag}(\widetilde H))I$
\State Compute upper triangular $U$ satisfying
       $\widetilde H^{-1}=U^\top U$
\State $(\eta_k)_{k=1}^{C}\gets
       \operatorname{linspace}(\eta_{\min},\eta_{\max},C)$
\State Replace the $\eta_k$ closest to zero by $0$

\For{each contiguous group $g$ in activation order}
    \State $V\gets\operatorname{copy}(\widetilde W_{:,g})$
           \Comment{freeze compensated group-entry weights}
    \For{each output row $r$}
        \State $a\gets\max_{j\in g}|V_{rj}|$
        \If{$a=0$}
            \State $s^\star_{rg}\gets 1$
                   \Comment{any positive scale quantizes zero exactly}
        \Else
            \State $s^{(0)}_{rg}\gets a/K$
            \For{$k=1,\ldots,C$}
                \State $s_{rgk}\gets s^{(0)}_{rg}\exp(\eta_k)$
                \State $\displaystyle
                L_{rgk}\gets\sum_{j\in g}
                \frac{(V_{rj}-Q_{s_{rgk}}(V_{rj}))^2}
                     {U_{jj}^2}$
            \EndFor
            \State Select $k^\star$ minimizing $L_{rgk}$,
                   breaking numerical ties toward smallest $|\eta_k|$
            \State $s^\star_{rg}\gets s_{rgk^\star}$
        \EndIf
    \EndFor

    \For{columns $j\in g$ in activation order}
        \State $(q_j)_r\gets
               Q_{s^\star_{rg}}(\widetilde W_{rj})$
               for every row $r$
        \State $e_j\gets(\widetilde W_{:,j}-q_j)/U_{jj}$
        \State $\widetilde W_{:,j:d}\gets
               \widetilde W_{:,j:d}-e_jU_{j,j:d}$
        \State $\widehat W^{\mathrm{ord}}_{:,j}\gets q_j$
    \EndFor
\EndFor
\State $\widehat W\gets\widehat W^{\mathrm{ord}}P$
\State \Return $\widehat W$, $\{s^\star_{rg}\}$,
       and the group assignment induced by $P$
\end{algorithmic}
\end{algorithm}

\subsubsection{Compute resources and software}
\label{app:compute}

Every run uses a single NVIDIA H100 (94\,GB) or H200 (141\,GB) GPU.

\subsection{Global scale-sensitivity}
\label{subsec:globalscale}

In this section we add further experiments to understand how different scale-selection rules impact the performance of PTQ of LLMS. This section presents additional experiments examining how scale-selection rules affect the performance of quantized LLMs.

\subsubsection{Sensitivity to scale selection with HIP preprocessing}

\begin{figure}[h!]
    \centering
    \includegraphics[width=0.8\textwidth]{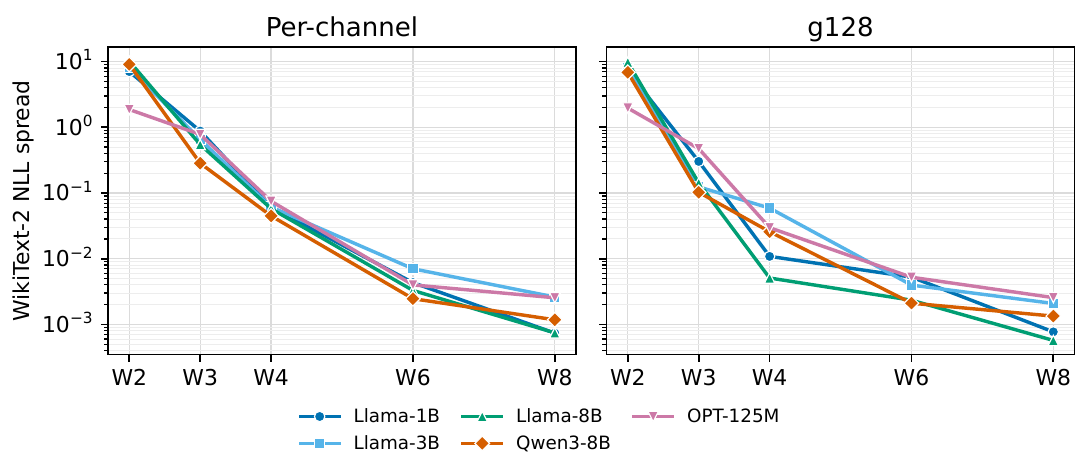}
    \caption{WikiText-2 NLL spread across scale-selection rules for GPTQ with HIP preprocessing, plotted against bit-width for different models. Left: per-channel quantization. Right: group size 128. The spread is the difference between the maximum and minimum NLL across Min-max, Shrink-2.4, WMSE, Proxy-static, and RTNH. The vertical axis uses a logarithmic scale.}
    \label{fig:spread-hip}
\end{figure}

HIP preprocessing appears to produce a steeper decrease in NLL spread with bit-width, qualitatively similar to the effect of reducing the group size to 128 in GPTQ without HIP. However, as discussed in Remark~\ref{rem:ruleisnotcurvature}, this observation does not directly imply lower curvature of the GPTQ loss landscape. The spread also depends on how far apart the scales selected by the different rules are. HIP may increase agreement among these choices: suppressing extreme weight coordinates can reduce the influence of outliers on min-max scaling, while a more Gaussian weight distribution may improve agreement between WMSE-based scales and the analytic prediction. This provides a possible additional explanation for the reduced variation in end-to-end performance.

\subsubsection{Sensitivity to scale selection with GPTQ extensions}
To assess whether the trend in Figure~\ref{fig:spread} extends beyond GPTQ, we repeat the scale-selection comparison with ResComp~\citep{li2026rethinking} and QRoNoS~\citep{zhang2026QRoNoS}. We evaluate both methods at W2, W4, and W8 on Llama-3.2-1B and Llama-3.1-8B, with and without HIP preprocessing.

\begin{figure}[h!]
    \centering
    \includegraphics[width=0.8\textwidth]{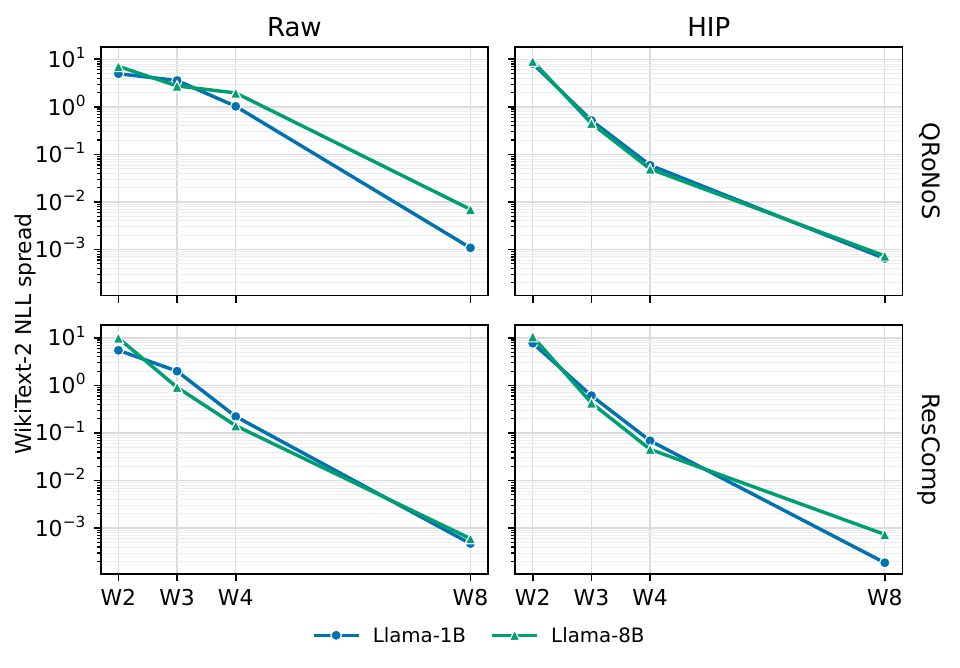}
    \caption{WikiText-2 NLL spread across scale-selection rules for ResComp and QRoNoS, with and without HIP preprocessing, as a function of bit-width. The spread is the difference between the maximum and minimum NLL across the tested scale-selection rules.}
    \label{fig:spread-extensions}
\end{figure}

Both methods exhibit a decrease in NLL spread with increasing bit-width, qualitatively matching the trend observed for GPTQ. HIP preprocessing is associated with an even steeper decrease. These results suggest that the reduced sensitivity to scale selection at higher precision extends beyond GPTQ to related PTQ methods in the tested settings.

\subsubsection{Scale sensitivity without min-max}

To understand whether the bit-width dependence of NLL spread is mainly due to the poor performance of min-max at low precision, we repeat the comparison using only the search based rules Shrink-2.4, WMSE, Proxy-static, and RTNH.

\begin{figure}[h!]
    \centering
    \includegraphics[width=0.8\linewidth]{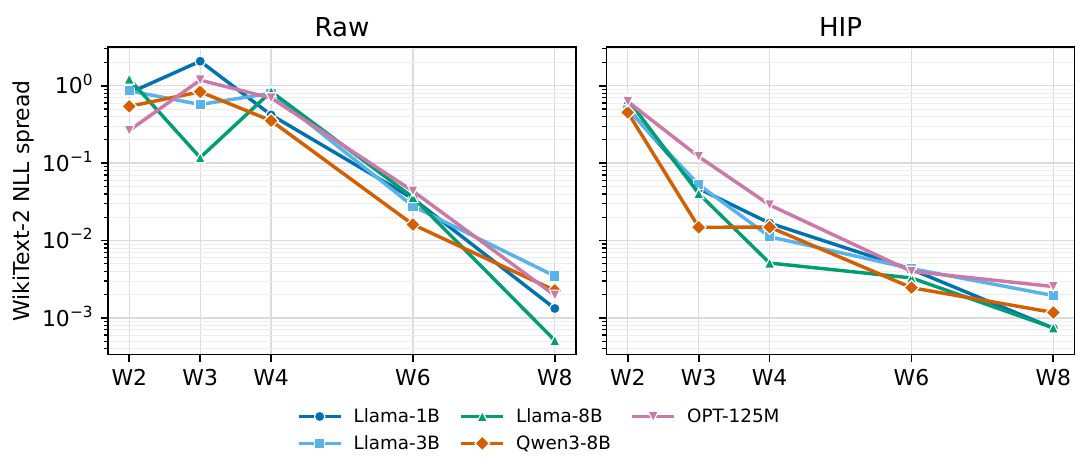}
    \caption{WikiText-2 NLL spread across Shrink-2.4, WMSE, Proxy-static, and RTNH for per-channel GPTQ without HIP (left) and with HIP (right). The spread is the maximum minus minimum NLL across these four rules; min-max is excluded. The vertical axis uses a logarithmic scale.}
    \label{fig:spread-without-minmax}
\end{figure}

Figure~\ref{fig:spread-without-minmax} shows that the broad reduction in NLL spread with increasing precision persists after excluding min-max. Without HIP, the dependence is nonmonotonic between W2 and W4, followed by a pronounced decrease at higher precisions. With HIP, the decrease is more consistent across the tested bit-widths, with a particularly large reduction between W2 and W3. Thus, the greater practical importance of scale selection at low precision persists among the search-based rules and can not be explained by the bad performance of min-max. Performing a scale search alone does not ensure comparable end-to-end performance: the criterion used to select the scale remains important.

\subsubsection{End-to-end sensitivity to controlled scale perturbations}
\label{app:controlledsens}
As discussed in Remark \ref{rem:ruleisnotcurvature} the scale-sensitivity plots based on different scale rules do not account for the possibility that discrepancy of the scale rules themselves change with the precision. We therefore add a controlled global sensitivity analysis at a fixed relative perturbation amplitude. We perturb the scales returned by the WMSE search and measure the resulting change in end-to-end performance in NLL. For each bit-width $B$ and channel $c$, we set
\[
s_{B,c}^{(r,\pm)}
=
s_{B,c}^{0}(1\pm\delta\epsilon_{r,c}),
\qquad
\delta=0.05,
\]
where $\epsilon_{r,c}\in\{-1,+1\}$ are independent random signs. We use five random directions, matched across bit-widths within each model, and evaluate both signs of each direction. Each perturbed scale configuration is applied in a fresh sequential GPTQ run from the original floating-point weights, recomputing downstream activations, Gram matrices, and error compensation.

Let $\Delta_{B,r,\pm}$ denote the change in NLL relative to the corresponding unperturbed baseline. We summarize the magnitude of the response by
\[
C_B(\delta)
=
\frac{1}{2R}
\sum_{r=1}^{R}
\left(
|\Delta_{B,r,+}|+|\Delta_{B,r,-}|
\right),
\qquad R=5.
\]
This statistic captures both improvements and degradations without cancellation between opposite perturbation arms. We interpret $C_B(\delta)$ as finite-perturbation scale sensitivity,
not directly as curvature. Away from a stationary point its leading contribution
is the directional NLL gradient; near a stationary point, second-order
effects can dominate. The absolute values retain improvements as well
as degradations, so this statistic measures response magnitude rather
than the signed penalty from perturbing the scales.  We evaluate per-channel GPTQ without HIP at W2, W3, W4, W6, and W8 on five models, using one calibration draw and evaluating on WikiText-2 and C4.

Figure~\ref{fig:global-scale-perturbations} shows substantially larger responses at W2--W4 than at W6--W8. Across the Llama and Qwen models and the two evaluation corpora, the reported W4-to-W6 reduction is approximately $8$--$96$-fold. OPT-125M exhibits a weaker reduction and remains more sensitive at high precision. The dependence on bit-width is not monotonic: for example, Llama-3.1-8B is more sensitive at W4 than at W3 on WikiText-2.

These results further support greater practical sensitivity to relative scale perturbations at low precision in the tested setting. They complement the local reconstruction-loss analysis, while showing that its regular bit-width dependence need not neccessarily translate into a monotonic end-to-end response. The interpretation also depends on baseline quality: some W2 models are severely degraded, and several W4 baselines perform worse than their W3 counterparts. 

\begin{figure}[t]
    \centering
    \includegraphics[width=0.8\linewidth]{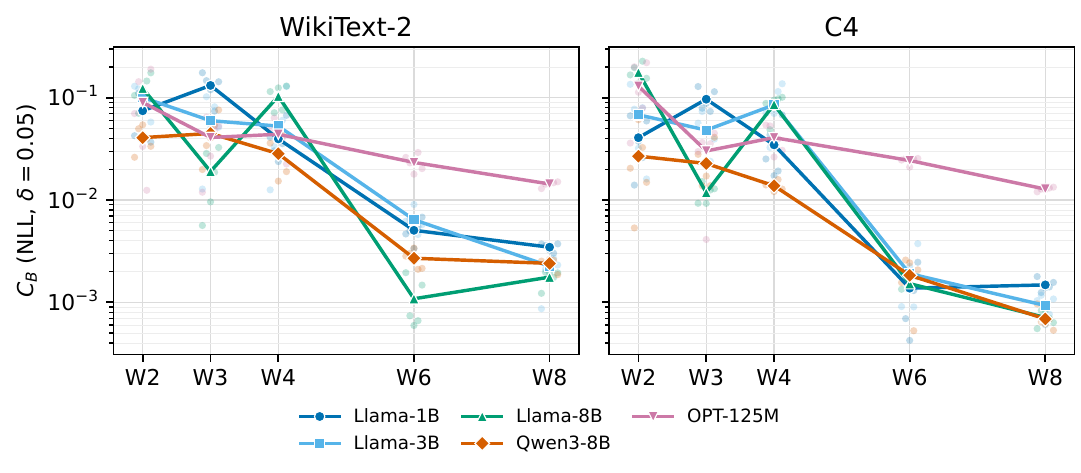}
    \caption{End-to-end sensitivity to controlled scale perturbations. Each channel's WMSE-selected scale is increased or decreased by $5\%$, followed by a complete sequential GPTQ run. Each point reports the mean absolute NLL change relative to the unperturbed WMSE baseline, averaged over ten perturbed runs corresponding to five random sign patterns and their opposites. Experiments use per-channel GPTQ and one calibration draw; random directions are matched across bit-widths within each model.}
    \label{fig:global-scale-perturbations}
\end{figure}

\subsubsection{Calibration-Set Size}

Our main experiments use 128 calibration sequences of 2048
tokens each. To assess whether our observations depend on this
choice, we vary the number of calibration sequences from 16 to
256 while keeping the sequence length fixed. We compare
Min-max, Analytic, and Proxy-static on Llama-1B and OPT-125M
at W2 and W3, with and without HIP.

Figure~\ref{fig:calibration-size} shows that calibration-set
size affects NLL, sometimes nonmonotonically, but substantial
differences between scale-selection rules persist across
the tested sizes. In particular, increasing the calibration
set does not consistently close the performance gaps between
rules. Thus, the practical importance of scale selection at
low precision is not specific to our default calibration-set
size.

\begin{figure}[htbp]
    \centering
    \includegraphics[width=\linewidth]
    {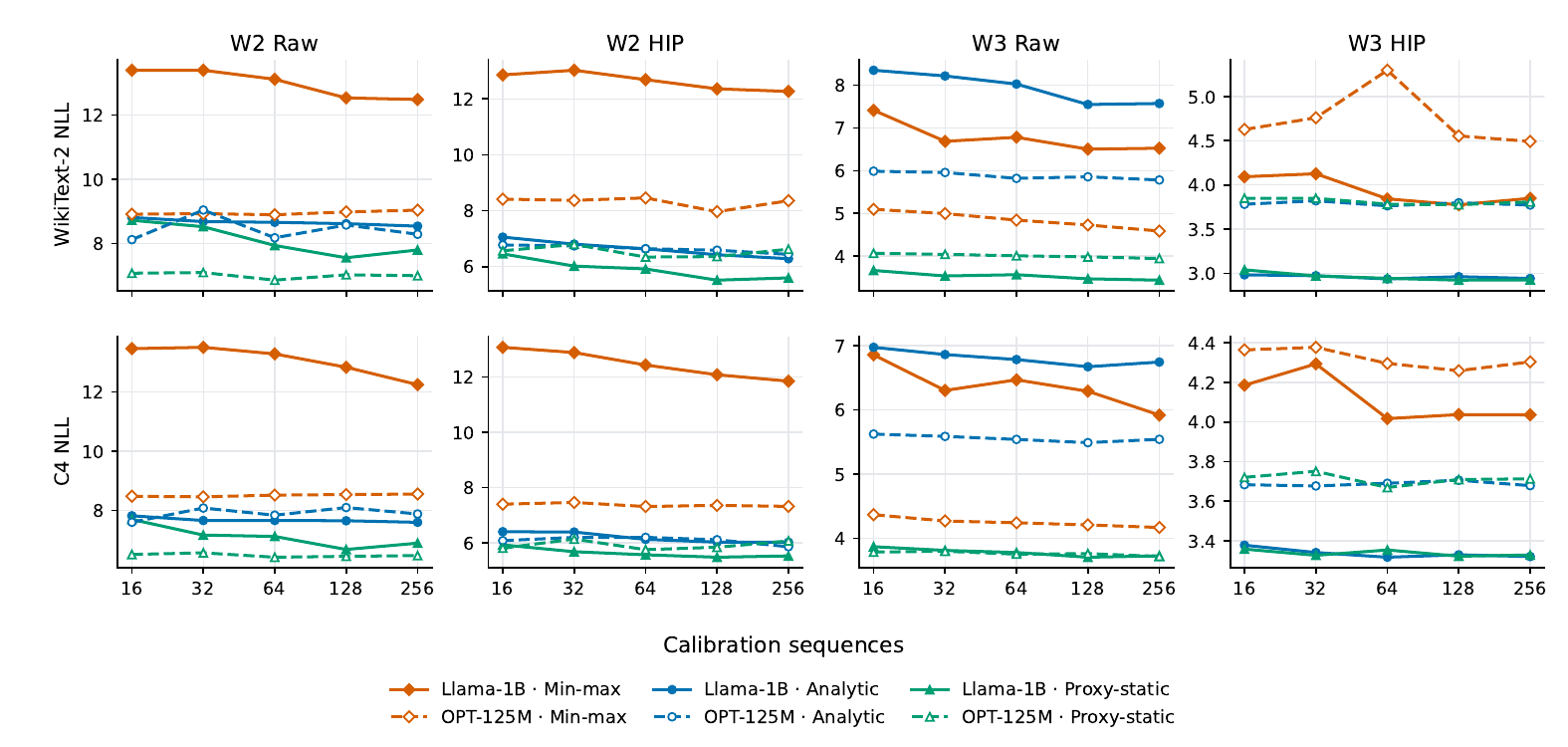}
    \caption{
    Effect of calibration-set size on WikiText-2 NLL (top)
    and C4 NLL (bottom). Each plot compares Min-max, Analytic,
    and Proxy-static for Llama-1B and OPT-125M at W2 and W3,
    with and without HIP. Each calibration sequence contains
    2048 tokens. 
    }
    \label{fig:calibration-size}
\end{figure}

\subsection{Zero-shot evaluation}
\label{app:zero-shot}

We complement perplexity evaluation with six zero-shot tasks:
PIQA, ARC-Easy, ARC-Challenge, HellaSwag, WinoGrande, and BoolQ.
All quantized results use per-channel weight-only quantization,
calibration seed 0, and 128 C4 sequences of 2048 tokens each.
The symmetric grid has $K_B=2^{B-1}-1$, so W2 is ternary.
\emph{Raw} denotes no preprocessing; HIP uses rotation seed 0.
We report \texttt{acc\_norm} for PIQA, ARC-Easy, ARC-Challenge,
and HellaSwag, and \texttt{acc} for WinoGrande and BoolQ.
Accuracies are percentages, while meanis computed as their unweighted average on the six tasks.
WikiText-2 and C4 columns report perplexity. Lower perplexity and
higher accuracy are better. Bold entries identify column-wise best
reported values within each matched model, precision, preprocessing,
engine, and campaign panel, including ties at the displayed precision.
The FP16 reference is not bolded.
These are single-calibration-seed measurements; differences of a few
tenths of an accuracy point do not establish resolved rankings.

\paragraph{W3 scale selection.}
Table~\ref{tab:zero-shot-w3} compares four scale-selection rules for GPTQ.
With HIP, the search-free analytic rule achieves mean accuracies of
72.08\%, 63.75\%, and 72.36\% on Llama-3.1-8B, Llama-3.2-3B,
and Qwen3-8B, respectively, comparable to the searched rules and
substantially above min-max. Without HIP, the analytic rule fails
severely on all three models. Proxy-static achieves the lowest
perplexity on both corpora and the highest six-task mean in each
raw W3 comparison. Thus, the relative performance of the selectors
depends strongly on preprocessing.

\paragraph{W2 scale selection.}
Table~\ref{tab:zero-shot-w2} shows substantial degradation in several
W2 configurations, especially without HIP and with min-max.
The proxy-static results use the search interval
$\eta\in[-1,1]$. Proxy-static achieves lower perplexity on both
corpora than Shrink-2.4 in all six model--preprocessing comparisons,
but higher six-task mean accuracy in only three. For example,
on Llama-3.1-8B with HIP, proxy-static achieves WikiText-2/C4
perplexity of 31.38/43.81, compared with 39.16/53.99 for Shrink-2.4,
while their mean accuracies are 48.50\% and 48.69\%, respectively.
Scale selection therefore affects both metrics, but their rankings
need not coincide.

\paragraph{Transfer across quantization engines.}
Table~\ref{tab:zero-shot-w2-engines-hip} compares Shrink-2.4
and proxy-static for GPTQ, ResComp, and QRoNoS at W2 with
HIP preprocessing. This focused campaign uses 128 log-spaced
proxy candidates over $\eta\in[-1.0,0.1]$. Its search settings differ from the main GPTQ campaign, so
overlapping configurations are reported separately.
Proxy-static lowers perplexity on both corpora in all six
model-engine comparisons and achieves higher mean accuracy
in five, although some accuracy differences are small.
We omit the corresponding results without HIP because all
tested configurations exhibit severe degradation.
These findings support proxy-static as a practical scale
selector beyond GPTQ, while showing that perplexity
improvements do not always translate into accuracy gains.

\begin{table*}[h!]
\centering
\caption{W3 per-channel GPTQ with a symmetric seven-level grid $\{ks:k=-3,\ldots,3\}$. The proxy-static search uses $\eta\in[-1,1]$. Panels (a) and (b) show raw and HIP results. FP16 references are excluded from bolding, which identifies the best quantized rule within each model and panel. Lower perplexity and higher accuracy are better.}
\label{tab:zero-shot-w3}
\begingroup
\setlength{\tabcolsep}{3pt}
\renewcommand{\arraystretch}{1.08}
\resizebox{\textwidth}{!}{%
\begin{tabular}{llrrrrrrrrr}
\toprule
Model & Rule & Wiki2 PPL & C4 PPL & PIQA & ARC-E & ARC-C & HellaSwag & WinoGrande & BoolQ & Mean \\
\midrule
\multicolumn{11}{l}{\textbf{(a) Without preprocessing}} \\
Llama-3.1-8B & FP16 reference & $7.213$ & $11.388$ & $80.96$ & $79.59$ & $55.12$ & $79.26$ & $74.03$ & $84.10$ & $75.51$ \\
\addlinespace[2pt]
 & Shrink-2.4 & $14.05$ & $17.18$ & $70.46$ & $55.30$ & $36.86$ & $70.76$ & $66.61$ & $79.48$ & $63.25$ \\
 & Proxy-static & $\mathbf{11.76}$ & $\mathbf{16.33}$ & $\mathbf{72.91}$ & $\mathbf{63.51}$ & $\mathbf{41.21}$ & $\mathbf{72.09}$ & $\mathbf{68.51}$ & $\mathbf{79.85}$ & $\mathbf{66.34}$ \\
 & Analytic & $9.3\times10^{5}$ & $6.43\times10^{5}$ & $51.36$ & $25.42$ & $26.79$ & $26.73$ & $50.75$ & $62.17$ & $40.54$ \\
 & Min-max & $41.47$ & $45.51$ & $62.35$ & $40.78$ & $28.24$ & $47.20$ & $51.54$ & $52.42$ & $47.09$ \\
\addlinespace
Llama-3.2-3B & FP16 reference & $11.048$ & $16.486$ & $75.52$ & $67.85$ & $46.16$ & $70.44$ & $67.32$ & $78.62$ & $67.65$ \\
\addlinespace[2pt]
 & Shrink-2.4 & $42.48$ & $47.93$ & $69.91$ & $53.75$ & $34.22$ & $59.46$ & $55.96$ & $66.85$ & $56.69$ \\
 & Proxy-static & $\mathbf{21.16}$ & $\mathbf{27.55}$ & $\mathbf{70.29}$ & $\mathbf{58.46}$ & $\mathbf{35.92}$ & $\mathbf{60.93}$ & $\mathbf{61.96}$ & $\mathbf{70.95}$ & $\mathbf{59.75}$ \\
 & Analytic & $2310.17$ & $1027.36$ & $51.52$ & $26.68$ & $25.68$ & $26.68$ & $52.96$ & $39.94$ & $37.24$ \\
 & Min-max & $120.61$ & $73.61$ & $56.20$ & $33.16$ & $26.11$ & $41.56$ & $52.72$ & $52.84$ & $43.77$ \\
\addlinespace
Qwen3-8B & FP16 reference & $9.715$ & $15.363$ & $77.75$ & $80.93$ & $56.48$ & $74.92$ & $67.64$ & $86.61$ & $74.05$ \\
\addlinespace[2pt]
 & Shrink-2.4 & $37.95$ & $33.39$ & $66.49$ & $43.39$ & $31.57$ & $50.36$ & $55.56$ & $75.23$ & $53.77$ \\
 & Proxy-static & $\mathbf{18.51}$ & $\mathbf{21.39}$ & $\mathbf{73.23}$ & $\mathbf{60.14}$ & $\mathbf{40.61}$ & $\mathbf{62.68}$ & $\mathbf{64.40}$ & $\mathbf{81.83}$ & $\mathbf{63.82}$ \\
 & Analytic & $3.13\times10^{10}$ & $5.99\times10^{10}$ & $52.12$ & $25.72$ & $27.22$ & $26.55$ & $49.17$ & $37.83$ & $36.43$ \\
 & Min-max & $22.39$ & $26.90$ & $63.22$ & $41.46$ & $26.54$ & $53.57$ & $52.64$ & $58.65$ & $49.35$ \\
\midrule
\multicolumn{11}{l}{\textbf{(b) HIP preprocessing}} \\
Llama-3.1-8B & FP16 reference & $7.213$ & $11.388$ & $80.96$ & $79.59$ & $55.12$ & $79.26$ & $74.03$ & $84.10$ & $75.51$ \\
\addlinespace[2pt]
 & Shrink-2.4 & $\mathbf{9.09}$ & $\mathbf{14.45}$ & $78.45$ & $74.41$ & $\mathbf{50.77}$ & $\mathbf{74.00}$ & $71.67$ & $81.74$ & $71.84$ \\
 & Proxy-static & $9.23$ & $14.62$ & $78.13$ & $75.67$ & $48.81$ & $73.87$ & $\mathbf{73.32}$ & $81.65$ & $71.91$ \\
 & Analytic & $9.21$ & $14.60$ & $\mathbf{78.56}$ & $\mathbf{76.14}$ & $50.60$ & $73.65$ & $71.11$ & $\mathbf{82.45}$ & $\mathbf{72.08}$ \\
 & Min-max & $15.40$ & $22.77$ & $70.46$ & $54.38$ & $33.19$ & $62.02$ & $59.51$ & $62.11$ & $56.94$ \\
\addlinespace
Llama-3.2-3B & FP16 reference & $11.048$ & $16.486$ & $75.52$ & $67.85$ & $46.16$ & $70.44$ & $67.32$ & $78.62$ & $67.65$ \\
\addlinespace[2pt]
 & Shrink-2.4 & $14.96$ & $20.21$ & $71.76$ & $60.40$ & $38.31$ & $64.33$ & $64.40$ & $75.90$ & $62.52$ \\
 & Proxy-static & $15.01$ & $20.56$ & $\mathbf{72.52}$ & $62.63$ & $38.05$ & $63.76$ & $64.09$ & $75.66$ & $62.79$ \\
 & Analytic & $\mathbf{14.59}$ & $\mathbf{20.06}$ & $72.09$ & $\mathbf{63.68}$ & $\mathbf{39.59}$ & $\mathbf{64.64}$ & $\mathbf{65.04}$ & $\mathbf{77.46}$ & $\mathbf{63.75}$ \\
 & Min-max & $30.21$ & $33.48$ & $59.19$ & $37.25$ & $27.13$ & $54.46$ & $55.49$ & $55.66$ & $48.20$ \\
\addlinespace
Qwen3-8B & FP16 reference & $9.715$ & $15.363$ & $77.75$ & $80.93$ & $56.48$ & $74.92$ & $67.64$ & $86.61$ & $74.05$ \\
\addlinespace[2pt]
 & Shrink-2.4 & $10.76$ & $16.78$ & $75.68$ & $73.82$ & $48.55$ & $70.65$ & $68.11$ & $85.26$ & $70.35$ \\
 & Proxy-static & $\mathbf{10.68}$ & $\mathbf{16.70}$ & $\mathbf{76.88}$ & $73.82$ & $49.49$ & $\mathbf{70.91}$ & $69.30$ & $85.50$ & $70.98$ \\
  & Analytic & $10.87$ & $16.93$ & $76.71$ & $\mathbf{77.82}$ & $\mathbf{52.47}$ & $70.90$ & $\mathbf{70.32}$ & $\mathbf{85.93}$ & $\mathbf{72.36}$ \\
 & Min-max & $15.20$ & $20.99$ & $69.80$ & $48.32$ & $32.76$ & $61.49$ & $56.59$ & $68.04$ & $56.17$ \\
\bottomrule
\end{tabular}}
\endgroup
\end{table*}

\begin{table*}[h!]
\centering
\caption{W2--ternary per-channel GPTQ with grid $\{-s,0,s\}$. Panels (a) and (b) show raw and HIP results. FP16 references are excluded from bolding, which identifies the best quantized rule within each model and panel. Lower perplexity and higher accuracy are better.}
\label{tab:zero-shot-w2}
\begingroup
\setlength{\tabcolsep}{3pt}
\renewcommand{\arraystretch}{1.08}
\resizebox{\textwidth}{!}{%
\begin{tabular}{llrrrrrrrrr}
\toprule
Model & Rule & Wiki2 PPL & C4 PPL & PIQA & ARC-E & ARC-C & HellaSwag & WinoGrande & BoolQ & Mean \\
\midrule
\multicolumn{11}{l}{\textbf{(a) Without preprocessing}} \\
Llama-3.1-8B & FP16 reference & $7.213$ & $11.388$ & $80.96$ & $79.59$ & $55.12$ & $79.26$ & $74.03$ & $84.10$ & $75.51$ \\
\addlinespace[2pt]
 & Shrink-2.4 & $158.59$ & $110.10$ & $53.37$ & $28.58$ & $24.57$ & $\mathbf{34.08}$ & $50.12$ & $\mathbf{62.54}$ & $\mathbf{42.21}$ \\
 & Proxy-static & $\mathbf{157.51}$ & $\mathbf{105.48}$ & $\mathbf{53.50}$ & $\mathbf{29.00}$ & $23.70$ & $31.30$ & $50.70$ & $55.30$ & $40.60$ \\
 & Analytic & $2.18\times10^{5}$ & $2.37\times10^{5}$ & $51.14$ & $24.12$ & $\mathbf{27.30}$ & $26.09$ & $51.14$ & $56.61$ & $39.40$ \\
 & Min-max & $7.87\times10^{5}$ & $6.73\times10^{5}$ & $50.76$ & $24.62$ & $26.19$ & $26.66$ & $\mathbf{51.22}$ & $48.13$ & $37.93$ \\
\addlinespace
Llama-3.2-3B & FP16 reference & $11.048$ & $16.486$ & $75.52$ & $67.85$ & $46.16$ & $70.44$ & $67.32$ & $78.62$ & $67.65$ \\
\addlinespace[2pt]
 & Shrink-2.4 & $981.90$ & $639.95$ & $52.23$ & $23.86$ & $26.28$ & $26.93$ & $50.20$ & $53.79$ & $38.88$ \\
 & Proxy-static & $\mathbf{442.39}$ & $\mathbf{315.76}$ & $\mathbf{52.30}$ & $\mathbf{25.80}$ & $25.80$ & $\mathbf{27.70}$ & $50.50$ & $\mathbf{54.40}$ & $\mathbf{39.40}$ \\
 & Analytic & $9291.88$ & $3934.12$ & $51.25$ & $25.13$ & $25.68$ & $26.47$ & $50.91$ & $38.04$ & $36.25$ \\
 & Min-max & $2.59\times10^{5}$ & $2.47\times10^{5}$ & $51.41$ & $24.16$ & $\mathbf{27.56}$ & $26.40$ & $\mathbf{52.25}$ & $47.46$ & $38.21$ \\
\addlinespace
Qwen3-8B & FP16 reference & $9.715$ & $15.363$ & $77.75$ & $80.93$ & $56.48$ & $74.92$ & $67.64$ & $86.61$ & $74.05$ \\
\addlinespace[2pt]
 & Shrink-2.4 & $135.89$ & $85.64$ & $57.02$ & $\mathbf{32.03}$ & $24.23$ & $33.43$ & $\mathbf{52.25}$ & $\mathbf{62.11}$ & $\mathbf{43.51}$ \\
 & Proxy-static & $\mathbf{71.14}$ & $\mathbf{54.13}$ & $\mathbf{57.70}$ & $31.10$ & $24.70$ & $\mathbf{36.40}$ & $50.60$ & $56.60$ & $42.80$ \\
 & Analytic & $2.16\times10^{11}$ & $3.58\times10^{11}$ & $52.12$ & $26.30$ & $\mathbf{26.79}$ & $26.27$ & $48.46$ & $37.83$ & $36.30$ \\
 & Min-max & $3.93\times10^{5}$ & $6.45\times10^{4}$ & $50.27$ & $24.58$ & $26.28$ & $26.12$ & $49.17$ & $48.93$ & $37.56$ \\
\addlinespace
\midrule
\multicolumn{11}{l}{\textbf{(b) HIP preprocessing}} \\
Llama-3.1-8B & FP16 reference & $7.213$ & $11.388$ & $80.96$ & $79.59$ & $55.12$ & $79.26$ & $74.03$ & $84.10$ & $75.51$ \\
\addlinespace[2pt]
 & Shrink-2.4 & $39.16$ & $53.99$ & $\mathbf{62.68}$ & $39.86$ & $24.40$ & $\mathbf{42.90}$ & $54.93$ & $\mathbf{67.37}$ & $\mathbf{48.69}$ \\
 & Proxy-static & $\mathbf{31.38}$ & $\mathbf{43.81}$ & $62.00$ & $\mathbf{39.90}$ & $25.10$ & $42.00$ & $\mathbf{56.50}$ & $65.20$ & $48.50$ \\
 & Analytic & $119.45$ & $75.27$ & $58.43$ & $36.78$ & $23.63$ & $38.12$ & $52.57$ & $58.53$ & $44.68$ \\
 & Min-max & $6.82\times10^{5}$ & $6.6\times10^{5}$ & $52.45$ & $25.25$ & $\mathbf{26.02}$ & $26.75$ & $48.15$ & $52.11$ & $38.45$ \\
\addlinespace
Llama-3.2-3B & FP16 reference & $11.048$ & $16.486$ & $75.52$ & $67.85$ & $46.16$ & $70.44$ & $67.32$ & $78.62$ & $67.65$ \\
\addlinespace[2pt]
 & Shrink-2.4 & $123.56$ & $123.84$ & $56.80$ & $\mathbf{34.93}$ & $23.98$ & $\mathbf{34.70}$ & $51.93$ & $60.58$ & $43.82$ \\
 & Proxy-static & $\mathbf{94.12}$ & $\mathbf{95.11}$ & $\mathbf{57.30}$ & $34.30$ & $24.10$ & $34.50$ & $52.20$ & $\mathbf{62.30}$ & $\mathbf{44.10}$ \\
 & Analytic & $179.82$ & $196.67$ & $57.24$ & $31.40$ & $23.29$ & $31.42$ & $\mathbf{52.33}$ & $56.73$ & $42.07$ \\
 & Min-max & $3.43\times10^{5}$ & $2.65\times10^{5}$ & $52.23$ & $26.05$ & $\mathbf{25.34}$ & $26.24$ & $50.12$ & $46.39$ & $37.73$ \\
\addlinespace
Qwen3-8B & FP16 reference & $9.715$ & $15.363$ & $77.75$ & $80.93$ & $56.48$ & $74.92$ & $67.64$ & $86.61$ & $74.05$ \\
\addlinespace[2pt]
 & Shrink-2.4 & $32.46$ & $36.32$ & $68.39$ & $52.82$ & $33.11$ & $48.17$ & $59.83$ & $68.99$ & $55.22$ \\
 & Proxy-static & $\mathbf{22.65}$ & $\mathbf{29.75}$ & $\mathbf{68.80}$ & $\mathbf{57.60}$ & $\mathbf{34.60}$ & $\mathbf{51.20}$ & $\mathbf{62.90}$ & $\mathbf{74.40}$ & $\mathbf{58.20}$ \\
 & Analytic & $46.98$ & $45.13$ & $65.07$ & $50.34$ & $29.44$ & $45.04$ & $57.93$ & $67.13$ & $52.49$ \\
 & Min-max & $5.06\times10^{4}$ & $1.91\times10^{4}$ & $49.46$ & $24.24$ & $26.96$ & $25.55$ & $49.88$ & $43.21$ & $36.55$ \\
\addlinespace
\bottomrule
\end{tabular}}
\endgroup
\end{table*}

\begin{table*}[p]
\centering
\caption{W2--ternary per-channel cross-engine comparison with HIP and grid $\{-s,0,s\}$. Bold values compare selectors within each model and engine, excluding FP16 references. These measurements are separate from the main GPTQ campaign. Lower perplexity and higher accuracy are better.}
\label{tab:zero-shot-w2-engines-hip}
\begingroup
\setlength{\tabcolsep}{3pt}
\renewcommand{\arraystretch}{1.08}
\resizebox{\textwidth}{!}{%
\begin{tabular}{llrrrrrrrrr}
\toprule
Engine & Rule & Wiki2 PPL & C4 PPL & PIQA & ARC-E & ARC-C & HellaSwag & WinoGrande & BoolQ & Mean \\
\midrule
\multicolumn{11}{l}{\textbf{(a) Llama-3.1-8B}} \\
\multicolumn{2}{l}{FP16 reference} & $7.213$ & $11.388$ & $80.96$ & $79.59$ & $55.12$ & $79.26$ & $74.03$ & $84.10$ & $75.51$ \\
\addlinespace
GPTQ & Shrink-2.4 & $60.17$ & $60.98$ & $60.83$ & $37.58$ & $23.63$ & $41.15$ & $52.17$ & $61.25$ & $46.10$ \\
 & Proxy-static & $\mathbf{33.71}$ & $\mathbf{45.22}$ & $\mathbf{63.00}$ & $\mathbf{40.45}$ & $\mathbf{25.51}$ & $\mathbf{43.82}$ & $\mathbf{55.33}$ & $\mathbf{66.76}$ & $\mathbf{49.14}$ \\
\addlinespace
ResComp & Shrink-2.4 & $38.59$ & $43.39$ & $\mathbf{62.79}$ & $36.11$ & $23.72$ & $42.37$ & $55.72$ & $\mathbf{65.90}$ & $47.77$ \\
 & Proxy-static & $\mathbf{29.69}$ & $\mathbf{35.45}$ & $61.59$ & $\mathbf{36.28}$ & $\mathbf{23.81}$ & $\mathbf{44.93}$ & $\mathbf{55.96}$ & $64.25$ & $\mathbf{47.80}$ \\
\addlinespace
QRoNoS & Shrink-2.4 & $61.07$ & $59.03$ & $\mathbf{66.54}$ & $\mathbf{45.71}$ & $\mathbf{27.22}$ & $45.12$ & $56.67$ & $65.38$ & $\mathbf{51.11}$ \\
 & Proxy-static & $\mathbf{38.17}$ & $\mathbf{43.70}$ & $56.58$ & $32.32$ & $22.35$ & $\mathbf{46.87}$ & $\mathbf{57.38}$ & $\mathbf{67.49}$ & $47.17$ \\
\addlinespace
\midrule
\multicolumn{11}{l}{\textbf{(b) Qwen3-8B}} \\
\multicolumn{2}{l}{FP16 reference} & $9.715$ & $15.363$ & $77.75$ & $80.93$ & $56.48$ & $74.92$ & $67.64$ & $86.61$ & $74.05$ \\
\addlinespace
GPTQ & Shrink-2.4 & $34.16$ & $38.39$ & $67.63$ & $54.50$ & $31.74$ & $48.65$ & $\mathbf{59.43}$ & $\mathbf{70.40}$ & $55.39$ \\
 & Proxy-static & $\mathbf{24.41}$ & $\mathbf{32.99}$ & $\mathbf{68.72}$ & $\mathbf{55.47}$ & $\mathbf{32.68}$ & $\mathbf{51.59}$ & $58.80$ & $67.00$ & $\mathbf{55.71}$ \\
\addlinespace
ResComp & Shrink-2.4 & $105.93$ & $101.35$ & $56.91$ & $32.62$ & $22.70$ & $34.46$ & $\mathbf{53.43}$ & $\mathbf{62.91}$ & $43.84$ \\
 & Proxy-static & $\mathbf{90.31}$ & $\mathbf{88.28}$ & $\mathbf{59.09}$ & $\mathbf{32.66}$ & $\mathbf{23.63}$ & $\mathbf{36.52}$ & $52.33$ & $60.98$ & $\mathbf{44.20}$ \\
\addlinespace
QRoNoS & Shrink-2.4 & $76.01$ & $57.94$ & $67.03$ & $48.32$ & $28.24$ & $44.60$ & $58.09$ & $63.36$ & $51.61$ \\
 & Proxy-static & $\mathbf{34.25}$ & $\mathbf{32.72}$ & $\mathbf{68.93}$ & $\mathbf{52.44}$ & $\mathbf{30.03}$ & $\mathbf{50.89}$ & $\mathbf{61.17}$ & $\mathbf{69.17}$ & $\mathbf{55.44}$ \\
\addlinespace
\bottomrule
\end{tabular}}
\endgroup
\end{table*}

\subsection{Preprocessing and effective rank}

\label{app:HIP}

\paragraph{HIP} Consider a fixed nonzero weight row $w\in\mathbb{R}^d$ and a
uniformly random orthogonal matrix $R$. A random rotation
distributes $wR$ uniformly on the sphere of radius $\|w\|_2$.
Equivalently,
\[
    wR \overset{d}{=}
    \|w\|_2 \frac{g}{\|g\|_2},
    \qquad g\sim\mathcal{N}(0,\operatorname{Id}_d).
\]
Since $\|g\|_2/\sqrt{d}$ concentrates around one, individual
coordinates are approximately Gaussian in high dimensions,
with standard deviation $\|w\|_2/\sqrt{d}$, the row's RMS.
Applying the corresponding inverse transformation to the
activations preserves the original linear map exactly:
\[
    \widetilde W=WR,\qquad
    \widetilde X=R^\top X,\qquad
    \widetilde W\widetilde X=WX.
\]
Although the full-precision output is unchanged, coordinatewise quantization in the rotated basis leads to different errors.

For fixed activations $X$, the orthogonal rotation gives $\widetilde H=R^\top H R$. This changes the coordinate system but preserves the Hessian's eigenvalues. Since $H=XX^\top$ is positive semidefinite, its trace is the sum of its eigenvalues and its operator norm is the largest eigenvalue. Both are therefore unchanged, so
\[
r_{\mathrm{eff}}(\widetilde H)
=\frac{\operatorname{tr}(\widetilde H)}{\|\widetilde H\|_{\mathrm{op}}}
=\frac{\operatorname{tr}(H)}{\|H\|_{\mathrm{op}}}
=r_{\mathrm{eff}}(H).
\]
Thus, rotation alone cannot improve the effective rank; changes in upstream quantization may still affect it by changing $X$.
Rotation therefore acts only on the weight distribution, not on the conditioning of the objective.
QuIP~\citep{chee2023quip} uses exactly such structured random orthogonal
preprocessing to reduce weight outliers and obtain incoherence
guarantees, substantially improving quantization performance.

Our HIP transform uses efficient randomized Hadamard mixing
based on QuIP\#~ \cite{tseng2024quip} rather than a full uniformly random rotation. Here, each transformed coordinate is a normalized random signed sum of the original weights, motivating a Gaussian approximation through a central-limit argument when no individual
contribution dominates.

\paragraph{Whitening.}
Whitening follows the same template as HIP, a change of basis $T$ that preserves the linear map, but drops the orthogonality constraint and thereby changes the geometry of the reconstruction objective. Let $T$ be an invertible matrix and set
\[
    \widetilde W=WT,\qquad
    \widetilde X=T^{-1}X,\qquad
    \widetilde H=T^{-1}HT^{-\top}.
\]
Unlike orthogonal rotations, such transformations alter the Hessian spectrum and hence its effective rank. For positive definite $H$, choosing $T$ with $TT^\top=H$, e.g.\ $T=H^{1/2}$, whitens the activations: $\widetilde H=I$, $r_{\mathrm{eff}}(\widetilde H)=d$, and the reconstruction objective reduces to the plain weight error $\|\widetilde W-\widetilde Q\|_F^2$, for which coordinatewise rounding is optimal. The cost is that the whitening moves the anisotropy from the Hessian into the weights: the rows of $\widetilde W=WH^{1/2}$ have covariance proportional to $H$ rather than being isotropic, and the quantization grid, mapped back to the original basis, is no longer a cube lattice but its image under $H^{-1/2}$. Coordinatewise quantization in the whitened basis therefore optimizes an isotropic objective over a different set of representable weights.
\paragraph{Diagonal rescaling.}
Between rotation and full whitening sits the per-channel rescaling used by SmoothQuant~\citep{xiao2023smoothquant} and AWQ~\citep{lin2024awq}. Let $T=D$ be diagonal with positive entries, so that
\[
    \widetilde W=WD,\qquad
    \widetilde X=D^{-1}X,\qquad
    \widetilde H=D^{-1}HD^{-1}.
\]
The natural choice $D=\operatorname{diag}(H)^{1/2}$ divides each input channel by its root-mean-square activation, turning $\widetilde H$ into the correlation matrix of the input channels. The resulting matrix has $\operatorname{tr}(\widetilde H)=d$, while the off-diagonal entries are the correlation coefficients $H_{ij}/\sqrt{H_{ii}H_{jj}}$. Consequently
\[
r_{\mathrm{eff}}(\widetilde H)=\frac{d}{\|\widetilde H\|_{\mathrm{op}}},
\]
which equals $d$ exactly when the channels are uncorrelated, in which case diagonal rescaling coincides with whitening.

\paragraph{Matched target-local comparison.}
To distinguish scale agreement from reconstruction quality,
we examine ten difficult W2 targets from Llama-3.1-8B,
selecting one matrix-row pair from each of ten fixed
depth bins.  In each of the bins, we select the matrix with the largest ratio of its summed row-wise best-searched GPTQ reconstruction error to signal energy. Within that matrix, we select the output row with the largest normalized best-searched GPTQ loss.

For each target, the raw, HIP, diagonal-scaling+HIP,
and whitening+HIP conditions share the same pre-target
weights and activation Hessian. Preprocessing is applied
only to the selected target, followed by local GPTQ
evaluation across candidate scales.
Figure \ref{fig:w2-preprocessing-depth} and Table \ref{tab:w2-preprocessing-depth} show that
whitening brings the best-searched scale closer to
the Gaussian analytic prediction in these rows, but
generally increases the best-searched reconstruction
loss relative to HIP alone. Improved scale agreement
therefore need not translate into improved reconstruction
quality.

\begin{figure*}[t]
    \centering
    \includegraphics[width=0.8\textwidth]
        {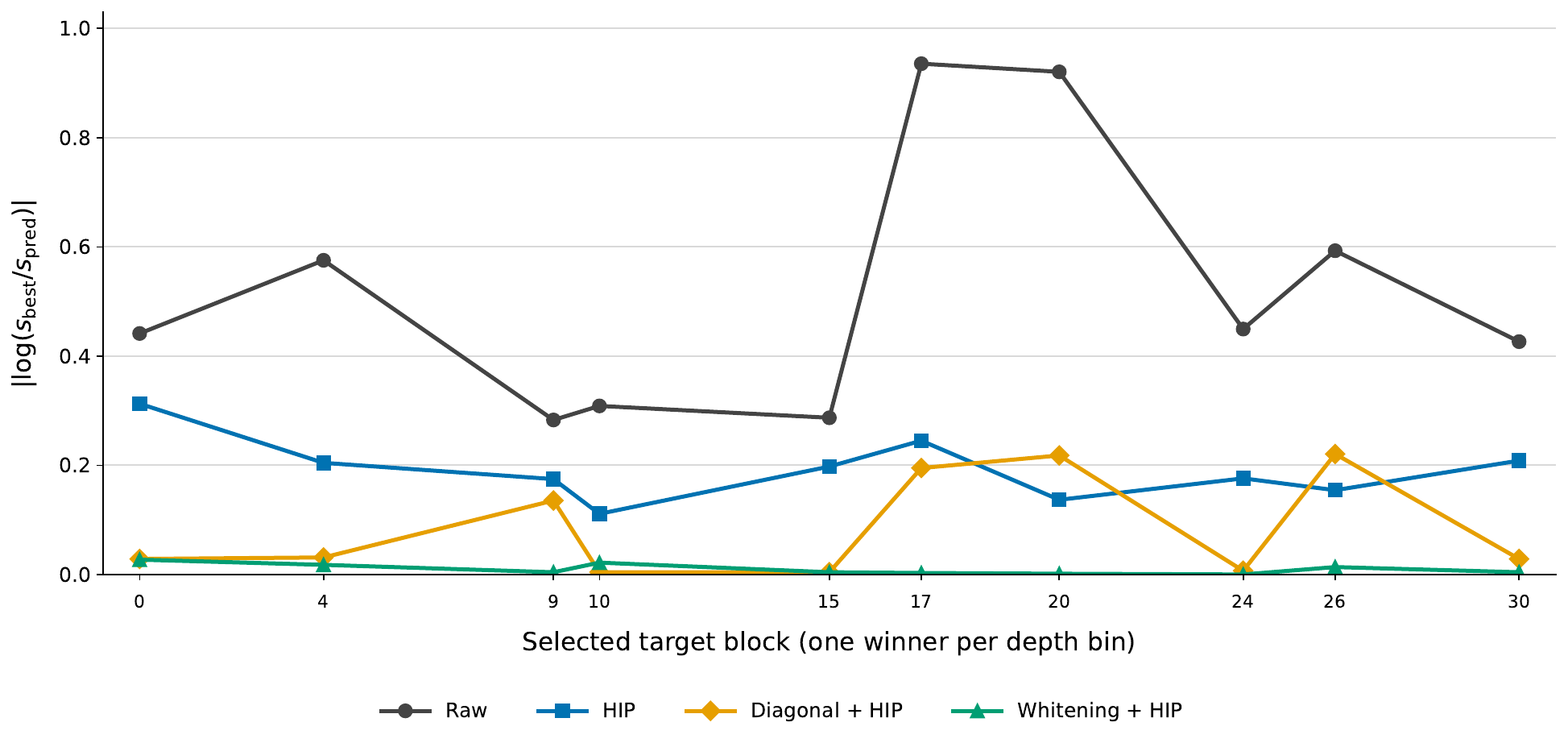}\\[-0.5em]
    \includegraphics[width=0.80\textwidth]
        {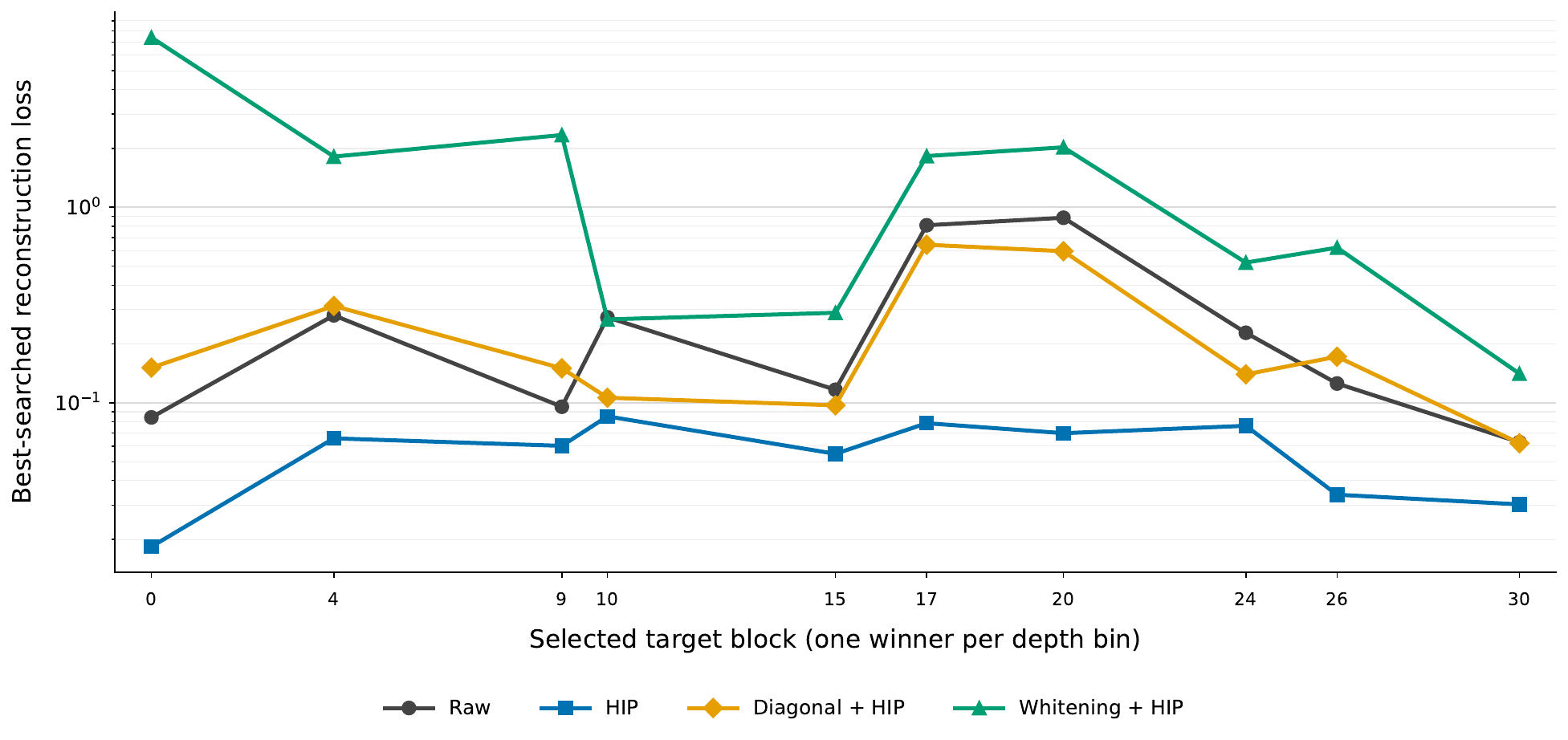}
    \caption{
    Target-local preprocessing on ten difficult W2 GPTQ
    rows selected across the 32 blocks of Llama-3.1-8B.
    Top: absolute log ratio between the best-searched
    scale and the Gaussian analytic scale.
    Bottom: best-searched local reconstruction loss
    on a logarithmic axis.
    Each point represents one matrix--row pair selected
    from a prespecified depth bin; horizontal positions
    indicate the actual block indices.
    All four conditions share the same unrotated,
    quantized upstream stream and pre-target data.
    Transforms are applied only to the selected target,
    and losses use a common normalization based on
    the raw row and Gram.
    Minima are the best evaluated values, not certified
    global optima.
    }
    \label{fig:w2-preprocessing-depth}
\end{figure*}

\begin{table*}[h!]
    \centering
    \small
    \begin{tabular}{lcccc}
        \toprule
        Preprocessing
        & Median scale gap
        & Gap lower
        & Median best-loss ratio
        & Loss lower \\
        & $\left|\log(s_{\mathrm{search}}/s_{\mathrm{an}})\right|$
        & than HIP
        & relative to HIP
        & than HIP \\
        \midrule
        None                & 0.445   & 0/10  & 3.46  & 0/10 \\
        HIP                 & 0.187   & ---   & 1.00  & ---  \\
        Diagonal + HIP      & 0.0298  & 8/10  & 3.63  & 0/10 \\
        Whitening + HIP     & 0.00406 & 10/10 & 20.8  & 0/10 \\
        \bottomrule
    \end{tabular}
    \caption{Target-local W2 GPTQ preprocessing on ten selected difficult rows
    of Llama-3.1-8B-Instruct.
    Counts compare each target against HIP alone. Lower is better in both numeric columns.}
    \label{tab:w2-preprocessing-depth}
\end{table*}

\subsection{Quantization error landscapes}
\label{app:quanterror}
\subsubsection{Gaussian landscapes and convergence}
\label{app:gaussianlandscapes}

We illustrate the finite-width behavior of the normalized
RTN loss using W3 symmetric quantization with levels
$\{-3,\ldots,3\}$. For $z\sim\mathcal N(0,I_d)$, we consider
the identity Gram and random orthogonal projectors of
rank $k$, yielding
\[
    L_d(s)=\frac{\|z-Q_s(z)\|_2^2}{d},
    \qquad
    L_{d,k}(s)=\frac{\|U^\top(z-Q_s(z))\|_2^2}{k},
\]
respectively, where $U^\top U=I_k$ and $U$ is independent
of $z$. To understand if the effective rank condition (H2) of Theorem \ref{thm:uniform} is necessary we choose  
$k \in \{d, \lceil\sqrt d\rceil, \lceil\log d\rceil \}$, and $constant = 8$.

Figure~\ref{fig:gaussianlandscapes} compares
individual landscapes at increasing widths with the
Gaussian reference.  While $k = d, \lceil\sqrt d $ converge at decreasing rates towards the gaussian landscape, $k=\lceil\log d\rceil  $  and $k =8 $ keep their unregular form even at high $d. $.

This matches the theoretical prediction, as $k = d, \lceil\sqrt d $ satisfy the Theorem \ref{thm:uniform} asymptotic rank
condition, whereas the logarithmic and fixed-rank
constructions probe regimes outside that sufficient
condition. These curves illustrate individual
realizations; the uniform error plot in Figure \ref{fig:uniformerror} summarizes
variability across 20 repetitions and validates Theorem \ref{thm:uniform}.

\begin{figure}[h!]
    \centering
    \includegraphics[width=0.8\textwidth]{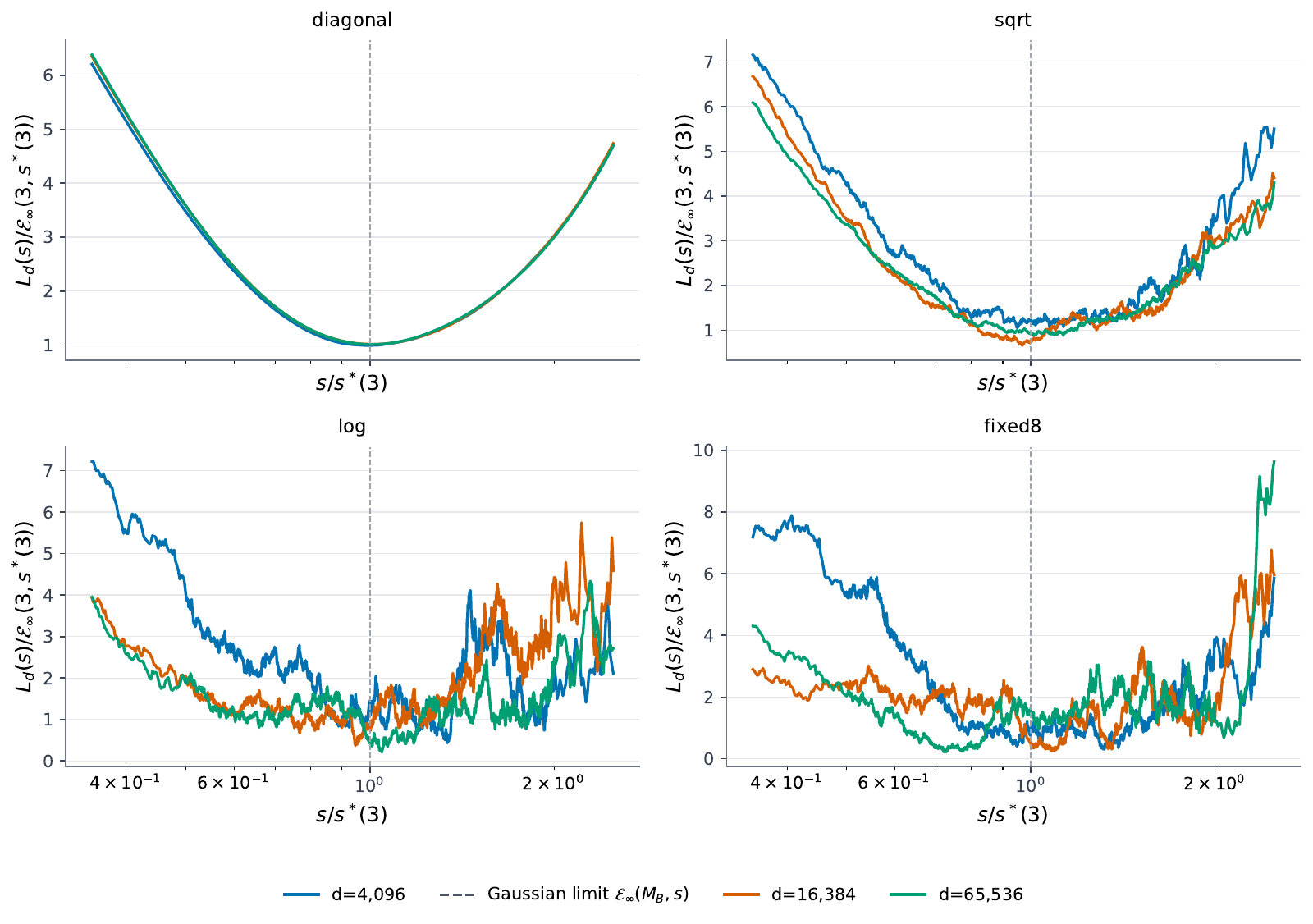}
    \caption{
Finite-width W3 RTN landscapes for four effective-rank
constructions. Each panel overlays
$d\in\{4096,16384,65536\}$ and the Gaussian reference
(dashed). Curves show one realization per configuration
(repetition 0). 
The horizontal coordinate is $s/s^\star(3)$, and all
losses are divided by
$\mathcal E_\infty(3,s^\star(3))$.
Only $0.35\leq s/s^\star(3)\leq2.5$ is displayed.
}
    \label{fig:gaussianlandscapes}
\end{figure}

\begin{figure}[h!]
    \centering
    \includegraphics[width=0.6\textwidth]{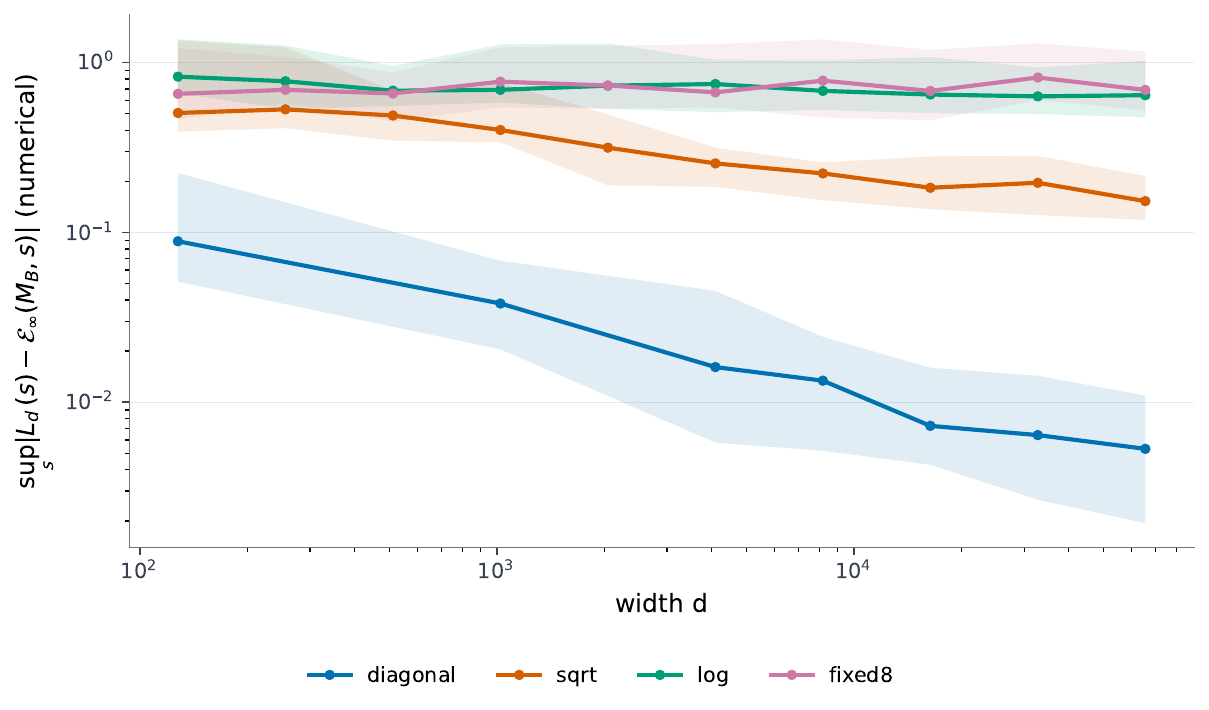}
    \caption{\textbf{Uniform-error estimates for W3 Gaussian quantization landscapes.} Median maximum error over the evaluated scale grid $\mathcal S$, $\max_{s\in\mathcal S}|L_d(s)-\mathcal E_\infty(M_3,s)|$,
as a function of width $d$. Shaded regions show the
10th--90th percentiles across 20 repetitions.
Curves correspond to effective ranks
$k=d$, $\lceil\sqrt d\rceil$, $\lceil\log d\rceil$,
and $8$. Errors decrease in the identity and
square-root-rank settings, while the logarithmic-
and fixed-rank settings show no comparable decrease
over the tested widths..}
    \label{fig:uniformerror}
\end{figure}

\subsection{Local Reconstruction Sensitivity with HIP}
\label{app:localreconstr}
We repeat the local reconstruction sensitivity analysis
of Section~\ref{sec:localreconstr} with HIP preprocessing.
Figure~\ref{fig:localreconstrHIP}, the HIP counterpart of
Figure~\ref{fig:local-gptq-sensitivity}, shows a similar
approximately exponential decrease in median sensitivity
with increasing bit-width.

To compare sensitivity magnitudes directly,
Figure~\ref{fig:localreconstrHIPvdnoHIP} overlays the results
with and without HIP at $\delta=5\%$. For each model and
bit-width, we also report the median paired ratio
$C_{\mathrm{sym}}^{\mathrm{HIP}}/
C_{\mathrm{sym}}^{\mathrm{noHIP}}$, matching calibration
seed, matrix, and row ID. Only pairs with positive, finite
sensitivity in both conditions are included. Each condition is evaluated around its own best-searched
scale.

\paragraph{An illustrative reconstruction landscape.}
Figure~\ref{fig:landscape} compares the Gaussian reference
with empirical RTN and GPTQ landscapes for one fixed row
of Llama-3.1-8B at W2, W3, and W8, with and without HIP.
The RTN landscapes resemble the Gaussian reference in
overall shape, although their minimizing scales can differ.
The basin insets illustrate how HIP brings the analytic
scale closer to the empirical minimizer in this example.

GPTQ compensation also changes the shape of the local
basin. However, this single-row comparison does not
establish a general flattening effect. Moreover, the
landscapes are displayed after division by the
bit-dependent Gaussian minimum, whereas our sensitivity
statistic excludes this additional normalization.
Cross-precision sensitivity comparisons therefore rely
on $C_{\mathrm{sym}}$ under the common loss normalization,
rather than on the visual sharpness of these curves.

\begin{figure}[h!]
    \centering
    \includegraphics[width=0.9\textwidth]
    {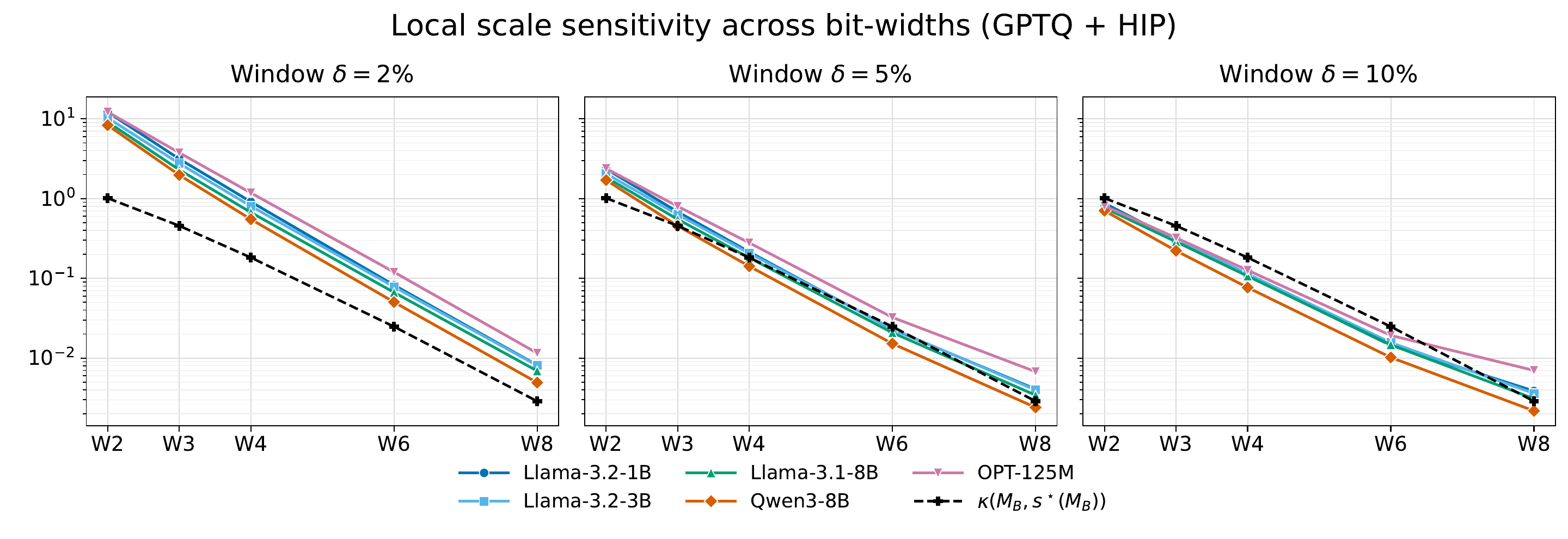}
    \caption{
    Local GPTQ scale sensitivity after HIP preprocessing.
    Colored curves show the median symmetric finite-window
    sensitivity $C_{\mathrm{sym}}$ around each row's
    best-searched scale at the tested precisions
    W2, W3, W4, W6, and W8.
    Panels use relative scale perturbations
    $\delta\in\{2\%,5\%,10\%\}$.
    The black dashed curve shows the Gaussian
    scale-invariant differential curvature $\kappa$,
    which is identical across panels.
    Medians pool the selected rows across all studied
    matrices and three calibration seeds for each model
    and bit-width.
    }
    \label{fig:localreconstrHIP}
\end{figure}

\begin{figure}[h!]
    \centering
    \includegraphics[width=0.9\textwidth]
    {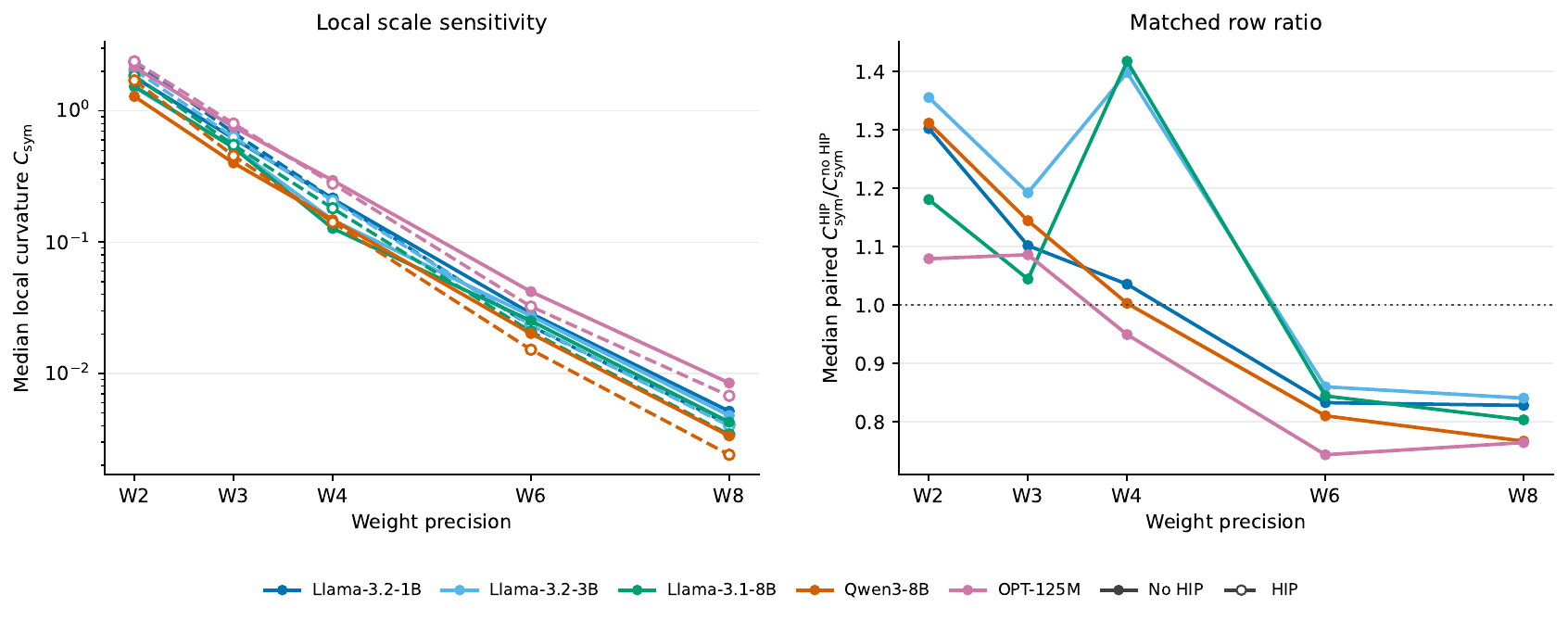}
    \caption{
    Effect of HIP on local GPTQ scale sensitivity at
    $\delta=5\%$.
    Left: median symmetric finite-window sensitivity
    $C_{\mathrm{sym}}$ by model and bit-width, with filled
    markers for no HIP and open markers for HIP.
    Right: median of the paired HIP-to-no-HIP sensitivity
    ratios, matching model, bit-width, calibration seed,
    matrix, and row ID. Only pairs with positive, finite
    sensitivity in both conditions are included.
    This statistic is the median of individual ratios,
    rather than the ratio of the pooled medians.
    }
    \label{fig:localreconstrHIPvdnoHIP}
\end{figure}

\begin{figure}[h!]
    \centering
    \includegraphics[width=\linewidth]
        {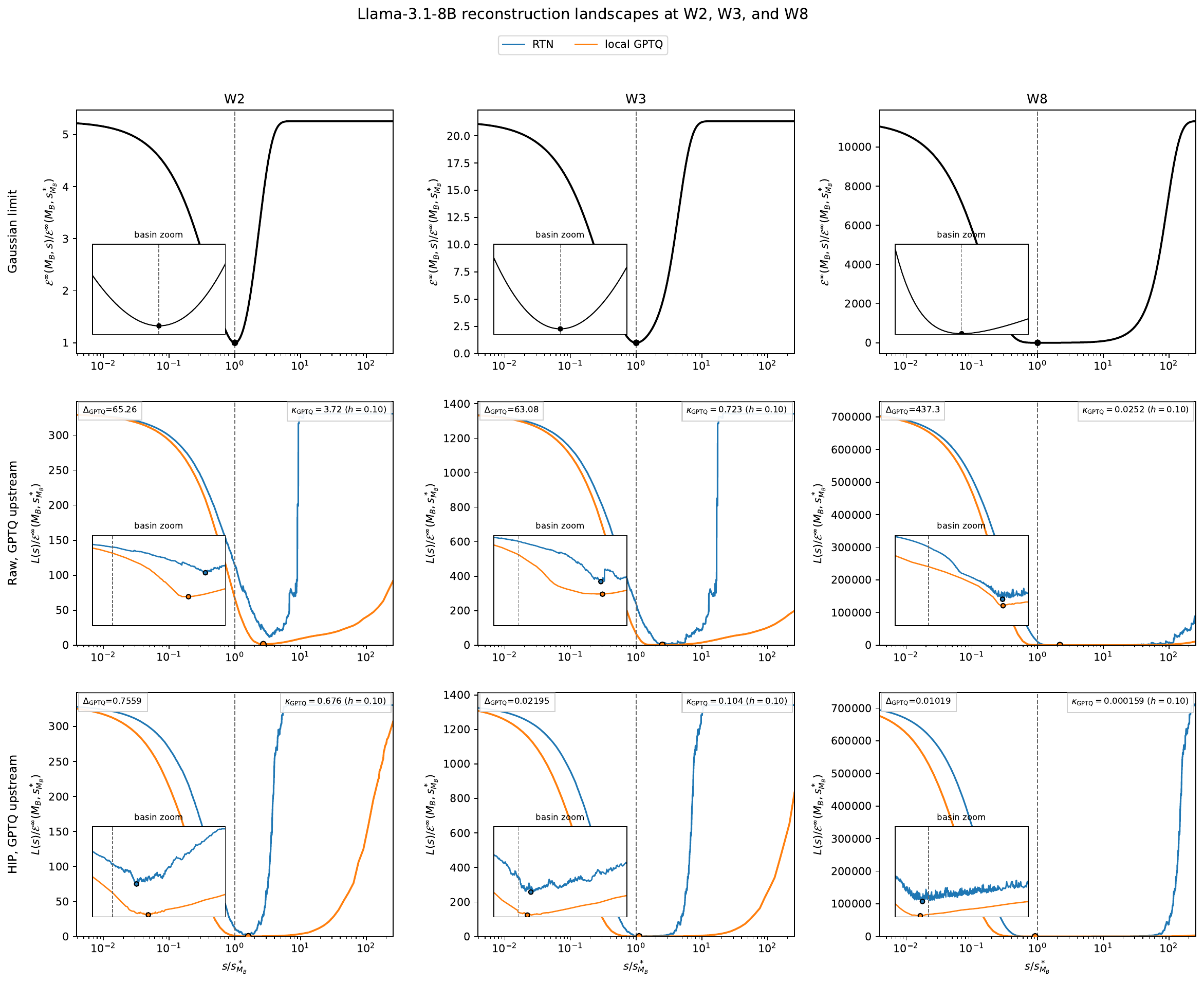}
    \caption{Scale-dependent reconstruction landscapes for
Llama-3.1-8B, block 0, \texttt{k\_proj}, output row 150.
Columns show W2, W3, and W8; rows show the Gaussian
reference and empirical RTN and GPTQ losses without
and with HIP. Scales and losses are normalized by the
Gaussian optimal scale and minimum loss, respectively.
Dashed vertical lines mark the analytic scale; markers
identify best-searched scales. Insets show the local
basins. Empirical losses use inputs from the upstream
GPTQ-quantized network at the corresponding precision.
This figure illustrates one fixed row rather than an
aggregate over the representative cohort.
}
    \label{fig:landscape}
\end{figure}

\end{document}